\newcommand{\todoc}[2][]{\todo[color=Apricot!20,size=\tiny,#1]{C: #2}}
\newcommand{\todoa}[2][]{\todo[color=Purple!20,size=\tiny,#1]{A: #2}}
\newcommand{\todor}[2][]{\todo[color=Blue!10,size=\tiny,#1]{R: #2}}
\newcommand{\cA}{\mathcal{A}}
\newcommand{\cB}{\mathcal{B}}
\newcommand{\cE}{\mathcal{E}}
\newcommand{\cG}{\mathcal{G}}
\newcommand{\cZ}{\mathcal{Z}}
\newcommand{\cW}{\mathcal{W}}
\newcommand{\cM}{\mathcal{M}}
\newcommand{\cH}{\mathcal{H}}
\newcommand{\cF}{\mathcal{F}}
\newcommand{\cL}{\mathcal{L}}
\newcommand{\hP}{\hat{P}}
\newcommand{\Exp}[1]{\mathbb{E}\left[ #1 \right]} 
\newcommand{\Expc}[2]{\mathbb{E}\left[ \left. #1 \right| #2 \right]} 
\newcommand{\ind}{\mathbb{I}}
\newcommand{\one}[1]{\ind\left(#1\right)}
\newcommand{\seto}[1]{\left\{#1\right\}}
\newcommand{\real}{\mathbb{R}}
\newcommand{\R}{\mathbb{R}}
\newcommand{\bS}{\mathbb{S}}
\newcommand{\inpro}[2]{\langle #1, #2\rangle}
\newcommand{\ip}[1]{\langle#1\rangle}
\newcommand{\inner}[1]{\left\langle#1\right\rangle}
\newcommand{\set}[2]{\left\{#1 \,\vert\, #2 \right\}}
\newcommand{\lt}{\ell_t}
\newcommand{\ttheta}{\tilde{\theta}}
\newcommand{\htheta}{\hat{\theta}}
\newcommand{\norm}[1]{\left\| #1 \right\|}
\newcommand{\bd}{\mathrm{bd}}
\newcommand{\inangle}[2]{(#1,#2)}
\newcommand\numberthis{\addtocounter{equation}{1}\tag{\theequation}}
\newcommand{\uD}{\overline{D}}
\newcommand{\lD}{\underline{D}}
\newcommand{\ra}{\rightarrow}
\newcommand{\Prob}[1]{\mathbb{P}\left[#1\right]}
\newcommand{\Probc}[2]{\mathbb{P}\left[\left. #1 \, \right| #2\right]}
\DeclareMathOperator*{\argmin}{argmin}
\DeclareMathOperator*{\argmax}{argmax}
\DeclareMathOperator{\dom}{dom}
\DeclareMathOperator{\epi}{epi}
\newtheorem{thm}{Theorem}[section]
\newtheorem{lemma}[thm]{Lemma}
\newtheorem{prop}[thm]{Proposition}
\newtheorem{cor}[thm]{Corollary}
\newtheorem{example}[thm]{Example}
\title{
Following the Leader and
Fast Rates in Linear Prediction:
Curved Constraint Sets and Other Regularities\thanks{R. Huang and Cs. Szepesv\'ari are with the Department of Computing Science,
 University of Alberta, AB, Canada, email: \texttt{ruitong@ualberta.ca},  \texttt{szepesva@ualberta.ca}. T. Lattimore was with the 
School of Informatics and Computing, Indiana University, IN, USA, email: \texttt{tor.lattimore@gmail.com}. A. Gy\"orgy is
with the Department of Electrical and Electronic Engineering, Imperial College London, UK, email: \texttt{a.gyorgy@imperial.ac.uk}.
}
}
\author{
 Ruitong Huang \\
 Department of Computing Science\\
 University of Alberta, AB, Canada \\
 \texttt{ruitong@ualberta.ca}  \vspace{-0.04cm}
 \And 
 Tor Lattimore \\
 School of Informatics and Computing \\
 Indiana University, IN, USA \\
 \texttt{tor.lattimore@gmail.com} \vspace{-0.04cm}
 \And
 Andr\'as Gy\"orgy \\
 Dept. of Electrical \& Electronic Engineering\\ 
 Imperial College London, UK\\ 
 \texttt{a.gyorgy@imperial.ac.uk} 
 \And 
 Csaba Szepesv\'ari\\
 Department of Computing Science\\
 University of Alberta, AB, Canada \\
 \texttt{szepesva@ualberta.ca}
}
\author{Ruitong Huang \and Tor Lattimore \and Andr\'as Gy\"orgy \and Csaba Szepesv\'ari}
\begin{document}

\maketitle

\begin{abstract}
The follow the leader (FTL) algorithm, perhaps the simplest of all online learning algorithms,
is known to perform well when the loss functions it is used on are convex and positively curved.
In this paper we ask whether there are other ``lucky'' settings when FTL achieves sublinear, ``small'' regret.
In particular, we study the fundamental problem of linear prediction over a non-empty convex, compact domain.
Amongst other results, we prove that the curvature of  the boundary of the domain can act as if the losses
were curved: In this case, we prove that as long as 
the mean of the loss vectors have positive lengths bounded away from zero, 
FTL enjoys a logarithmic growth rate of regret, while, e.g., for polytope domains and stochastic data it enjoys
finite expected regret. 
Building on a previously known meta-algorithm, we also get
 an algorithm that simultaneously enjoys the worst-case guarantees and the bound available for FTL.
\end{abstract}

\section{Introduction}
Learning theory traditionally has been studied in a statistical framework, discussed at length, for example, by \citet{SSS14:book}.
The issue with this approach is that the analysis of the performance of learning methods seems to critically depend
on whether the data generating mechanism satisfies some probabilistic assumptions. 
Realizing that these assumptions are not necessarily critical, much work has been devoted recently to 
studying learning algorithms in the so-called online learning framework \citep{CBLu06:book}. 
The online learning framework makes minimal assumptions about the data generating mechanism,
while allowing one to replicate results of the statistical framework through online-to-batch conversions
\citep{CBCoG04:OnlineToBatch}.
By following a minimax approach, however, results proven in the online learning setting, at least initially, led to rather
conservative results and algorithm designs, failing to capture how more regular, ``easier'' data, may give rise to
faster learning speed. This is problematic as it may suggest overly conservative learning strategies, missing
opportunities to extract more information when the data is nicer. Also, it is hard to argue that data resulting from
passive data collection, such as weather data, would ever be adversarially generated (though it is equally hard
to defend that such data satisfies precise stochastic assumptions).
Realizing this issue, during recent years much work has been devoted to understanding what regularities and how can lead to 
faster learning speed.
For example, much work has been devoted to showing that faster learning speed (smaller ``regret'') can be achieved
in the online convex optimization setting when the loss functions are ``curved'', such 
as when the loss functions are strongly convex or exp-concave, 
or when the losses show small variations, or the best prediction in hindsight has a small total loss, and that these properties can be exploited in an adaptive manner  (e.g.,
\citealt{MF92}, \citealt{FrSc97},
\citealt{gaivoronski2000stochastic},
\citealt{CBLu06:book},
\citealt{hazan2007logarithmic},
\citealt{bartlett2007adaptive},
\citealt{kakade2009mind},
\citealt{orabona2012beyond},
\citealt{RakhlinS13},
\citealt{vanerven2015fast},
\citealt{foster2015adaptive}).

In this paper we contribute to this growing literature by studying online linear prediction and the follow the leader (FTL) algorithm.
Online linear prediction is arguably the simplest yet fundamental of all the learning settings, and lies at the heart of online
convex optimization, while it also serves as an abstraction of core learning problems such as prediction with expert advice.
FTL, the online analogue of empirical risk minimization of statistical learning, is the simplest learning strategy, one can think of.
Although the linear setting of course removes the possibility of exploiting the curvature of losses, as we will see, there are
multiple ways online learning problems can present data that allows for small regret, even for FTL.
As is it well known, in the worst case,
FTL suffers a linear regret (e.g., Example 2.2 of \citet{SS12:Book}). 
However, for ``curved'' losses (e.g., exp-concave losses), FTL was shown to achieve small (logarithmic) regret
(see, e.g., \citet{MF92,CBLu06:book,gaivoronski2000stochastic,hazan2007logarithmic}).

In this paper we take a thorough look at FTL in the case when the losses are linear, 
but the problem perhaps exhibits other regularities.
The motivation comes from the simple observation that, for prediction over the simplex, when
the loss vectors are selected independently of each other from a distribution with a bounded support with a
nonzero mean, FTL quickly locks onto selecting the loss-minimizing vertex of the simplex, achieving finite expected regret.
In this case, FTL is arguably an excellent algorithm.
In fact, FTL is shown to be the minimax optimizer for the binary losses in the  stochastic expert setting in the paper of \citet{kotlowskiminimax}.
Thus, we ask the question of whether there are other regularities that allow FTL 
to achieve nontrivial performance guarantees.
Our main result shows that when the decision set (or constraint set) has a sufficiently ``curved'' boundary (equivalently, if it is strongly convex) and the linear loss is bounded away from $0$, FTL 
is able to achieve logarithmic regret even in the adversarial setting, thus opening up a new
way to prove fast rates not based on the curvature of losses, but on that of the boundary of the constraint set and non-singularity of the linear loss.
In a matching lower bound we show that this regret bound is essentially unimprovable.
We also show an alternate bound for polytope constraint sets, which allows us to prove that 
(under certain technical conditions) for stochastic problems the expected regret of FTL will be finite.
To finish, we use ($\cA$, $\cB$)-prod of \citet{sani2014exploiting} to design an algorithm 
that adaptively interpolates between the worst case $O(\sqrt{n\log n})$ regret and the smaller regret bounds,
which we prove here for ``easy data.'' We also show that if the constraint set is the unit ball, both the follow the regularized leader (FTRL) algorithm and a combination of FTL and shrinkage, which we call follow the shrunken leader (FTSL), achieve logarithmic regret for easy data. Simulation results on artificial data complement the theoretical findings. 

While we believe that we are the first to point out that the curvature of the constraint set $\cW$ can help in speeding up learning,
this effect is known in convex optimization since at least the work of  \citet{LePo66},
who showed that exponential rates are attainable for strongly convex constraint sets if the norm of the gradients of the objective function admit a uniform lower bound. \todoc{Longer version: This is their Theorem 6.1, part (5).}
More recently, \citet{garber2014faster} 
proved an $O(1/n^2)$ optimization error bound (with problem-dependent constants) for the Frank-Wolfe algorithm for strongly convex and smooth objectives and strongly convex constraint sets.
The effect of the shape of the constraint set was also discussed by \citet{abbasi2010forced} who demonstrated $O(\sqrt{n})$ regret in the linear bandit setting.
While these results at a high level are similar to ours, our proof technique is rather different than that used there.
  
\todoc[inline]{
Interpolating between stochastic and adversarial settings: \citet{bubeck2012best}.
I think Rakhlin and Karthrik also write about this. What did they write? Cite them.}
\todor[inline]{\citep{abernethy2008optimal} section 4.2 talks about lower bound for linear game with constraint sets being balls. \citep{abernethy2009stochastic} relates the regret to the flatness of $\Phi$ and the Bregman divergence. \citep{abernethy2014online} Bregman divergence again.}
  
\todoc[inline]{
\citet{MF92}  considers the following assumption (dropping measurability and other technical requirements):
Let $\ell: \cF \times \cW \to [0,\infty)$ be a fixed loss function.
For a probability distribution $P$ over $\cF$, let $w^*(P) = \argmin_{w\in \cW} \int \ell(f,w) P(df)$.
Further, for $\alpha\in [0,1]$, $f\in \cF$, let $P_{\alpha,x} = P+ \alpha( \delta_f - P)$, where $\delta_f$ is the Dirac measure that puts all the weight to $f$. (Note that $P_{\alpha,x}-P = \alpha (\delta_f-P)$.)
Then, the assumption is that for some $L>0$ and for all $f\in \cF$,
 $|\ell(f, b^*(P) ) - \ell( f, b^*(P_{\alpha,f}))| \le \alpha L$ (a form of a Lipschitz condition).
Under this assumption they show that FTL achieves logarithmic regret.
How does this assumption relate to our smoothness assumption?
}
\todor[inline]{More about stability. \citep{saha2012interplay}. Such stability is usually achieved by the strongly convexity of the loss function.}

\section{Preliminaries, online learning and the follow the leader algorithm}
\label{sec:notation}
We consider the standard framework of online convex optimization, where a learner and an environment interact in a sequential manner in $n$ rounds: In round every round $t=1,\ldots,n$, first the learner predicts $w_t\in \cW$. Then the environment picks a loss function $\ell_t\in \cL$, and the learner suffers loss $\ell_t(w_t)$ and observes $\ell_t$. 
Here, $\cW$ is a non-empty, compact \todoa{Changed to compact here, as we need it anyways for the linear losses} convex subset of $\real^d$ and
 $\cL$ is a set of convex functions, mapping $\cW$ to the reals.
 The elements of $\cL$ are called loss functions.
The performance of the learner is measured in terms of its regret,
\[
R_n = \sum_{t=1}^n \lt(w_t) - \min_{w\in \cW}\sum_{t=1}^n \lt(w)\,.
\]
 
The simplest possible case, which will be the focus of this paper,
is when the losses are linear, i.e., when $\lt(w) = \ip{f_t,w}$ for some $f_t\in \cF\subset \real^d$.
\newcommand{\tlt}{\tilde{\ell}_t}
In fact, the linear case is not only simple, but is also fundamental since the case of nonlinear loss functions can be reduced to it: Indeed, even if the losses are nonlinear, 
defining $f_t \in \partial \lt(w_t)$ to be a subgradient%
\footnote{
We let $\partial g(x)$ denote the subdifferential of a convex function $g:\dom(g) \to \R$ at $x$,
i.e., $\partial g(x) = \set{\theta\in \R^d}{g(x') \ge g(x) + \ip{\theta, x'-x} \,\, \forall x'\in \dom(g) }$,
where $\dom(g)\subset \R^d$ is the domain of $g$.
} 
of $\lt$ at $w_t$ and  letting $\tlt(u) = \ip{f_t,u}$, by the definition of subgradients,
$\lt(w_t)-\lt(u) \le \lt(w_t)-(\lt(w_t)+\ip{f_t,u-w_t}) = \tlt(w_t)-\tlt(u)$, hence for any $u\in \cW$,
\[
\sum_t \lt(w_t) - \sum_t \lt(u) \le \sum_t \tilde{\lt}(w_t) - \sum_t \tilde{\lt}(u)\,.
\]
In particular, if an algorithm keeps the regret small no matter how the linear losses are selected
(even when allowing the environment to pick losses based on the choices of the learner), 
the algorithm can also be used to keep the regret small in the nonlinear case. 
Hence, in what follows we will study the linear case $\ell_t(w)=\ip{f_t,w}$ and, in particular, we will study the regret
of the so-called ``Follow The Leader'' (FTL) learner, which, in round $t\ge 2$ 
picks
\begin{align*}
w_t = \argmin_{w\in \cW} \sum_{i=1}^{t-1} \ell_i(w)\,.
\end{align*}
For the first round, $w_1\in \cW$ is picked in an arbitrary manner.
When $\cW$ is compact, the optimal $w$ of $\min_{w\in\cW} \sum_{i=1}^{t-1}\inpro{w}{f_t}$ is attainable,
which we will assume henceforth.
If multiple minimizers exist, we simply fix one of them as $w_t$.
We will also assume that $\cF$ is non-empty, compact and convex. \todoc{Why compact? Convex? How is this used?}\todoa{It is used with $\Phi$ and its Bregman divergence. Could be relaxed but this is the standard way.}
\if0
 \begin{wrapfigure}{R}{5cm}
	\vspace{-.5cm}
	\centering
	\begin{algorithmic}
		\STATE In round $t\ge 2$, predict:
		\begin{align*}
		w_t = \argmin_{w\in \cW} \sum_{i=1}^{t-1} \ell_i(w)\,.
		\end{align*}
		while in round one predict arbitrarily.
	\end{algorithmic}
	\caption{Follow the Leader (FTL)} 
  	\label{fig:ftl}
	\vspace{-.5cm}
\end{wrapfigure} 
\fi

\subsection{Support functions}
Let $\Theta_t = -\frac1t \sum_{i=1}^t f_i$ be the negative average of the first $t$ vectors
in $(f_t)_{t=1}^n$, $f_t\in \cF$.
For convenience, we define $\Theta_0 := 0$.
Thus, for $t\ge 2$,
\begin{align*}
w_t =  \argmin_{w\in\cW} \sum_{i=1}^{t-1} \ip{ w, f_i } = \argmin_{w\in\cW} \ip{ w, -\Theta_{t-1} }
= \argmax_{w\in \cW} \ip{w,\Theta_{t-1}}\,.
\end{align*}
Denote by $\Phi(\Theta) = \max_{w\in\cW} \langle w, \Theta\rangle$ the so-called \emph{support function} of $\cW$. 
The support function, being the maximum of linear and hence convex functions, is itself convex.
Further $\Phi$ is positive homogenous: for $a\ge 0$ and $\theta\in \R^d$, $\Phi(a \theta) = a\Phi(\theta)$.
It follows then that the epigraph $\epi(\Phi) = \set{ (\theta,z)}{ z\ge \Phi(\theta), z\in \R, \theta\in \R^d }$ of $\Phi$ is a cone,
since for any $(\theta,z)\in \epi(\Phi)$ and $a\ge 0$, 
$az \ge a \Phi(\theta) = \Phi(a\theta)$, $(a\theta,az)\in \epi(\Phi)$ also holds.

The differentiability of the support function is closely tied to whether in the FTL algorithm the choice of $w_t$ is
uniquely determined:
\begin{prop} 
\label{prop:derivativePhi}
Let $\cW\ne \emptyset$ be convex and closed.
Fix $\Theta$ and let $\cZ:= \set{w\in \cW}{\inpro{w}{\Theta} =  \Phi(\Theta) }$.
Then, $\partial \Phi(\Theta) = \cZ$ and, in particular,
$\Phi(\Theta)$ is differentiable at $\Theta$ if and only if 
$\max_{w\in\cW} \inpro{w}{\Theta}$ has a unique optimizer.
In this case, $\nabla \Phi(\Theta) = \argmax_{w\in \cW} \ip{w,\Theta}$.
\end{prop}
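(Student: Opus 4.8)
The plan is to prove the set identity $\partial\Phi(\Theta)=\cZ$ by two inclusions, and then to read off the differentiability statement from the classical fact that a finite convex function is differentiable at an interior point of its domain precisely when its subdifferential there is a singleton. I work under the paper's standing assumption that $\cW$ is compact, so that $\Phi$ is finite on all of $\R^d$ (hence $\Theta$ is an interior point of $\dom\Phi=\R^d$) and the maximum defining $\Phi(\Theta)$ is attained; in particular $\cZ\neq\emptyset$. The only structural properties of $\Phi$ that the argument needs, both immediate from writing $\Phi$ as a pointwise maximum of linear functions, are positive homogeneity (so $\Phi(0)=0$) and subadditivity, $\Phi(\Theta_1+\Theta_2)\le \Phi(\Theta_1)+\Phi(\Theta_2)$.

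For the inclusion $\cZ\subseteq\partial\Phi(\Theta)$: if $w\in\cZ$ then for every $\Theta'\in\R^d$ we have $\Phi(\Theta')\ge \langle w,\Theta'\rangle=\langle w,\Theta\rangle+\langle w,\Theta'-\Theta\rangle=\Phi(\Theta)+\langle w,\Theta'-\Theta\rangle$, which is exactly the subgradient inequality, so $w\in\partial\Phi(\Theta)$.

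For the reverse inclusion $\partial\Phi(\Theta)\subseteq\cZ$, take $\theta^*\in\partial\Phi(\Theta)$. First I show $\theta^*\in\cW$: if not, then since $\cW$ is nonempty, closed and convex, the separating hyperplane theorem yields $u\in\R^d$ with $\langle\theta^*,u\rangle > \max_{v\in\cW}\langle v,u\rangle=\Phi(u)$. Substituting $\Theta'=\Theta+su$ with $s>0$ into the subgradient inequality and using subadditivity of $\Phi$ gives $\Phi(\Theta)+s\Phi(u)\ge \Phi(\Theta+su)\ge \Phi(\Theta)+s\langle\theta^*,u\rangle$, i.e. $\Phi(u)\ge\langle\theta^*,u\rangle$, a contradiction. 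Hence $\theta^*\in\cW$, which gives $\langle\theta^*,\Theta\rangle\le\Phi(\Theta)$. For the matching lower bound, put $\Theta'=0$ into the subgradient inequality: $0=\Phi(0)\ge\Phi(\Theta)-\langle\theta^*,\Theta\rangle$, so $\langle\theta^*,\Theta\rangle\ge\Phi(\Theta)$. Thus $\langle\theta^*,\Theta\rangle=\Phi(\Theta)$ and $\theta^*\in\cZ$, completing the identity $\partial\Phi(\Theta)=\cZ$.

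Finally, since $\Phi$ is a finite convex function and $\Theta$ is interior to $\dom\Phi$, $\Phi$ is differentiable at $\Theta$ if and only if $\partial\Phi(\Theta)$ is a singleton (a standard result, e.g. Rockafellar); by the identity just proved this happens exactly when $\argmax_{w\in\cW}\langle w,\Theta\rangle=\cZ$ has a unique element, and in that case $\nabla\Phi(\Theta)$ equals the unique element of $\partial\Phi(\Theta)=\cZ$, i.e. the maximizer. I expect the only genuinely non-routine step to be showing $\theta^*\in\cW$ in the second inclusion, which is where closedness and convexity of $\cW$ enter through a separation argument; all remaining steps are direct manipulations of the defining inequalities.
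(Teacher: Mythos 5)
Your proof of $\partial\Phi(\Theta)=\cZ$ is correct, and the forward inclusion $\cZ\subseteq\partial\Phi(\Theta)$ is the same direct computation the paper uses. The reverse inclusion, however, is a genuinely different argument. The paper, after also observing that $\Theta'=0$ in the subgradient inequality gives $\ip{w,\Theta}\ge\Phi(\Theta)$, assumes $w\notin\cZ$, strictly separates the singleton $\{w\}$ from the closed convex set $\cZ$ to obtain a direction $\rho$, and attempts to contradict the subgradient inequality at $\Theta'=\Theta+\rho$. You instead first show $\theta^*\in\cW$ by separating $\theta^*$ from $\cW$ itself (getting $u$ with $\ip{\theta^*,u}>\Phi(u)$) and combining the subgradient inequality along $\Theta'=\Theta+su$ with the subadditivity $\Phi(\Theta+su)\le\Phi(\Theta)+s\Phi(u)$; once $\theta^*\in\cW$ is in hand, $\ip{\theta^*,\Theta}\le\Phi(\Theta)$ is automatic, and combined with the $\ge$ from $\Theta'=0$ you land directly in $\cZ$. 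Your route is arguably cleaner: it cleanly isolates membership in $\cW$ as the only thing requiring separation, and it avoids the step in the paper's proof where the strict inequality $\Phi(\Theta+\rho)<\Phi(\Theta)+\ip{w,\rho}$ is asserted but not obviously justified when the maximizer of $\ip{\cdot,\Theta+\rho}$ over $\cW$ falls outside $\cZ$ (the paper would want a small multiple $\epsilon\rho$ and a limiting argument there).

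One small caveat: you explicitly restrict to compact $\cW$, but the paper's stated purpose for giving a direct proof (rather than citing Danskin) is precisely to cover the unbounded closed case. As it happens, your argument survives with only cosmetic changes when $\cW$ is merely closed: if $\Phi(\Theta)=\infty$ then both $\partial\Phi(\Theta)$ and $\cZ$ are empty; if $\Phi(\Theta)<\infty$, separation still yields $u$ with $\Phi(u)<\ip{\theta^*,u}$ (in particular $\Phi(u)<\infty$), and subadditivity gives $\Phi(\Theta+su)<\infty$, so your chain of inequalities goes through. For the final differentiability equivalence you should then invoke the singleton-subdifferential criterion only when $\Theta\in\operatorname{int}\dom\Phi$, which is automatic when $\cW$ is bounded and is the regime the paper actually uses.
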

The proposition follows from Danskin's theorem when $\cW$ is compact
(e.g., Proposition B.25 of \citealt{bertsekas99nonlinear}), 
but a simple
direct argument can also be used to show that it also remains true even when $\cW$ is unbounded. \todor{Remove the footnote.}
\footnote{
The proofs not given in the main text can be found in the appendix.
} 
By \cref{prop:derivativePhi},
when $\Phi$ is differentiable at $\Theta_{t-1}$,
$w_t = \nabla \Phi(\Theta_{t-1})$.

\section{Non-stochastic analysis of FTL}
\label{sec:FTL}
We start by rewriting the regret of FTL in an equivalent form, which shows that we can expect FTL to enjoy a small
regret when successive weight vectors move little. 
A noteworthy feature of the next proposition is that rather than bounding the regret from above, it gives an equivalent
expression for it. 
\begin{prop}
\label{prop:regretabel}
The regret $R_n$ of FTL satisfies
\begin{align*}
R_n & =  \sum_{t=1}^n t\,\ip{ w_{t+1}-w_t,\Theta_t}  \,.
\end{align*}
\end{prop}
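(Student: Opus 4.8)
\textit{Proof plan.} The plan is to rewrite $R_n$ entirely in terms of the support function $\Phi$ and the running averages $\Theta_t$, and then to collapse the resulting expression by an Abel (summation-by-parts) manipulation. The starting point is the elementary identity
\[
f_t = (t-1)\Theta_{t-1} - t\,\Theta_t \qquad (t\ge 1,\ \Theta_0:=0),
\]
which is immediate from $t\Theta_t = -\sum_{i=1}^t f_i$. Substituting it into the learner's cumulative loss gives
\[
\sum_{t=1}^n \ip{f_t,w_t} \;=\; \sum_{t=1}^n\Bigl( (t-1)\ip{\Theta_{t-1},w_t} - t\,\ip{\Theta_t,w_t}\Bigr)\,.
\]

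Next I would invoke the characterization of the FTL iterate recalled in the preliminaries, namely $w_t = \argmax_{w\in\cW}\ip{w,\Theta_{t-1}}$ for $t\ge 2$, so that $\ip{\Theta_{t-1},w_t} = \Phi(\Theta_{t-1})$; for $t=1$ the factor $t-1$ annihilates the term, so the arbitrary choice of $w_1$ is irrelevant. Shifting the index, $\sum_{t=1}^n (t-1)\Phi(\Theta_{t-1}) = \sum_{t=1}^{n-1} t\,\Phi(\Theta_t)$, and using $\Phi(\Theta_t) = \ip{w_{t+1},\Theta_t}$ — the same FTL rule applied to round $t+1$, where $w_{n+1}$ is understood to be the FTL prediction of the hypothetical $(n{+}1)$-st round — one arrives at
\[
\sum_{t=1}^n \ip{f_t,w_t} \;=\; \sum_{t=1}^{n-1} t\,\ip{w_{t+1},\Theta_t} \;-\; \sum_{t=1}^n t\,\ip{\Theta_t,w_t}\,.
\]

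It remains to treat the comparator. Using positive homogeneity of $\Phi$ (noted right after its definition),
\[
\min_{w\in\cW}\sum_{t=1}^n \ip{f_t,w} \;=\; -\,\Phi\Bigl(-\textstyle\sum_{t=1}^n f_t\Bigr) \;=\; -\,\Phi(n\Theta_n) \;=\; -\,n\Phi(\Theta_n) \;=\; -\,n\ip{w_{n+1},\Theta_n}\,.
\]
Adding $n\ip{w_{n+1},\Theta_n}$ to the previous display extends the first sum from $n-1$ up to $n$, and subtracting $\sum_{t=1}^n t\,\ip{\Theta_t,w_t}$ then gives exactly $R_n = \sum_{t=1}^n t\,\ip{w_{t+1}-w_t,\Theta_t}$.

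I do not expect any genuine obstacle here: the argument is essentially bookkeeping built on the identity $f_t=(t-1)\Theta_{t-1}-t\Theta_t$ and the FTL optimality condition. The only spots needing a little care are the two boundary indices — at $t=1$ the convention $\Theta_0=0$ is what lets us discard the arbitrary first prediction, and at $t=n$ the comparator term must supply precisely the missing summand $n\ip{w_{n+1},\Theta_n}$ that closes the telescoping sum, which is exactly why the statement is phrased with the extra iterate $w_{n+1}$.
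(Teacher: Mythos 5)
Your proof is correct and is essentially the paper's proof unrolled by hand: both rest on the identity $f_t=(t-1)\Theta_{t-1}-t\Theta_t$, both collapse the resulting sum by an index shift (the paper packages this as the summation-by-parts formula), and both use FTL optimality of $w_{n+1}$ to turn the comparator term into the missing $t=n$ summand. One small remark: your detour through $\Phi$ in the middle step is unnecessary — the equality $\sum_{t=1}^n (t-1)\ip{\Theta_{t-1},w_t}=\sum_{t=1}^{n-1} t\ip{\Theta_t,w_{t+1}}$ is a pure re-indexing and does not require the optimality fact $\ip{\Theta_{t-1},w_t}=\Phi(\Theta_{t-1})$; the only place optimality is genuinely needed is the final comparator identity $-\min_{w}\sum_t\ip{f_t,w}=n\ip{w_{n+1},\Theta_n}$, which is also the single point where the paper's proof uses it.
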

The result is a direct corollary of Lemma 9 of \citet{McMahan10:Equiv}, which holds 
for any sequence of losses, even in the lack of convexity.
It is also a tightening of the well-known inequality $R_n \le \sum_{t=1}^n \ell_t(w_t)-\ell_t(w_{t+1})$,
which again holds for arbitrary loss sequences (e.g., Lemma 2.1 of \citet{SS12:Book}).
To keep the paper self-contained, we give an elegant, short direct proof, based on the summation by parts formula:
\begin{proof}
The summation by parts formula states that for any $u_1,v_1,\dots,u_{n+1},v_{n+1}$ reals,
$
\sum_{t=1}^n u_t\,(v_{t+1}-v_t) = (u_{t+1}v_{t+1}-u_1 v_1) - \sum_{t=1}^n (u_{t+1}-u_t)\,v_{t+1} 
$.
Applying this to the definition of regret
with $u_t:=w_{t,\cdot}$ and $v_{t+1} := t\Theta_{t}$, we get
\begin{align*}
R_n 
& = -\sum_{t=1}^n \ip{w_t,t\Theta_t - (t-1)\Theta_{t-1}} + \ip{w_{n+1},n\Theta_n}   \\
& = - \left\{ 
		\bcancel{\ip{w_{n+1},n\Theta_n}} - 0 - \sum_{t=1}^n \ip{w_{t+1}-w_t,t\Theta_t}  \right\} +
		\bcancel{\ip{w_{n+1},n\Theta_n}}.
\end{align*}
\end{proof}
Our next proposition gives another formula that is equal to the regret.
As opposed to the previous result, this formula is appealing as
it is independent of $w_t$; but it directly connects the sequence $(\Theta_t)_t$ to the 
geometric properties of $\cW$ through the support function $\Phi$.
For this proposition we will momentarily assume that $\Phi$ is differentiable at $(\Theta_t)_{t\ge 1}$;
a more general statement will follow later.

\begin{prop} 
\label{prop:R_nBregmanDivergence}
If $\Phi$ is differentiable at $\Theta_1, \ldots, \Theta_n$,  
\begin{align}
\label{eq:regreteq}
R_n = \sum_{t=1}^{n} t\,D_{\Phi}(\Theta_t,\Theta_{t-1})\,,
\end{align}
where $D_{\Phi}(\theta', \theta) = \Phi(\theta') - \Phi(\theta) - \ip{ \nabla\Phi(\theta), \theta' - \theta}$ is the Bregman divergence of $\Phi$
and
we use the convention that $\nabla\Phi(0) = w_1$.
\end{prop}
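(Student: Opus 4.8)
The plan is to start from Proposition~\ref{prop:regretabel}, which gives $R_n = \sum_{t=1}^n t\,\ip{w_{t+1}-w_t,\Theta_t}$, and rewrite each summand as a Bregman divergence of $\Phi$. The key identity to establish is that for each $t\ge 1$,
\[
\ip{w_{t+1}-w_t,\Theta_t} = D_{\Phi}(\Theta_t,\Theta_{t-1})\,,
\]
after which summing over $t$ with the weights $t$ immediately yields \eqref{eq:regreteq}.

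To prove this identity I would invoke Proposition~\ref{prop:derivativePhi}: since $\Phi$ is assumed differentiable at $\Theta_1,\dots,\Theta_n$, we have $w_{t+1} = \nabla\Phi(\Theta_t)$ for $t\ge 1$, and $w_t = \nabla\Phi(\Theta_{t-1})$ for $t\ge 2$, while for $t=1$ the convention $\nabla\Phi(0)=w_1$ (together with $\Theta_0=0$) makes the same formula $w_t = \nabla\Phi(\Theta_{t-1})$ hold. Now expand the Bregman divergence using its definition:
\[
D_{\Phi}(\Theta_t,\Theta_{t-1}) = \Phi(\Theta_t) - \Phi(\Theta_{t-1}) - \ip{\nabla\Phi(\Theta_{t-1}),\Theta_t-\Theta_{t-1}}\,.
\]
Here I would use the positive homogeneity of $\Phi$ noted in the text: since $\Phi(\Theta) = \ip{\nabla\Phi(\Theta),\Theta}$ whenever $\Phi$ is differentiable at $\Theta$ (because $w=\nabla\Phi(\Theta)$ attains the max, so $\Phi(\Theta)=\ip{w,\Theta}$). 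Applying this at both $\Theta_t$ and $\Theta_{t-1}$ turns the first two terms into $\ip{\nabla\Phi(\Theta_t),\Theta_t} - \ip{\nabla\Phi(\Theta_{t-1}),\Theta_{t-1}}$, and then the whole expression collapses to $\ip{\nabla\Phi(\Theta_t)-\nabla\Phi(\Theta_{t-1}),\Theta_t} = \ip{w_{t+1}-w_t,\Theta_t}$, as desired.

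The only mild subtlety — and the step I would treat most carefully — is the $t=1$ boundary term and the convention $\nabla\Phi(0)=w_1$: one must check that $D_{\Phi}(\Theta_1,\Theta_0) = D_{\Phi}(\Theta_1,0)$ equals $\ip{w_2-w_1,\Theta_1}$ under this convention. With $\Theta_0 = 0$ and the stipulated $\nabla\Phi(0)=w_1$, the expansion gives $\Phi(\Theta_1) - \Phi(0) - \ip{w_1,\Theta_1-0} = \Phi(\Theta_1) - \ip{w_1,\Theta_1}$ since $\Phi(0)=0$; and using $\Phi(\Theta_1) = \ip{w_2,\Theta_1}$ this is exactly $\ip{w_2-w_1,\Theta_1}$. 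So the convention is precisely what is needed to make the telescoping work at the first step; no genuine obstacle remains, and the proof is essentially a one-line substitution into Proposition~\ref{prop:regretabel} once the homogeneity identity $\Phi(\Theta)=\ip{\nabla\Phi(\Theta),\Theta}$ is recorded.
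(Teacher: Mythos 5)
Your proof is correct and takes essentially the same approach as the paper's: starting from Proposition~\ref{prop:regretabel}, expanding the Bregman divergence, and using the identity $\Phi(\Theta)=\ip{\nabla\Phi(\Theta),\Theta}$ at differentiability points to collapse $D_\Phi(\Theta_t,\Theta_{t-1})$ to $\ip{w_{t+1}-w_t,\Theta_t}$. The only (minor) difference is that you explicitly check the $t=1$ term under the convention $\nabla\Phi(0)=w_1$; this is a worthwhile addition since $\Phi$ is in fact not differentiable at $\Theta_0=0$ and the paper leaves this step implicit.
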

\begin{proof}
Let $v = \argmax_{w\in\cW}\inpro{w}{\theta}$, 
$v' = \argmax_{w\in \cW}\ip{w,\theta'}$.
When $\Phi$ is differentiable at $\theta$,
\begin{align}
D_{\Phi}(\theta', \theta) & = \Phi(\theta') - \Phi(\theta) - \inpro{\nabla\Phi(\theta)}{\theta' \!- \theta} 
  =  \inpro{v'}{\theta'} \!- \inpro{v}{\theta} -\inpro{v}{\theta' \!- \theta} = \inpro{v'\!-v}{\theta'}\,. 
\label{eq:bregman}
\end{align}
Therefore, by \cref{prop:regretabel}, $R_n = \sum_{t=1}^{n} t\ip{ w_{t+1}-w_t,\Theta_t} = \sum_{t=1}^{n} t\,D_{\Phi}(\Theta_t,\Theta_{t-1})$.
\end{proof}
When $\Phi$ is non-differentiable at some of the points $\Theta_1,\dots,\Theta_n$, the equality in the above proposition can be replaced with inequalities.
Defining the upper Bregman divergence 
$\uD_{\Phi}(\theta', \theta) 
= \sup_{w\in \partial \Phi(\theta)} \Phi(\theta') - \Phi(\theta) - \ip{ w, \theta' - \theta}$ and the lower Bregman divergence $\lD_{\Phi}(\theta', \theta)$ similarly with $\inf$ instead of $\sup$, 
%
we can easily obtain an analogue of \Cref{prop:R_nBregmanDivergence}:
\begin{align}
\label{eq:regreteq_alt}
\sum_{t=1}^{n} t\,\lD_{\Phi}(\Theta_t,\Theta_{t-1})
\le
R_n 
\le \sum_{t=1}^{n} t\,\uD_{\Phi}(\Theta_t,\Theta_{t-1})\,.
\end{align}

\subsection{Constraint sets with positive curvature}
The previous results show in an implicit fashion that the curvature of $\cW$ controls the regret. Before presenting our first main result, which makes this connection explicit, we define some basic notions from differential geometry related to the curvature (all differential geometry concept and results that we need can be found in Section 2.5 of \citealp{Sch14:ConvexBodies}).

Given a $C^2$ (twice continuously differentiable) planar curve $\gamma$ in $\real^2$, there exists a parametrization with respect to the curve length $s$, such that $\|\gamma'(s)\| = \|\left(x'(s), y'(s)\right)\| = x'(s)^2 + y'(s)^2=1$. Under the curve length parametrization, the curvature of $\gamma$ at $\gamma(s)$ is $\|\gamma''(s)\|$.
Define the unit normal vector $\bf{n}(s)$ as the unit vector that is perpendicular to $\gamma'(s)$.\footnote{There exist two unit vectors that are perpendicular to $\gamma'(s)$ for each point on $\gamma$. Pick the ones that are consistently oriented.}
Note that $\bf{n}(s)\cdot \gamma'(s) = 0$. Thus $0=\left(\bf{n}(s)\cdot \gamma'(s)\right)' = \bf{n}'(s)\cdot\gamma'(s) + \bf{n}(s)\cdot \gamma''(s)$, and $\|\gamma''(s)\| = \|\bf{n}(s)\cdot \gamma''(s)\| = \|\bf{n}'(s)\cdot\gamma'(s)\| = \|\bf{n}'(s)\|$. Therefore, the curvature of $\gamma$ at point $\gamma(s)$ is the length of the differential of its unit normal vector.
 
Denote the boundary of $\cW$ by $\bd(\cW)$.
We shall assume that $\cW$ is $C^2$, that is, $\bd(\cW)$ is a twice continuously differentiable submanifold
of $\R^d$. 
We denote the tangent plane of $\bd(\cW)$ at point $w$ by $T_w\cW$. Now there exists a unique unit vector at $w$ that is perpendicular to $T_w\cW$ and points outward of $\cW$.
In fact, one can define a continously differentiable normal unit vector field on $\bd(\cW)$, $u_{\cW}: \bd(\cW) \to \bS^{d-1}$, the so-called Gauss map, which maps a boundary point $w\in \bd(\cW)$ to the unique outer normal vector to $\cW$ at $w$, where
$\bS^{d-1}=\set{x\in\R^d}{\|x\|_2=1}$ denotes the unit sphere in $d$-dimensions. 
The differential of the Gauss map, $\nabla u_{\cW}(w)$, defines a linear endomorphism of $T_w\cW$. Moreover, $\nabla u_{\cW}(w)$ is a self-adjoint operator, with nonnegative eigenvalues.
The differential of the Gauss map, $\nabla u_{\cW}(w)$, describes the curvature of $\bd(\cW)$ via the second fundamental form. In particular, the \emph{principal curvatures} of $\bd(\cW)$ at $w\in\bd(\cW)$ is defined as the eigenvalues of $\nabla u_{\cW}(w)$.   
Perhaps a more intuitive, yet equivalent definition, is that the principal curvatures are the eigenvalues
of the Hessian of $f=f_w$ in the parameterization $t\mapsto w+t-f_w(t) u_{\cW}(w)$ of $\bd(\cW)$
which is valid in a small open neighborhood of $w$, where $f_w: T_w \cW \to [0,\infty)$ is
a suitable convex, nonnegative valued function that also satisfies $f_w(0)= 0$ and where $T_w \cW$, 
a hyperplane of $\R^d$,
denotes the tangent space of $\cW$ at $w$, 
obtained by taking the support plane $H$ of $\cW$ at $w$ and shifting it by $-w$.
Thus, the principal curvatures at some point $w\in \bd(\cW)$ describe the local shape of $\bd(\cW)$ 
up to the second order. 
In this paper, we are interested in the minimum principal curvature at $w\in\bd(\cW)$, which can be intepreated as the minimum curvature at $w$ over all the planar curves $\gamma \in \bd(\cW)$ that go through $w$.

A related concept that has been used in convex optimization to show fast rates is that of a strongly convex constraint set \citep{LePo66,garber2014faster}:
$\cW$ is $\lambda$-strongly convex with respect to the norm $\norm{\cdot}$ if, for  any $x,y\in \cW$ and $\gamma\in [0,1]$, the $\norm{\cdot}$-ball with origin $\gamma x + (1-\gamma) y$ and radius $\gamma(1-\gamma) \lambda \norm{x-y}^2/2 $ is included in $ \cW$. 
We show in \cref{strongconvex} in the appendix that a convex body
 $\cW \in C^2$ is $\lambda$-strongly convex with respect to $\norm{\cdot}_2$ if and only if the principal curvatures of the surface $\bd(\cW)$ are all at least $\lambda$.

As promised, our next result connects the principal curvatures of $\bd(\cW)$ to the regret of FTL
and shows that FTL enjoys logarithmic regret for highly curved surfaces, as long as $\norm{\Theta_t}_2$ 
is bounded away from zero.
\begin{thm}
\label{thm:R_curvesurface}
Let $\cW\subset \R^d$ be a $C^2$ convex body\footnote{Following \citet{Sch14:ConvexBodies}, a convex body of $\R^d$ is any
 non-empty, compact, convex subset of $\R^d$.}  with $d\ge 2$.
Let $M = \max_{f\in \cF} \norm{f}_2$ and assume that $\Phi$ is differentiable at $(\Theta_t)_{t}$.
Assume that the principal curvatures of the surface $\bd(\cW)$ 
are all at least $\lambda_0$ for some constant $\lambda_0>0$ and $L_n:=\min_{1\le t \le n} \|\Theta_t\|_2 >0$. 
Choose $w_1\in \bd(\cW)$.
Then
\[
R_n \le \frac{2M^2}{\lambda_0 L_n}(1+ \log(n))\,.
\]
\end{thm}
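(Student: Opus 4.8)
The plan is to combine the exact regret identity $R_n=\sum_{t=1}^n t\,D_\Phi(\Theta_t,\Theta_{t-1})$ of \cref{prop:R_nBregmanDivergence} with a bound on each Bregman term coming from the curvature of $\cW$. By \cref{strongconvex}, the hypothesis that all principal curvatures of $\bd(\cW)$ are at least $\lambda_0$ is equivalent to $\cW$ being $\lambda_0$-strongly convex with respect to $\norm{\cdot}_2$, and I will use the latter form. The core step is to establish the pointwise bound
\begin{align*}
D_\Phi(\theta',\theta)\;\le\;\frac{\norm{\theta'-\theta}_2^2}{2\lambda_0\,\norm{\theta}_2}\qquad\text{for all }\theta,\theta'\neq 0\,,
\end{align*}
after which one simply sets $\theta'=\Theta_t$, $\theta=\Theta_{t-1}$, bounds the increments $\norm{\Theta_t-\Theta_{t-1}}_2$, and sums.

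For the pointwise bound, recall from \eqref{eq:bregman} that $D_\Phi(\theta',\theta)=\inpro{\nabla\Phi(\theta')-\nabla\Phi(\theta)}{\theta'}$ and that, since $\cW\in C^2$ and $\Phi$ is differentiable at $\theta,\theta'$, the maximizers $p:=\nabla\Phi(\theta')$ and $q:=\nabla\Phi(\theta)$ lie on $\bd(\cW)$ with outward unit normals $\mu:=\theta'/\norm{\theta'}_2$ and $\nu:=\theta/\norm{\theta}_2$. The geometric input is that $\lambda_0$-strong convexity of $\cW$ forces, for any boundary points $p,q$ with outward normals $\mu,\nu$,
\begin{align*}
\inpro{\mu}{q-p}\le-\tfrac{\lambda_0}{2}\norm{p-q}_2^2
\qquad\text{and}\qquad
\inpro{\nu}{p-q}\le-\tfrac{\lambda_0}{2}\norm{p-q}_2^2\,,
\end{align*}
obtained by applying the defining ball condition with $x=p$, $y=q$, noting that the inscribed ball lies in the supporting halfspace of $\cW$ at $p$ (resp.\ at $q$), and letting the interpolation weight $\gamma\to1$ (resp.\ $\gamma\to0$). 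Adding the two inequalities and using Cauchy--Schwarz gives $\norm{p-q}_2\le\norm{\mu-\nu}_2/\lambda_0$, and therefore
\begin{align*}
\inpro{p-q}{\mu}=\inpro{p-q}{\mu-\nu}+\inpro{p-q}{\nu}\le\norm{p-q}_2\norm{\mu-\nu}_2-\tfrac{\lambda_0}{2}\norm{p-q}_2^2\,;
\end{align*}
as a function of $\norm{p-q}_2\in[0,\norm{\mu-\nu}_2/\lambda_0]$ the right-hand side is nondecreasing, hence at most $\tfrac{1}{2\lambda_0}\norm{\mu-\nu}_2^2$. Combining this with the identity $\norm{\theta'-\theta}_2^2=(\norm{\theta'}_2-\norm{\theta}_2)^2+\norm{\theta'}_2\norm{\theta}_2\,\norm{\mu-\nu}_2^2$, which yields $\norm{\mu-\nu}_2^2\le\norm{\theta'-\theta}_2^2/(\norm{\theta'}_2\norm{\theta}_2)$, gives $D_\Phi(\theta',\theta)=\norm{\theta'}_2\inpro{p-q}{\mu}\le\norm{\theta'-\theta}_2^2/(2\lambda_0\norm{\theta}_2)$.

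It then remains to insert the specific sequence. From $t\Theta_t=-f_t+(t-1)\Theta_{t-1}$ we get $\Theta_t-\Theta_{t-1}=-\tfrac1t(f_t+\Theta_{t-1})$, so, using $\norm{f_t}_2\le M$ and $\norm{\Theta_{t-1}}_2\le M$ (the latter since $\Theta_{t-1}$ is, up to sign, an average of vectors of norm at most $M$), $\norm{\Theta_t-\Theta_{t-1}}_2\le\tfrac{2M}{t}$. Hence for $2\le t\le n$, using $\norm{\Theta_{t-1}}_2\ge L_n$ in the pointwise bound, $t\,D_\Phi(\Theta_t,\Theta_{t-1})\le t\cdot(2M/t)^2/(2\lambda_0 L_n)=2M^2/(\lambda_0 L_n t)$. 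For $t=1$, by the convention $\nabla\Phi(0)=w_1$ and \eqref{eq:bregman}, $D_\Phi(\Theta_1,\Theta_0)=\inpro{\nabla\Phi(\Theta_1)-w_1}{\Theta_1}$; since $w_1\in\bd(\cW)$ the same estimate applies with the trivial bound $\norm{\mu-\nu}_2\le2$, giving $D_\Phi(\Theta_1,\Theta_0)\le 2\norm{\Theta_1}_2/\lambda_0\le 2M/\lambda_0\le 2M^2/(\lambda_0 L_n)$ (using $L_n\le M$). Summing and bounding $\sum_{t=2}^n 1/t\le\log n$ gives $R_n\le\tfrac{2M^2}{\lambda_0 L_n}(1+\log n)$.

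The step I expect to be the main obstacle is the pointwise Bregman bound, and within it the sharp constant: a direct Cauchy--Schwarz estimate of $\inpro{\nabla\Phi(\theta')-\nabla\Phi(\theta)}{\mu-\nu}$ already gives such a bound with $1/\lambda_0$ in place of $1/(2\lambda_0)$, but obtaining the factor $2$ (rather than $4$) in the theorem requires retaining the negative quadratic term from the second chord inequality and optimizing over $\norm{p-q}_2$. The telescoping identity, the increment estimate, and the harmonic-sum bound are all routine.
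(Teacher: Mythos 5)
Your proof is correct and, while it reaches the same pointwise inequality as the paper — in your Bregman form $D_\Phi(\theta',\theta)\le \|\theta'-\theta\|_2^2/(2\lambda_0\|\theta\|_2)$, which via \eqref{eq:bregman} is exactly the paper's \eqref{eq:middletheta} — it establishes it by a genuinely different route. The paper proves \eqref{eq:middletheta} with a differential-geometric argument: it intersects $\bd(\cW)$ with a suitable plane $P$, parametrizes the resulting planar curve $\gamma$ by arclength, projects the Gauss map onto $P$, and integrates $\ip{u_\gamma'(s)}{\ttheta_1}$ using that the planar curvature is $\ge\lambda_0$ and that $\ip{\gamma'(s)}{\ttheta_1}\le 0$ along the chosen branch. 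You instead work purely from the $\lambda_0$-strong convexity of $\cW$ (using \cref{strongconvex} to translate the curvature hypothesis): two chord inequalities $\ip{\mu}{q-p}\le -\tfrac{\lambda_0}{2}\|p-q\|_2^2$ and $\ip{\nu}{p-q}\le -\tfrac{\lambda_0}{2}\|p-q\|_2^2$ obtained directly from the inscribed-ball definition by sending the interpolation weight to the endpoints, then Cauchy--Schwarz plus adding the two inequalities to get $\|p-q\|_2\le\|\mu-\nu\|_2/\lambda_0$, and finally a one-variable optimization to recover the sharp constant $1/(2\lambda_0)$. Your algebraic identity $\|\theta'-\theta\|_2^2=(\|\theta'\|_2-\|\theta\|_2)^2+\|\theta'\|_2\|\theta\|_2\|\mu-\nu\|_2^2$ plays the role of \cref{lem:upperboundcos} (they are the same statement since $\|\mu-\nu\|_2^2=2(1-\cos\inangle{\theta'}{\theta})$). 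The trade-off is that the paper's argument is more visibly geometric and does not invoke the strong-convexity equivalence, whereas yours is more elementary and self-contained (no curves, Gauss map, or integration, only the inscribed-ball condition and Cauchy--Schwarz). Yours is also a bit more careful about the $t=1$ term: the paper's displayed chain formally puts $\|\Theta_0\|_2=0$ in a denominator before invoking $L_n$, whereas you handle $t=1$ separately using $w_1\in\bd(\cW)$, the trivial bound $\|\mu-\nu\|_2\le 2$, and $L_n\le M$, which cleanly accounts for the additive ``$1$'' in $1+\log n$.
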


As we will show later in an essentially matching lower bound, this bound is tight, showing that the forte  
of FTL is when $L_n$ 
is bounded away from zero and $\lambda_0$ is large.
Note that the bound is vacuous as soon as $L_n =O( \log(n)/n )$
and is worse than the minimax bound of $O(\sqrt{n})$ when $L _n = o( \log(n)/\sqrt{n} )$. 
One possibility to reduce the bound's sensitivity to $L_n$  
is to use the trivial bound $\ip{w_{t+1}-w_t,\Theta_t} \le L W = L \sup_{w,w'\in \cW} \norm{w-w'}_2$ 
for indices $t$ when $\norm{\Theta_t}\le L$. Then, by optimizing the bound over $L$, 
one gets a data-dependent bound
of the form $\inf_{L>0} \left(\frac{2M^2}{\lambda_0 L} (1+\log(n)) +  LW \, \sum_{t=1}^n t \,\one{ \norm{\Theta_t}\le L }\right)$,
which is more complex, but is free of $L_n$ and thus reflects the nature of FTL better.
Note that in the case of stochastic problems, where $f_1,\ldots,f_n$ are independent and identically distributed (i.i.d.) with $\mu := -\Exp{\Theta_t}\ne 0$, the probability that $\norm{\Theta_t}_2 < \norm{\mu}_2/2$ is exponentially small in $t$. Thus, selecting $L=\norm{\mu}_2/2$ in the previous bound, the contribution of the expectation of the second term is $O(\norm{\mu}_2W)$, giving an overall bound of the form $O(\frac{M^2}{\lambda_0 \norm{\mu}_2}\log(n)+\norm{\mu}_2 W)$. \todor{the second term should be $\frac{W}{\|\mu\|_2^3}$. The sum of the probabilities brings a term $\frac{1}{\|\mu\|_2^4}$.} \todor{I correct the sum to $\min$.}\todoa{The correct is the sum. And there is no need to multiply the second term by $n$, since it is a sum of exponentially decaying sequence}
After the proof we will provide some simple examples that should make it more intuitive 
how the curvature of $\cW$ helps keeping the regret of FTL small.
\begin{proof}
Fix  $\theta_1, \theta_2 \in \real^d$ and let 
$w^{(1)} = \argmax_{w\in\cW}\inpro{w}{\theta_1}$,
$w^{(2)} = \argmax_{w\in\cW}\inpro{w}{\theta_2}$. 
Note that if $\theta_1,\theta_2\ne 0$ then $w^{(1)} , w^{(2)}  \in \bd(\cW)$. 
Below we will show that
\begin{align*}
\inpro{w^{(1)} - w^{(2)} }{\theta_1} 
	& \le \frac{1}{2\lambda_0} \frac{\|\theta_2 - \theta_1\|_2^2}{\|\theta_2\|_2}\,.
	 \numberthis\label{eq:middletheta}
\end{align*}
\cref{prop:regretabel}  suggests that it suffices to bound $\ip{w_{t+1}-w_t,\Theta_t}$. By \eqref{eq:middletheta}, we see
 that it suffices to bound how much $\Theta_t$ moves.
A straightforward calculation shows that $\Theta_t$ cannot move much:
for any norm $\norm{\cdot}$ on $\cF$, we have 
\begin{align}
\|\Theta_t - \Theta_{t-1} \| & = \left\|\frac{1}{t-1}\sum_{i=1}^{t-1} f_i - \frac{1}{t}\sum_{i=1}^{t} f_i \right\| 
	 = \left\| \sum_{i=1}^{t-1} \left( \frac{1}{t-1} - \frac{1}{t}\right) f_i- \frac{1}{t}f_t\right\| \nonumber \\
	& \le \left\| \sum_{i=1}^{t-1} \left( \frac{1}{t-1} - \frac{1}{t}\right) f_i \right\| + \left\| \frac{1}{t}f_t\right\| 
	 = \left\| \sum_{i=1}^{t-1} \frac{1}{t(t-1)} f_i \right\| + \left\| \frac{1}{t}f_t\right\| \nonumber \\
	 & = \frac{1}{t} \left\| \frac{1}{t-1} \sum_{i=1}^{t-1} f_i\right\| + \frac{1}{t}\left\|f_t\right\| 
	 \le \frac{2}{t}M\,. \label{prop:avgdiff}
\end{align}
where $M = \max_{f\in\cF} \|f\|$ is a constant that depends on $\cF$ and the norm $\norm{\cdot}$.

Combining inequality \eqref{eq:middletheta} with \cref{prop:regretabel} and \eqref{prop:avgdiff}, we get
\begin{align*}
R_n &= \sum_{t=1}^{n} t\ip{ w_{t+1}-w_t,\Theta_t} 
\le \sum_{t=1}^{n} \frac{t}{2\lambda_0} \frac{\|\Theta_t - \Theta_{t-1}\|_2^2}{\|\Theta_{t-1}\|_2} \\
&\le \frac{2M^2}{\lambda_0}\sum_{t=1}^{n} \frac{1}{t\|\Theta_{t-1}\|_2} \le \frac{2M^2}{\lambda_0L_n} \sum_{t=1}^{n} \frac{1}{t}
\le \frac{2M^2}{\lambda_0L_n} (1+\log(n))\,.
\end{align*}
To finish the proof, it thus remains to show~\eqref{eq:middletheta}.

The following elementary lemma relates the cosine of the angle between two vectors $\theta_1$ and $\theta_2$ to the squared normalized distance between 
the two vectors, thereby reducing our problem to bounding the cosine of this angle.
For brevity, we denote by $\cos\inangle{\theta_1}{\theta_2}$
the cosine of the angle between $\theta_1$ and $\theta_2$. 
\begin{lemma}
\label{lem:upperboundcos}
For any non-zero vectors $\theta_1, \theta_2 \in \real^d$,
\begin{align}
1- \cos \inangle{\theta_1}{\theta_2} \le \frac{1}{2} \frac{\|\theta_1 - \theta_2\|_2^2}{\|\theta_1\|_2\|\theta_2\|_2}.
\label{eq:angleineq}
\end{align}
\end{lemma}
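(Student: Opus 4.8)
The plan is to reduce everything to a one-variable trigonometric inequality. Write $a=\norm{\theta_1}_2$, $b=\norm{\theta_2}_2$, and let $\alpha$ be the angle between $\theta_1$ and $\theta_2$, so that $\cos\inangle{\theta_1}{\theta_2}=\cos\alpha$ and, by the law of cosines, $\norm{\theta_1-\theta_2}_2^2 = a^2+b^2-2ab\cos\alpha$. Substituting this into the right-hand side of \eqref{eq:angleineq}, the claimed inequality becomes
\begin{align*}
1-\cos\alpha \le \frac{a^2+b^2-2ab\cos\alpha}{2ab}\,,
\end{align*}
which, after multiplying through by $2ab>0$, is equivalent to $2ab - 2ab\cos\alpha \le a^2+b^2-2ab\cos\alpha$, i.e. to $2ab \le a^2+b^2$. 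This last inequality is just the AM--GM inequality $(a-b)^2\ge 0$, which holds for all reals, so we are done.

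An alternative, slightly more geometric route avoids the law of cosines: expand $\norm{\theta_1-\theta_2}_2^2 = a^2 + b^2 - 2\inpro{\theta_1}{\theta_2}$ and recall $\inpro{\theta_1}{\theta_2} = ab\cos\alpha$ by definition of the angle; then the same algebra applies. Either way the only genuine content is the bound $2ab\le a^2+b^2$ together with the reduction to it.

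The main (and only) obstacle is really a matter of bookkeeping: one must be careful that $a,b>0$ (guaranteed by the hypothesis that $\theta_1,\theta_2$ are non-zero) so that dividing by $2ab$ preserves the inequality direction, and that the definition of $\cos\inangle{\theta_1}{\theta_2}$ used here coincides with $\inpro{\theta_1}{\theta_2}/(ab)$. Beyond that the argument is elementary. I expect the written proof to be two or three lines: state the law-of-cosines (or inner-product expansion) identity, substitute, cancel, and invoke $(a-b)^2\ge 0$.
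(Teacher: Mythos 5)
Your proof is correct and takes essentially the same approach as the paper: both reduce the claim, via the identity $\inpro{\theta_1}{\theta_2}=\norm{\theta_1}_2\norm{\theta_2}_2\cos\inangle{\theta_1}{\theta_2}$ and the expansion of $\norm{\theta_1-\theta_2}_2^2$, to the elementary inequality $(\norm{\theta_1}_2-\norm{\theta_2}_2)^2\ge 0$.
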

\begin{proof}
Note that $\|\theta_1\|_2\|\theta_2\|_2\cos\inangle{\theta_1}{\theta_2} = \inpro{\theta_1}{\theta_2}$.
Therefore, \eqref{eq:angleineq} is equivalent to 
$ 2\|\theta_1\|_2\|\theta_2\|_2 - 2\inpro{\theta_1}{\theta_2} \le \|\theta_1 - \theta_2\|_2^2 $,
which, by algebraic manipulations, is itself equivalent to $0 \le (\|\theta_1\|_2-\|\theta_2\|_2)^2$.
\end{proof}

\begin{figure}[h]
  \centering
	\includegraphics[height=4cm]{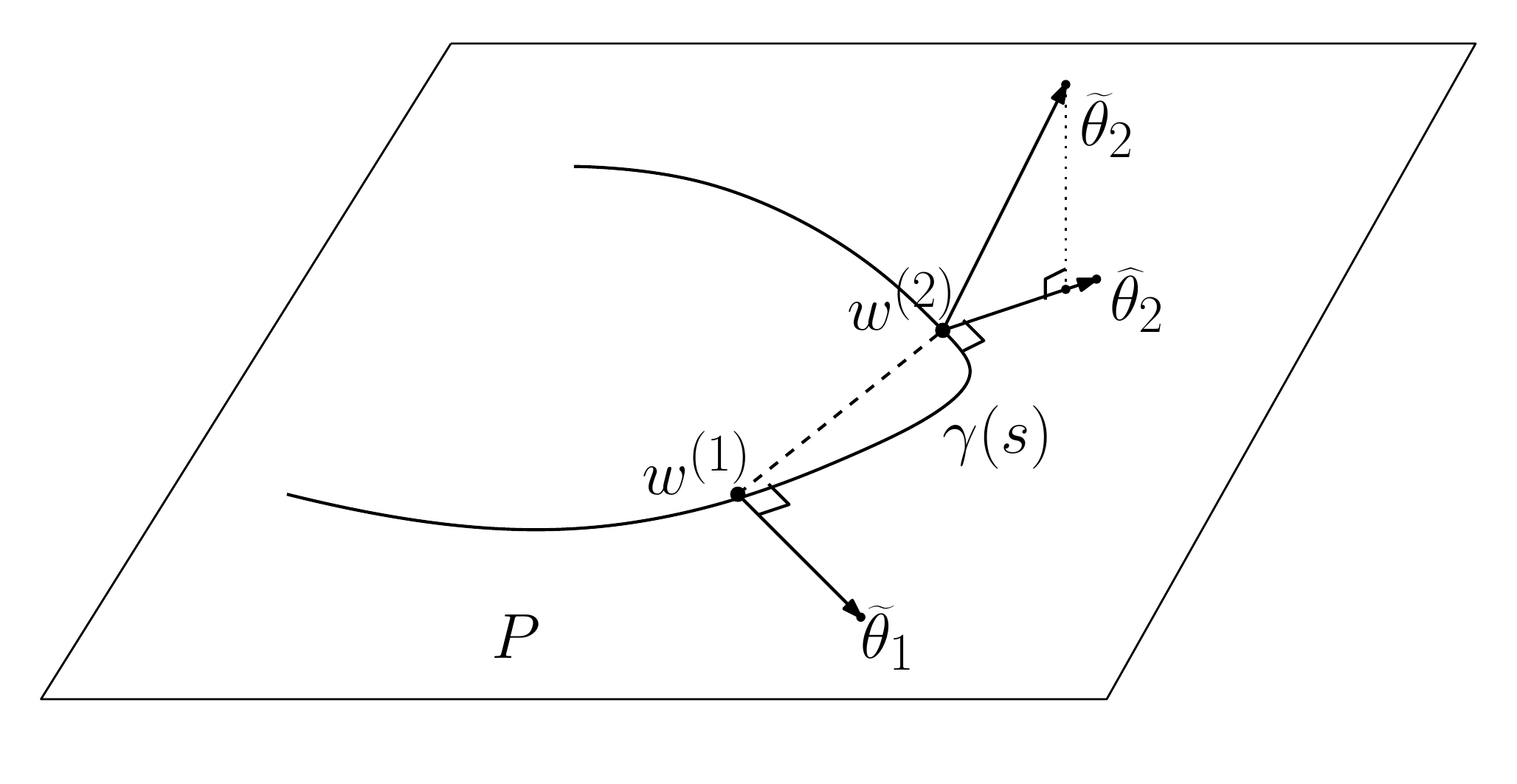}
  \caption{Illustration of the construction used in the proof of~\eqref{eq:middletheta}.} 
  \label{fig:cuttingplane}
\end{figure}

With this result, we see that it suffices to upper bound $\cos \inangle{\theta_1}{\theta_2}$ by $1-\lambda_0 \inpro{w^{(1)}-w^{(2)}}{\frac{\theta_1}{\|\theta_1\|_2}}$.
To develop this bound, let $\ttheta_i = \frac{\theta_i}{\|\theta_i\|_2}$ for $i=1,2$.
The angle between $\theta_1$ and $\theta_2$ is the same as the angle between 
the normalized vectors $\ttheta_1$ and $\ttheta_2$.
To calculate the cosine of the angle between $\ttheta_1$ and $\ttheta_2$,
let $P$ be a plane spanned by $\ttheta_1$ and $w^{(1)}-w^{(2)}$ and passing through $w^{(1)}$
($P$ is uniquely determined if $\ttheta_1$ is not parallel to $w^{(1)}-w^{(2)}$;
if there are multiple planes, just pick any of them). 
Further, let $\htheta_2\in \bS^{d-1}$ be the unit vector along the projection of $\ttheta_2$ onto the plane $P$, as indicated in \cref{fig:cuttingplane}.
Clearly, $\cos \inangle{\ttheta_1}{\ttheta_2} \le \cos \inangle{\ttheta_1}{\htheta_2}$.

Consider a curve $\gamma(s)$ on $\bd(\cW)$ connecting $w^{(1)}$ and $w^{(2)}$ that is defined by the intersection of $\bd(\cW)$ and $P$ and is parametrized by its curve length $s$ so that $\gamma(0) = w^{(1)}$ and $\gamma(l) = w^{(2)}$,
where $l$ is the length of the curve $\gamma$ between $w^{(1)}$ and $w^{(2)}$.
Let $u_{\cW}(w)$ denote the outer normal vector to $\cW$ at $w$ as before,
and let $u_\gamma\, : \, [0,l]\rightarrow \bS^{d-1}$ be such that $u_\gamma(s) = \htheta$ where $\htheta$ is the unit vector parallel to the projection of $u_{\cW}(\gamma(s))$ on the plane $P$. 
By definition, $u_\gamma(0) = \ttheta_1$ and $u_\gamma(l) = \htheta_2$.
Note that in fact $\gamma$ exists in two versions since $\cW$ is a compact convex body,
hence the intersection of $P$ and $\bd(\cW)$ is a closed curve.
Of these two versions we choose the one that satisfies that $\ip{\gamma'(s),\ttheta_1}\le 0$ for $s\in [0,l]$.\footnote{$\gamma'$ and $u'_\gamma$ denote the derivatives of $\gamma$ and $u$, respectively, which exist since $\cW$ is $C^2$.}
Given the above, we have
\begin{align*}
\cos \inangle{\ttheta_1}{\htheta_2} & = \inpro{\htheta_2}{\ttheta_1} 
	 = 1 \! + \inpro{\htheta_2 - \ttheta_1}{\ttheta_1} 
	  = 1\!+ \Big\langle\int_{0}^{l} u_\gamma'(s)\,\text{d}s, \ttheta_1 \Big\rangle
	 = 1\!+ \!\int_{0}^{l} \inpro{u_\gamma'(s)}{\ttheta_1} \,\text{d}s. \numberthis \label{eq:cosint}
\end{align*}
Note that $\gamma$ is a planar curve on $\bd(\cW)$, 
thus its curvature $\lambda(s)$ satisfies $\lambda(s) \ge \lambda_0$ for $s\in [0,l]$.
Also, for any $w$ on the curve $\gamma$, $\gamma'(s)$ is a unit vector parallel to $P$. 
Moreover, $u_\gamma'(s)$ is parallel to $\gamma'(s)$ and $\lambda(s) = \|u_\gamma'(s)\|_2$.
Therefore, 
\[
\inpro{u_\gamma'(s)}{\ttheta_1} = \|u_\gamma'(s)\|_2\inpro{\gamma'(s)}{\ttheta_1} \le \lambda_0\inpro{\gamma'(s)}{\ttheta_1},
\]
where the last inequality holds because $\inpro{\gamma'(s)}{\ttheta_1} \le 0$.
Plugging this into~\eqref{eq:cosint}, we get the desired
\begin{align*}
\cos \inangle{\ttheta_1}{\htheta_2}   
	& \le 1+ \lambda_0\, \int_{0}^{l} \, \inpro{\gamma'(s)}{\ttheta_1}  \,\text{d}s 
	    = 1+ \lambda_0 \Big\langle\int_{0}^{l} \gamma'(s) \,\text{d}s, \ttheta_1 \Big\rangle
	  = 1 - \lambda_0 \inpro{w^{(1)}  - w^{(2)}}{\ttheta_1}\,.
\end{align*}
Reordering and combining with~\eqref{eq:angleineq} we obtain
\begin{align*}
\inpro{w^{(1)}  - w^{(2)}}{\ttheta_1} 
	& \le \frac{1}{\lambda_0} \left( 1- \cos \inangle{\ttheta_1}{\htheta_2} \right)
	 \le \frac{1}{\lambda_0} \left( 1- \cos \inangle{\theta_1}{\theta_2} \right) 
	 \le \frac{1}{2\lambda_0} \frac{\|\theta_1 - \theta_2\|_2^2}{\|\theta_1\|_2\|\theta_2\|_2}\,.
\end{align*}
Multiplying both sides by $\norm{\theta_1}_2$ gives~\eqref{eq:middletheta}, thus, finishing the proof.
\end{proof}

\begin{example}
	\label{ex:curvature}
The smallest principal curvature of some common convex bodies are as follows:
\begin{itemize}\setlength{\itemsep}{0pt}
\item The smallest principal curvature $\lambda_0$ of the Euclidean ball $\cW = \set{w}{\|w\|_2\le r}$ of radius $r$ 
satisfies $\lambda_0=\frac{1}{r}$.
\item Let $Q$ be a positive definite matrix.
If $\cW = \set{w}{w^\top Q w\le 1 }$ then $\lambda_0=\lambda_{\min}/\sqrt{\lambda_{\max}}$, 
where $\lambda_{\min}$ and $\lambda_{\max}$ are the minimal, respectively, maximal eigenvalues of $Q$.
(\citealt{Pol96} also derived this result for the strong convexity definition \eqref{sc:l2} in \cref{strongconvex}.)
\item
In general, let $\phi:\R^d \to \R$ be a $C^2$ convex function.
Then, for $\cW = \set{w}{\phi(w)\le 1}$, 
$\lambda_0=\min_{w\in\bd(\cW)}\min_{v\,:\,\|v\|_2=1, v\perp \phi'(w) }\frac{v^{\top}\nabla^2\phi(w) v}{\|\phi'(w)\|_2}$~. \todoa{We should prove this one}
\end{itemize}
\end{example}

We only prove the last statement, since it implies the other two.
\begin{proof}
Fix $w\in\bd(\cW)$. Note that $\phi'(w)$ is a normal vector at $w$ for $\bd(\cW)$, thus $T_w\cW = \seto{v: v\perp \phi'(w)}$.
Then the Gauss map $u_{\cW}$ of $\cW$ satisfies $u_{\cW}(w) = \frac{\phi'(w)}{\|\phi'(w)\|_2}$ for $w\in\bd(\cW)$.

Next we compute the Weingarten map $W_w(v):\, T_w\cW \rightarrow T_w\cW$, which, by definition, is the differential of $u_{\cW}(w)$ restricted to $T_w\cW$. Note that the Weingarten map is a linear map.
\[
W_w(v) = \left.\frac{\mbox{d }u_{\cW} }{\mbox{d }w} \right\vert_{T_w\cW} (v) =  \frac{\nabla^2(w)v}{\|\phi'(w)\|_2} -\frac{\phi'(w)\nabla^2\phi(w)\phi'(w)^{T}v}{\|\phi'(w)\|_2^3} = \frac{\nabla^2(w)v}{\|\phi'(w)\|_2}.
\]

By \citep[page 105]{Sch14:ConvexBodies}, the principal curvature of $\cW$ at $w$ are the eigenvalues of the Weingarten map $W_w(v)$. Therefore, the smallest principal curvature at $w$ is $\min_{v\,:\,\|v\|_2=1, v\perp \phi'(w) }\frac{v^{\top}\nabla^2\phi(w) v}{\|\phi'(w)\|_2}$. Taking minimum over all  $w\in\bd(\cW)$ finishes the proof. 
\end{proof}

\begin{wrapfigure}{R}{0.35\textwidth}
	\vspace{-.05cm}
\begin{framed}
	\centering
	\includegraphics[width = \textwidth,
	trim={6.2cm 1cm 1.8cm 0},clip]
	{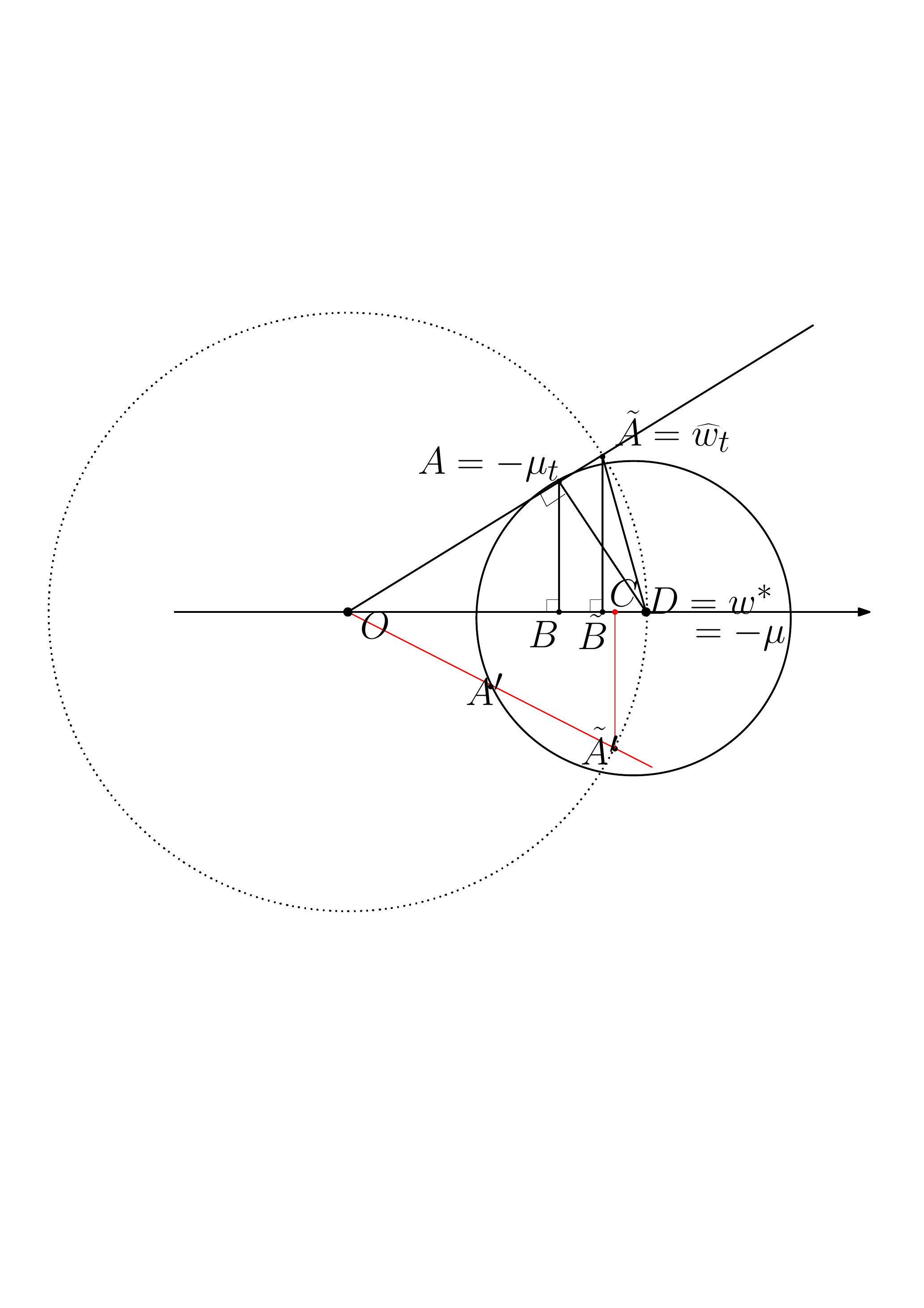}
	\vspace{-0.4cm}
	\caption{Illustration of how curvature helps to keep the regret small.
	}
	\label{fig:excesserror}
	\vspace{-0.1cm}
\end{framed}
	\vspace{-1.5cm}
\end{wrapfigure} 
In the stochastic i.i.d.\ case, when $\Exp{\Theta_t} = -\mu$, we have $\norm{\Theta_t +\mu}_2 = O(1/\sqrt{t})$ with high probability. 
Thus say, for $\cW$ being the unit ball of $\R^d$, one has $w_t = \Theta_t/\norm{\Theta_t}_2$; therefore,
a crude bound suggests that $\norm{w_t-  w^* }_2 = O(1/\sqrt{t})$, overall predicting that 
$\Exp{R_n} = O(\sqrt{n})$, while the previous result predicts that $R_n$ is much smaller.
In the next example we look at the unit ball, to explain geometrically, what ``causes'' the smaller regret.
\begin{example}
	\label{exam:ERM}
Let $\cW = \set{w}{\|w\|_2\le 1}$ and
consider a stochastic setting where the $f_i$ are i.i.d. samples 
from some underlying distribution with expectation $\Exp{f_i} = \mu = (-1,0,\ldots,0)$ and $\|f_i\|_\infty\le M$.
It is straightforward to see that $w^* = (1,0,\ldots,0)$, and thus $\inpro{w^*}{\mu} = -1$.
Let $E = \set{-\theta}{\|\theta - \mu\|_2 \le \epsilon}$. As suggested beforehand, we expect $-\mu_t\in E$ with high probability.
As shown in \cref{fig:excesserror}, 
the excess loss of an estimate $\vv{OA}$ is $\inpro{\vv{O\tilde{A}}}{\vv{OD}} - 1 = |\tilde{B}D|$.
Similarly, the excess loss of an estimate $\vv{OA'}$ in the figure is $|{CD}|$.
Therefore, for an estimate $-\mu_t \in E$, the point $A$ is where the largest excess loss is incurred.
The triangle $OAD$ is similar to the triangle $ADB$. Thus $\frac{|BD|}{|AD|} = \frac{|AD|}{|OD|}$. Therefore, 
$|BD| = \epsilon^2$ and since $|{\tilde{B}D}| \le |{BD}|$, 
if $\|\mu_t - \mu\|_2 \le \epsilon$, the excess error is at most $\epsilon^2 = O(1/t)$, making the regret $R_n = O(\log n)$.
\end{example}

Our last result in this section is an asymptotic lower bound for the linear game, showing that FTL achieves the optimal rate under the condition that  $\min_t \|\Theta_t\|_2\ge L >0$.
\begin{thm}
	\label{thm:lowerbound}
		Let $\lambda,L \in (0,1)$. Assume that  $\seto{(1,-L), (-1, -L)} \subset \cF$ and let \[
		\cW = \seto{(x,y) \in \R^2: x^2 + \frac{y^2}{\lambda^2} \le 1}
		\]
		be
		an ellipsoid with principal curvature $h$.
 		Then, for any learning strategy, there exists a sequence of losses in $\mathcal F$ such that $R_n = \Omega\left(\log(n)/(Lh)\right)$ and $\|\Theta_t\|_2 \ge L$ for all $t$.
\end{thm}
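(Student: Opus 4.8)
The plan is to restrict the adversary to the two extreme loss vectors and reduce the ellipse game to a one-dimensional one. Take $f_t=(\epsilon_t,-L)$ with $\epsilon_t\in\{-1,+1\}$; these lie in $\cF$ by assumption. Then $\Theta_t=-\tfrac1t\sum_{i\le t}f_i=(-\bar\epsilon_t,L)$ with $\bar\epsilon_t:=\tfrac1t\sum_{i\le t}\epsilon_i\in[-1,1]$, so $\|\Theta_t\|_2=\sqrt{\bar\epsilon_t^2+L^2}\ge L$ for every $t$ and every choice of the signs — the side condition is automatic. The support function of $\cW=\{x^2+y^2/\lambda^2\le1\}$ is $\Phi(\theta)=\sqrt{\theta_1^2+\lambda^2\theta_2^2}$; since each $f_t$ has a strictly negative second coordinate, both the learner's best responses and the comparator lie on the ``upper'' boundary arc, parametrised by $w=(a,\lambda\sqrt{1-a^2})$, $a\in[-1,1]$. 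Writing $a_t:=w_{t,1}$ and $g(a):=-L\lambda\sqrt{1-a^2}$, the inequality $w_{t,2}\le\lambda\sqrt{1-a_t^2}$ gives $\langle f_t,w_t\rangle\ge\epsilon_t a_t+g(a_t)$, while $\min_{w\in\cW}\sum_t\langle f_t,w\rangle=\min_{a\in[-1,1]}\sum_t(\epsilon_t a+g(a))$, so
\[
R_n\ \ge\ \widetilde R_n\ :=\ \sum_{t=1}^n\ell_t(a_t)-\min_{a\in[-1,1]}\sum_{t=1}^n\ell_t(a),\qquad \ell_t(a):=\epsilon_t a+g(a).
\]
Hence it suffices to lower bound the regret of this one-dimensional online game, and an adversary rule $a_1,\dots,a_t\mapsto\epsilon_t$ for it lifts verbatim to the ellipse game (ignoring $w_{t,2}$), with $\|\Theta_t\|_2\ge L$ guaranteed by the first step.

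Two observations identify this as the standard strongly convex online game. First, $g''(a)=L\lambda(1-a^2)^{-3/2}\ge L\lambda$, so each $\ell_t$ is $L\lambda$-strongly convex on $[-1,1]$, with subgradients of absolute value between $1$ (coming from the linear term $\epsilon_t a$) and a constant $G=G(L,\lambda)$ on any subinterval bounded away from $\pm1$; since $L,\lambda$ are fixed, both the comparator $a^\ast$ (which solves $g'(a)=-\bar\epsilon_n$, hence $|a^\ast|\le(1+L^2\lambda^2)^{-1/2}<1$) and an optimal learner stay in such a subinterval. Second, writing $\cW=\{w:w^\top Qw\le1\}$ with $Q=\mathrm{diag}(1,\lambda^{-2})$, the last item of Example~\ref{ex:curvature} gives $h=\lambda_{\min}(Q)/\sqrt{\lambda_{\max}(Q)}=1/\sqrt{\lambda^{-2}}=\lambda$ (using $\lambda<1$). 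So the target $\Omega(\log n/(Lh))$ equals $\Omega(\log n/(L\lambda))=\Omega(\mu^{-1}\log n)$ with $\mu:=L\lambda$, and — because $G\ge1$ forces $G^2/\mu\ge1/(L\lambda)$ — the theorem follows from the classical minimax lower bound for online optimization of $\mu$-strongly convex losses with $G$-bounded subgradients: there is a loss sequence on which every algorithm suffers regret $\Omega(G^2\mu^{-1}\log n)$.

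The genuinely delicate point is that classical lower bound itself. The adversary there must be adaptive and randomized: against any \emph{oblivious} distribution of $(\epsilon_t)$ the learner can play the Bayes-optimal action, which already caps the expected regret at $O(1/\mu)$ — for instance, against i.i.d.\ Rademacher signs the learner that always plays $a_t\equiv0$ (i.e.\ $w_t=(0,\lambda)$) has $\Exp{R_n}\le 1/(2L\lambda)$, since then $R_n=n\sqrt{\bar\epsilon_n^2+L^2\lambda^2}-nL\lambda\le \bar\epsilon_n^2/(2L\lambda)\cdot n$ and $\Exp{\bar\epsilon_n^2}=1/n$. What forces logarithmic regret is an adversary that reacts to $a_t$ using fresh randomness so that at round $t$ the learner incurs an unavoidable instantaneous loss of order $1/(\mu t)$ (a point near the current empirical optimum still pays the curvature term $\asymp\mu(a_t-a_t^{\mathrm{opt}})^2$ against the fluctuation of size $\asymp 1/(\mu t)$ that the adversary can induce, and a point far from it pays even more) while $\bar\epsilon_t$ is held near a fixed value so the comparator gains nothing; summing $\sum_t 1/(\mu t)$ gives $\Omega(\mu^{-1}\log n)$. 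Constructing this adversary and its matching analysis — or citing the strongly-convex lower bound as a black box — is the one hard step; the reduction, the curvature computation, and the verification that $\|\Theta_t\|_2\ge L$ are all routine. The statement ``for any learning strategy there exists a sequence'' then follows from the expected-regret bound by the probabilistic method (for a deterministic learner an adaptive adversary directly produces a deterministic bad sequence).
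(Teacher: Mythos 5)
Your reduction to a one-dimensional game on the upper boundary arc is sound and in fact mirrors the paper's own reparametrization $w^p=(\cos\varphi^p, h\sin\varphi^p)$; the bookkeeping — $h=\lambda$, the automatic $\|\Theta_t\|_2\ge L$, $G\ge 1$, $\mu=L\lambda$ — is all correct. But the step you flag and then leave to a citation is the whole theorem, and the citation does not actually cover it. A minimax lower bound stated for the \emph{full} class of $\mu$-strongly-convex, $G$-Lipschitz losses does not yield a lower bound for the strict subclass you have reduced to, namely losses of the special form $a\mapsto\pm a+g(a)$ with a single fixed, non-quadratic $g$ and a binary linear coefficient: lower bounds do not transfer downward to subclasses. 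One would need to verify that the classical adversary's loss sequence can be realized inside this subclass, and it cannot verbatim (the standard constructions use freely tunable quadratics). The paper's proof is precisely the work of building an adversary that lives inside this restricted family and bounding its Bayes regret, via \cref{lem:concenPhat,lem:bayeserror,lem:P2P1loss}.

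Moreover, the intuition you offer for why that step is hard is itself incorrect and points away from the construction that works. You claim that against \emph{any} oblivious law over $(\epsilon_t)$ the Bayes-optimal learner caps expected regret at $O(1/\mu)$, so that the adversary ``must be adaptive.'' The paper's adversary refutes this: it is oblivious. It draws $P\sim\mbox{Beta}(K,K)$ once and then samples $X_t$ i.i.d.\ $\mathrm{Bernoulli}(P)$, so the signs $\epsilon_t=2X_t-1$ are exchangeable but not i.i.d. Against this oblivious law the Bayes-optimal learner — which plays $w^{\hat P_{t-1}}$ — still incurs $\Omega(\log n)$ expected regret, because the hindsight comparator $w^P$ effectively knows the latent $P$ while the learner's posterior mean has $\Expc{(\hat P_{t-1}-P)^2}{P}=\Theta(1/t)$; the instantaneous Bayes regret $\Expc{\inner{w^{\hat P_{t-1}}-w^P,\,f^P}}{P}$ scales as this squared error (by \cref{lem:P2P1loss}), and these terms sum to $\Theta(\log n)$. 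Your i.i.d.\ Rademacher computation only shows that \emph{that particular} oblivious law is easy, because there the latent parameter is degenerate and the hindsight comparator has no informational advantage. The missing idea is the Bayesian latent-parameter construction — not adversary adaptivity — and supplying it is what turns your reduction into a proof.
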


Note that by Example~\ref{ex:curvature}, the minimal principal curvature of $\cW$ in the above theorem is $\lambda$. In fact, it is not too hard to extend the above argument for any set $\cW$ such that there is $w \in \bd(\cW)$ where the curvature is $h$, and the curvature is a continuous function in a neighborhood of $w$ over the boundary $\bd(\cW)$. The constants in the bound then depend on how fast the curvature changes within this neighborhood.

\begin{proof}
 We define a random loss sequence, and we will show that no algorithm on this sequence can achieve an $o(\log n/ (\lambda_0 L)$ regret.
	Let $P$ be a random variable with $\mbox{Beta}(K,K)$ distribution for some $K>0$, and, given $P$, assume that $X_t, t \ge 1$ are i.i.d. Bernoulli random variables with parameter $P$. Let $f_t = X_t (1, -L) + (1-X_t) (-1, -L) = (2X_t - 1, -L)$. Thus, the second coordinate of $f_t$ is always $-L$, and so $\|\Theta_t\|_2 = \left\| \tfrac{1}{t} \sum_{i=1}^t f_i \right\|_2 \ge L$. Furthermore, the conditional expectation of the loss vector is $f^p \overset{\triangle}{=} \Expc{f_t}{P=p} = (2p - 1, -L)$. 
	
	Note that $X_t$ is a function of $f_t$ for all $t$; thus the conditional expectation of $P$, given $f_1,\ldots,f_{t-1}$, can be determined by the well-known formula $\hP_{t-1}= \Expc{P}{f_1 \ldots f_{t-1}} = \frac{K+\sum_{i=1}^{t-1} X_i}{2K+t-1}$.
	Given $p$, denote the optimizer of $f^p$ by $w^p$, that is, $w^p = \argmin_{w \in \cW} \inner{w,f^p}$. 	
	Then the Bayesian optimal choice in round $t$ is 
	\begin{align}
	\argmin_{w \in \mathcal W} \Expc{[\inner{w, f^P}}{ f_1\ldots f_{t-1}}
	&= \argmin_{w \in \mathcal W} \inner{w, \Expc{f^P}{f_1 \ldots f_{t-1}}} \nonumber \\
	&= \argmin_{w \in \mathcal W} \inner{w, f^{\hat P_{t-1}}} \nonumber \\
	&= w^{\hat P_{t-1}}\,,
	\label{eq:bayes-opt}
	\end{align}
	where the first equality follows by linearity of the inner product, the second since $f^p$ is a linear function of $p$ and the third
	by the definition of $w^p$.
	
	Thus, denoting by $W_t$ the prediction of an arbitrary algorithm in round $t$, the expected regret can be bounded from below as 
	\begin{align}
	\Exp{R_n}
	&= \Exp{\max_{w \in \cW} \sum_{t=1}^n \inner{W_t - w, f_t}}
	= \Exp{ \Expc{\max_{w \in \cW} \sum_{t=1}^n \inner{W_t - w, f_t}}{P} } \nonumber \\
	& \ge  \Exp{ \Expc{ \sum_{t=1}^n \inner{W_t - w^P, f_t}}{P} } = \Exp{\sum_{t=1}^n  \Expc{ \inner{W_t - w^P, f_t} }{P, f_1,\ldots,f_{t-1}}} \nonumber \\
	& = \Exp{\sum_{t=1}^n  \Expc{ \inner{W_t - w^P, f^P} }{f_1,\ldots,f_{t-1}}} \label{eq:Wf-ind} \\
	& \ge \Exp{\sum_{t=1}^n  \min_{w \in \cW} \Expc{ \inner{w- w^P, f^P} }{f_1,\ldots,f_{t-1}}} \nonumber \\
	& = \Exp{\sum_{t=1}^n  \Expc{ \inner{w^{\hP_{t-1}}- w^P, f^P} }{f_1,\ldots,f_{t-1}}}  \label{eq:bayes1} \\
	& = \sum_{t=1}^n \Exp{\inner{w^{\hP_{t-1}} - w^P, f^P}} \,, \nonumber 
	\end{align}
	where   \eqref{eq:Wf-ind} holds because of the independence of the $f_s$ given $P$ and since $W_t$ is chosen based on $f_1,\ldots,t_{t-1}$ (but not on $P$),
	and \eqref{eq:bayes1} holds by \eqref{eq:bayes-opt}. 
	
	By \cref{lem:P2P1loss} we have
	\begin{align}
	\sum_{t=1}^n \Exp{\inner{w^{\hP_{t-1}} - w^P, f^P}} 
	& \ge \frac{hL}{2}\sum_{t=1}^n \Exp{ \frac{\left( \frac{2\hP_{t-1} - 2P}{hL} \right)^2}{\sqrt{1+\left( \frac{1-2P}{hL}\right)^2 } \left(1+\left( \frac{1-2\hP_{t-1}}{hL}\right)^2 \right)} } \label{eq:hLloss} \\
	& = \frac{2}{hL}\sum_{t=1}^n \Exp{\frac{1}{\sqrt{1+\left( \frac{1-2P}{hL}\right)^2 }}\Expc{ \frac{ ( \hP_{t-1} - P)^2}{ 1+\left( \frac{1-2\hat P_{t-1}}{hL}\right)^2 } }{P} } \nonumber \\
	& \ge \frac{2}{hL}\sum_{t=1}^n \Exp{ \frac{1}{\sqrt{1+\left( \frac{1-2P}{hL}\right)^2 }}\Expc{ \frac{( \hP_{t-1} - P)^2}{ 1+ 2\left( \frac{1-2P}{hL}\right)^2 +2 \left(\frac{2P - 2\hP_{t-1}}{hL}\right)^2 }}{P } } \label{eq:hLlossCond}\,,
	\end{align}
	where in the last step we used $(a+b)^2 \le a^2 + b^2$. 	
	Let $\cG_t$ be the event that $|\hat P_{t} - P| \le \frac{K |1-2P|}{2K+t} + \frac{t hL}{2K+t}$; note that $\cG_t$ holds with high probability by \cref{lem:concenPhat}. Then, lower bounding the first term by $0$, \eqref{eq:hLlossCond} can be lower bounded by 
	\begin{align*}
	&\frac{2}{hL}\sum_{t=1}^{n-1} \Exp{ \frac{1}{\sqrt{1+\left( \frac{1-2P}{hL}\right)^2 }}\Expc{ \frac{( \hP_{t} - P)^2}{ 1+ 2\left( \frac{1-2P}{hL}\right)^2 +2 \left(\frac{2P - 2\hP_{t}}{hL}\right)^2 }\ind(\cG_t)}{P } } \\
	&\ge \frac{2}{hL}\sum_{t=1}^{n-1} \Exp{ \frac{1}{\sqrt{1+\left( \frac{1-2P}{hL}\right)^2 }}\frac{\Expc{ ( \hP_{t} - P )^2\ind(\cG_t) }{P}}{ \left(1+ 2\left( \frac{1-2P}{hL}\right)^2 +2 \left(\frac{2K}{2K+t}\frac{|1-2P|}{hL} + \frac{2t}{2K+t}\right)^2 \right)}  } \\
	& \ge \frac{2}{hL}\sum_{t=1}^{n-1} \Exp{\frac{1}{\sqrt{1+\left( \frac{1-2P}{hL}\right)^2 }}\frac{\Expc{ ( \hP_{t} - P )^2\ind(\cG_t) }{P}}{ \left(9+ 4\left( \frac{1-2P}{hL}\right)^2 +8 \frac{|1-2P|}{hL} \right)}  }.
	\end{align*}
	Combining the above, and using $(\hP_{t} - P )^2 \le 1$ together with the upper bound on the probability of the event $\cG^c_t$, the complement of $\cG_t$, given in \cref{lem:concenPhat}, we get
	\begin{align}
	\Exp{R_n} & \ge 
	\frac{2}{hL}\sum_{t=1}^{n-1} \Exp{\frac{1}{\sqrt{1+\left( \frac{1-2P}{hL}\right)^2 }}\frac{\Expc{ ( \hP_{t} - P )^2 }{P}-\Prob{\cG^c_t}}{ \left(9+ 4\left( \frac{1-2P}{hL}\right)^2 +8 \frac{|1-2P|}{hL} \right)}  } \nonumber \\
	& \ge \frac{2}{hL}\sum_{t=1}^{n-1} \left( \Exp{\frac{1}{\sqrt{1+\left( \frac{1-2P}{hL}\right)^2 }}\frac{\Expc{ ( \hP_{t} - P )^2 }{P}}{ \left(9+ 4\left( \frac{1-2P}{hL}\right)^2 +8 \frac{|1-2P|}{hL} \right)}  } - e^{-(t-1)h^2L^2} \right) \nonumber \\
	& \ge \frac{2}{hL}\left(\sum_{t=1}^{n-1} \Exp{\frac{1}{\sqrt{1+\left( \frac{1-2P}{hL}\right)^2 }}\frac{\Expc{ ( \hP_{t} - P )^2 }{P}}{ \left(9+ 4\left( \frac{1-2P}{hL}\right)^2 +8 \frac{|1-2P|}{hL} \right)}  }  \; -  \frac{1}{1-e^{-h^2L^2}} \right)\,. \label{eq:Rngc}
	\end{align} 
	Now, by \cref{lem:bayeserror}, we have
	\begin{align*}
	\Expc{ ( \hP_{t} - P )^2 }{P} & = \frac{K^2(1-2P)^2}{(2K+t)^2} + \frac{tP(1-P)}{(2K+t)^2} \ge P(1-P) \left( \frac{1}{t} - \frac{2}{t(2K+t)} \right)~. 
	\end{align*}
	Combining this with \eqref{eq:Rngc} and introducing the constant
	\[
	C = \Exp{\frac{1}{\sqrt{1+\left( \frac{1-2P}{hL}\right)^2 }}\frac{P(1-P)}{ \left(9+ 4\left( \frac{1-2P}{hL}\right)^2 +8 \frac{|1-2P|}{hL} \right)}  } 
	\]
	we obtain, for any $K>0$,
	\begin{align}
	\liminf_{n \to \infty} \frac{\Exp{R_n}}{\log n} & \ge \liminf_{n \to \infty} \frac{2}{hL \log n}\left[ - \frac{1}{1-e^{-h^2L^2}} + \sum_{t=1}^{n-1} C\left(\frac{1}{t} - \frac{2}{t(2K+t)} \right) \right]
	 = \frac{2 C}{hL}~.
	\end{align}
	It remains to calculate a constant lower bound for $C$ that is independent of $h$ and $L$. Denote $\frac{|1-2P|}{hL}$ by $Y$; then $0\le P(1-P) = \frac{1-Y^2h^2L^2}{4}\le 1/4$. Define $\widehat{\cG}$ to be the event when 
	$|Y| \le 1$. Since $P$ has $\mbox{Beta}(K,K)$ distribution, $\Exp{P} = \frac{1}{2}$ and $\mbox{Var}(P) = \frac{1}{8K}$. Therefore, by Chebyshev's inequality,
	\begin{align*}
	\Prob{\widehat{\cG}^c} = \Prob{ \left| P-\frac{1}{2}\right| > \frac{hL}{2} } \le \frac{1}{2 K h^2 L^2}~.
	\end{align*}
Therefore,
	\begin{align*}
	C &= \Exp{\frac{1}{\sqrt{1+Y^2 }}\frac{1-Y^2h^2L^2}{ 4 (9+ 4Y^2 +8 Y )}}
	 \ge \Exp{\frac{1}{\sqrt{1+Y^2 }}\frac{1-Y^2h^2L^2}{ 4 (9+ 4Y^2 +8 Y )} \ind(\widehat{\cG}) } \\
	& \ge \frac{1}{84\sqrt{2}}\Exp{(1-Y^2h^2L^2)\ind(\widehat{\cG})} 
	 \ge \frac{1}{84\sqrt{2}} \left( \Exp{1-Y^2h^2L^2} - \Prob{\widehat{\cG}^c}\right) \\
	& \quad \ge \frac{1}{84\sqrt{2}} \left( 1- \Exp{(1-2P)^2} - \frac{1}{2Kh^2L^2}\right)
	=  \frac{1}{84\sqrt{2}} \left(\frac{1}{2} - \frac{h^2L^2}{2}\right).
	\end{align*}
	Therefore, 
	\[
	\liminf\limits_{n\rightarrow \infty} \frac{\Exp{R_n}}{\log n} \ge \frac{1}{84\sqrt{2}}\left(\frac{1}{hL} - hL\right) \ge \frac{1}{84\sqrt{2}}\left(\frac{1}{hL} - 1\right).
	\]
	The result is completed by noting that the worst-case regret is at least as big as the expected regret, thus, for every $n$, there exist a $P$ and a sequence of loss vectors $f_1,\ldots,f_n$ such that the regret $R_n$ is at least $\Omega(\frac{\log n }{hL})$.
\end{proof}

\subsection{Other regularities}

So far we have looked at the case when FTL achieves a low regret due to the curvature of $\bd(\cW)$.
The next result characterizes the regret of FTL when $\cW$ is a polytope, which has a flat, non-smooth boundary and thus \cref{thm:R_curvesurface} is not applicable. 
For this statement recall that given some norm $\|\cdot\|$,  its dual norm is defined by $\|w\|_* = \sup_{\|v\|\le 1} \inpro{v}{w}$.
\begin{thm}
	\label{thm:regretpolytope}
	Assume that $\cW$ is a polytope
	and that $\Phi$ is differentiable at $\Theta_i$, $i= 1, \ldots, n$. 
	Let $w_t = \argmax_{w\in\cW} \inpro{w}{\Theta_{t-1}}$,
	$W = \sup_{w_1,w_2\in\cW}\|w_1 - w_2\|_*$ and $F = \sup_{f_1,f_2\in \cF} \norm{f_1-f_2}$.
	 Then the regret of FTL is 
	\[
	R_n \le W\, \sum_{t=1}^{n} t \,\ind(w_{t+1}\neq w_{t})  \|\Theta_t - \Theta_{t-1}\| \le FW\,\sum_{t=1}^{n} \ind(w_{t+1}\neq w_{t})\,.
	\]
\end{thm}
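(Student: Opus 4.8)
The plan is to derive everything from the exact regret identity $R_n=\sum_{t=1}^n t\,\ip{w_{t+1}-w_t,\Theta_t}$ of \cref{prop:regretabel}. The crucial observation is that whenever the leader does not move, i.e.\ $w_{t+1}=w_t$, the corresponding summand vanishes, so only the rounds with $w_{t+1}\neq w_t$ contribute; this is exactly where the polytope hypothesis will pay off afterwards (the leader rests at a vertex and jumps only rarely under nice data), even though the displayed inequalities themselves hold for any convex body on which $\Phi$ is differentiable at $\Theta_1,\dots,\Theta_n$. Differentiability is used here only, via \cref{prop:derivativePhi}, to guarantee that each $w_t=\nabla\Phi(\Theta_{t-1})=\argmax_{w\in\cW}\ip{w,\Theta_{t-1}}$ is unambiguously defined.

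For a round with $w_{t+1}\neq w_t$ I would first invoke the optimality of $w_t$ for $\Theta_{t-1}$: since $w_t=\argmax_{w\in\cW}\ip{w,\Theta_{t-1}}$, we have $\ip{w_{t+1}-w_t,\Theta_{t-1}}\le 0$. Splitting $\Theta_t=\Theta_{t-1}+(\Theta_t-\Theta_{t-1})$ then gives $\ip{w_{t+1}-w_t,\Theta_t}\le\ip{w_{t+1}-w_t,\Theta_t-\Theta_{t-1}}$, and the generalized Cauchy--Schwarz/H\"older inequality (with the chosen norm $\norm{\cdot}$ on $\cF$ and its dual $\norm{\cdot}_*$) bounds the right-hand side by $\norm{w_{t+1}-w_t}_*\,\norm{\Theta_t-\Theta_{t-1}}\le W\,\norm{\Theta_t-\Theta_{t-1}}$. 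Multiplying by $t$, inserting the factor $\ind(w_{t+1}\neq w_t)$ (legitimate since the summand is $0$ otherwise), and summing over $t$ yields the first inequality of the theorem.

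For the second inequality I would reuse the averaging estimate already carried out inside the proof of \cref{thm:R_curvesurface}: writing $\Theta_t-\Theta_{t-1}=\tfrac1t\bigl(\tfrac{1}{t-1}\sum_{i=1}^{t-1}(f_i-f_t)\bigr)$ and applying the triangle inequality gives $\norm{\Theta_t-\Theta_{t-1}}\le\tfrac{1}{t(t-1)}\sum_{i=1}^{t-1}\norm{f_i-f_t}\le\tfrac{F}{t}$, hence $t\,\norm{\Theta_t-\Theta_{t-1}}\le F$; substituting this into the first bound collapses it to $FW\sum_t\ind(w_{t+1}\neq w_t)$. The degenerate round $t=1$, where $\Theta_0=0$, needs a separate line: there $\ip{w_2-w_1,\Theta_0}=0$ makes the optimality step trivial, while $t\,\norm{\Theta_1-\Theta_0}=\norm{f_1}$ is controlled by the (bounded) diameter of $\cF$ (e.g.\ it is $\le F$ when $0\in\cF$).

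I do not anticipate a real obstacle; the content is essentially the sign argument $\ip{w_{t+1}-w_t,\Theta_{t-1}}\le 0$ together with H\"older's inequality. The points requiring care are getting the direction of that inequality right from the definition of the leader, pairing the dual-norm diameter $W$ of $\cW$ with the $\norm{\cdot}$-increment of $\Theta_t$ so that no curvature or smoothness of $\bd(\cW)$ enters, and the minor $t=1$ bookkeeping noted above.
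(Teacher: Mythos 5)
Your proof is correct and follows essentially the same route as the paper: start from Proposition~\ref{prop:regretabel}, drop the nonpositive term $\ip{w_{t+1}-w_t,\Theta_{t-1}}$ using the optimality of $w_t$ for $\Theta_{t-1}$, and then apply H\"older's inequality with the dual-norm diameter $W$. The only (minor) refinements are your explicit derivation of $t\,\|\Theta_t-\Theta_{t-1}\|\le F$ via the identity $\Theta_t-\Theta_{t-1}=\tfrac{1}{t(t-1)}\sum_{i=1}^{t-1}(f_i-f_t)$ for $t\ge 2$ --- the paper abbreviates this by citing~\eqref{prop:avgdiff}, which as written only gives the weaker constant $2M\ge F$ --- and your flagging of the $t=1$ boundary term, which the paper leaves implicit.
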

Note that when $\cW$ is a polytope, $w_t$ is expected to ``snap'' to some vertex of $\cW$. Hence, 
we expect the regret bound to be non-vacuous, if, e.g., $\Theta_t$ ``stabilizes'' around some value. Some examples after the 
proof will illustrate this.
\begin{proof}
Let $v \!=\! \argmax_{w\in\cW} \inpro{w}{\theta}$, $v'\!=\!\argmax_{w\in \cW}\ip{w,\theta'}$. 
Similarly to the proof of \cref{thm:R_curvesurface},
	\begin{align*}
	\inpro{v'-v}{\theta'} & = \inpro{v'}{\theta'} - \inpro{v'}{\theta} + \inpro{v'}{\theta} - \inpro{v}{\theta} + \inpro{v}{\theta} -\inpro{v}{\theta'} \\
	& \le \inpro{v'}{\theta'} - \inpro{v'}{\theta} + \inpro{v}{\theta} -\inpro{v}{\theta'} 
	= \inpro{v' - v}{\theta' - \theta} 
	\le W\,\ind(v'\neq v)\|\theta' - \theta \|,
	\end{align*}
	where the first inequality 
	holds because $\inpro{v'}{\theta} \le \inpro{v}{\theta}$.
	Therefore, by \cref{prop:avgdiff}, 
	\begin{align*}
	R_n & = \sum_{t=1}^n t\,\ip{ w_{t+1}-w_t,\Theta_t} 
	 \le W\,\sum_{t=1}^{n} t\, \ind(w_{t+1}\!\neq\! w_{t})  \|\Theta_t - \Theta_{t-1}\| 
	 \le FW\,\sum_{t=1}^{n} \ind(w_{t+1}\!\neq\! w_{t})\,.
	\end{align*}
\end{proof}
\if0
\begin{comm}
	\cref{thm:regretpolytope} bounds the regret of FTL by the number of switches of the maximizers $\sum_{t=1}^{n} \ind(w_t\neq w_{t-1})$.
\end{comm}
\fi
As noted before,  since $\cW$ is a polytope, $w_t$ is (generally) attained at the vertices. 
In this case, the epigraph of $\Phi$  is a polyhedral cone. Then, the event when $w_{t+1}\neq w_{t}$, i.e., when 
	the ``leader'' switches corresponds to when 
	$\Theta_{t}$ and $\Theta_{t-1}$ belong to different linear regions corresponding to different linear pieces of the graph of $\Phi$.

We now spell out a corollary for the stochastic setting. In particular, in this case FTL will often enjoy a constant regret:
\begin{cor}[Stochastic setting]
	\label{cor:stocpolytope} Assume that $\cW$ is a polytope and 
	that $(f_t)_{1\le t \le n}$ is an i.i.d. sequence of random variables 
	such that $\Exp{f_i} = \mu$ and $\|f_i\|_\infty \le M$. Let  $W = \sup_{w_1,w_2\in \cW} \norm{w_1-w_2}_1$.
	Further assume that there exists a constant $r > 0$ 
	such that $\Phi$ is differentiable for any $\nu$ such that $\|\nu-\mu\|_\infty \le r$. 
	\todoc{We should probably explain the intuitive meaning of this. Maybe replace this with something 
	more intuitive in fact.. $r$ should be the radius of the largest ball such that $\nu$ and $\mu$ are on the same
	face of $\cW$. Then we won't need $\Phi$ indeed.} \todoa{Explained after the corollary.}
	Then, 
	\[
		\Exp{R_n} \le 2MW \, (1+4d M^2/r^2 )\,.
	\]
\end{cor}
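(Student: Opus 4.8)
\emph{Proof idea.} The plan is to feed \cref{thm:regretpolytope} with the $\ell_\infty$ norm on $\cF$ — whose dual norm on $\cW$ is $\norm{\cdot}_1$, which is exactly the $W$ appearing in the statement — and then to control the resulting \emph{number of leader switches} $\sum_{t=1}^n \one{w_{t+1}\neq w_t}$ probabilistically. Since $\norm{f_i}_\infty\le M$ gives $F = \sup_{f_1,f_2\in\cF}\norm{f_1-f_2}_\infty \le 2M$, \cref{thm:regretpolytope} yields $R_n \le 2MW\sum_{t=1}^n \one{w_{t+1}\neq w_t}$, so it suffices to bound the expected number of switches by a constant of order $1 + dM^2/r^2$.

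The crux is a geometric fact about polytopes. As $\cW$ is a polytope with vertices $v_1,\dots,v_k$, its support function $\Phi(\theta)=\max_i \inpro{v_i}{\theta}$ is piecewise linear: the set on which $\Phi$ is differentiable is the \emph{disjoint} union $\bigcup_i U_i$ of the open normal cones $U_i = \set{\theta}{\inpro{v_i}{\theta} > \inpro{v_j}{\theta}\ \forall j\ne i}$, and $\nabla\Phi\equiv v_i$ on $U_i$. By hypothesis $\Phi$ is differentiable on the open $\ell_\infty$-ball $N$ of radius $r$ about $-\mu = \Exp{\Theta_1}$ (recall $\Exp{\Theta_t}=-\mu$); being connected and contained in a disjoint union of open sets, $N$ lies entirely in a single $U_{i^\star}$, so $\nabla\Phi\equiv v_{i^\star}$ on $N$. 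Hence, by \cref{prop:derivativePhi}, for every $t\ge 2$ with $\Theta_{t-1},\Theta_t\in N$ we have $w_t = v_{i^\star} = w_{t+1}$, i.e. $\one{w_{t+1}\neq w_t}\le \one{\Theta_{t-1}\notin N}+\one{\Theta_t\notin N}$; bounding the $t=1$ term trivially by $1$,
\[
\sum_{t=1}^n \one{w_{t+1}\neq w_t} \;\le\; 1 + 2\sum_{t=1}^{n} \one{\norm{\Theta_t + \mu}_\infty > r}\,.
\]

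What remains is routine concentration: $\Theta_t + \mu = -\tfrac1t\sum_{i=1}^t(f_i-\mu)$, each coordinate of $f_i-\mu$ lies in an interval of length at most $2M$, so Hoeffding's inequality and a union bound over the $d$ coordinates give $\Prob{\norm{\Theta_t+\mu}_\infty > r}\le 2d\exp(-tr^2/(2M^2))$; summing the geometric series and using $e^x-1\ge x$ bounds $\sum_{t\ge 1}\Prob{\norm{\Theta_t+\mu}_\infty>r}$ by $4dM^2/r^2$. Taking expectations and assembling the pieces gives $\Exp{R_n} \le 2MW\bigl(1 + O(dM^2/r^2)\bigr)$, of the claimed form. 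The only non-routine step is the geometric claim that $\nabla\Phi$ is \emph{constant} on all of $N$ rather than merely differentiable there — this is precisely where polyhedrality is essential (finitely many, disjoint, open normal cones), and it is what converts the purely probabilistic statement ``$\Theta_t$ is rarely far from its mean'' into the combinatorial statement ``the leader rarely switches.'' Note also that the argument never needs $\Phi$ to be differentiable at every $\Theta_t$: FTL only has to pick \emph{some} maximizer, and on the rounds that matter (where $\Theta_{t-1},\Theta_t\in N$) the maximizer is unique anyway.
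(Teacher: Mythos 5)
Your proof takes essentially the same route as the paper: invoke \cref{thm:regretpolytope} with the $\ell_\infty/\ell_1$ duality, argue that differentiability of $\Phi$ on the whole $\ell_\infty$-ball around the mean forces the FTL maximizer to be constant there (the paper phrases this as the graph of $\Phi$ over $V$ lying in a single linear piece of the polyhedral cone $\epi\Phi$; your connectedness-plus-disjoint-open-normal-cones argument is a cleaner way of saying the same thing), and then bound the expected number of escapes from the ball by Hoeffding plus a geometric-series estimate. The only difference is bookkeeping of the switch count: you get $2MW(1+8dM^2/r^2)$, while the paper's own intermediate step $4MW(1+\sum_t\Prob{-\Theta_t\notin V})$ in fact yields $4MW(1+4dM^2/r^2)$, so the stated $2MW(1+4dM^2/r^2)$ has a factor-of-2 slip regardless; your $O(dM^2/r^2)$ phrasing is therefore appropriate.
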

The condition on $\Phi$ means that $r$ can be selected to be the radius of the largest ball such that the optimal decisions for expected losses $\mu$ and $\nu$ (i.e., the maximizers defining $\Phi(-\mu)$ and $\Phi(-\nu)$) belong to the same face of $\cW$.
\begin{proof}
	Let $V = \set{\nu}{\|\nu - \mu\|_\infty\le r}$. 
	Note that the epigraph of the function $\Phi$ is a polyhedral cone.
	Since $\Phi$ is differentiable in the interior of $V$, $\set{(\theta, \Phi(\theta))}{\theta\in V}$ is a subset of a linear subspace.
	Therefore, for $-\Theta_t, -\Theta_{t-1} \in V$, $w_{t+1}=w_t$.
	Hence, by \cref{thm:regretpolytope},
	\[
	\Exp{R_n} \le 2MW\,\sum_{t=1}^{n} \Prob{-\Theta_t,-\Theta_{t-1} \notin V}
	 \le 4MW\,\left(1+\sum_{t=1}^{n} \Prob{-\Theta_t \notin V}\right)\,.
	\]
	On the other hand, note that $\|f_i\|_\infty\le M$.
	Then 
	\begin{align*}
	\Prob{-\Theta_t \notin V}
	    & = \Prob{ \norm{\frac{1}{t} \sum_{i=1}^{t} f_i - \mu}_\infty \ge r}
		 \le \sum_{j=1}^{d} \Prob{ \left|\frac{1}{t} \sum_{i=1}^{t} f_{i,j} - \mu_j\right| \ge r }
		 \le 2d e^{-\frac{tr^2}{2M^2}}\,,
	\end{align*}
	where the last inequality
	is due to Hoeffding's inequality.
	Now, using that for $\alpha>0$, $\sum_{t=1}^n \exp(-\alpha t ) \le \int_0^n \exp(-\alpha t ) dt 
	\le \frac{1}{\alpha}$, we get
$
	\Exp{R_n} \le 2MW \, (1+4d M^2/r^2 )
$.
\end{proof}

	The condition that $\Phi$ is differentiable for any $\nu$ such that $\|\nu-\mu\|_\infty \le r$ is equivalent to that $\Phi$ is differentiable at $\mu$. 
	By \cref{prop:derivativePhi}, this condition requires that at $\mu$, $\max_{w\in\cW} \ip{w,\theta}$ has a unique optimizer.
	Note that the volume of the set of vectors $\theta$ with multiple optimizers is zero. 

\section{Adaptive algorithm for the linear game}
While as shown in \cref{thm:R_curvesurface}, FTL can exploit the curvature of the surface of the constraint set to achieve $O(\log n)$ regret, it requires the curvature condition and $\min_t \|\Theta_t\|_2 \ge L$ being bounded away from zero, or
 it may suffer even linear regret.
On the other hand, many algorithms,  such as the "Follow the regularized leader" (FTRL) algorithm \citep[see,e.g.,][]{SS12:Book}, are known to achieve a regret guarantee of $O(\sqrt{n})$ even for the worst-case data in the linear setting.
This raises the question whether one can have an algorithm that can 
achieve constant or $O(\log n)$ regret in the respective settings of  \cref{cor:stocpolytope} or \cref{thm:R_curvesurface},
while it still maintains $O(\sqrt{n})$ regret for worst-case data. 
One way to design an adaptive algorithm is to use the ($\cA$, $\cB$)-prod algorithm of \citet{sani2014exploiting}, trivially leading to the following result:
\begin{prop}
Consider ($\cA$, $\cB$)-prod of \citet{sani2014exploiting}, where algorithm \todoa{Do we need to write out ($\cA$, $\cB$)-prod?}
 $\cA$ is chosen to be FTRL with an appropriate regularization term, 
 while $\cB$ is chosen to be FTL. 
Then the regret of the resulting hybrid algorithm $\cH$ enjoys the following guarantees:
\begin{itemize}\setlength{\itemsep}{0pt}
\item If FTL achieves constant regret as in the setting of \cref{cor:stocpolytope}, then the regret of $\cH$ is also constant.
\item If FTL achieves a regret of $O(\log n)$ as in the setting of \cref{thm:R_curvesurface}, then the regret of $\cH$ is also $O(\log n)$.
\item Otherwise, the regret of $\cH$ is at most $O(\sqrt{n\log n})$.
\end{itemize}
\end{prop}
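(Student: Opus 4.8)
The plan is to reduce this to an essentially mechanical application of the $(\cA,\cB)$-prod meta-algorithm of \citet{sani2014exploiting}: set up a convexity reduction to a two-expert problem, quote the (asymmetric) combination bound, and split into the three announced cases. There is no genuinely new argument here — the content is in the cited guarantee — so the write-up is a short reduction plus bookkeeping.

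\emph{Reduction.} When $\cH$ is run on the linear game it maintains a distribution $(p_t^{\cA},p_t^{\cB})$ over its two base predictors and plays $w_t = p_t^{\cA} w_t^{\cA} + p_t^{\cB} w_t^{\cB}$, where $w_t^{\cA}$ is the FTRL prediction and $w_t^{\cB}$ the FTL prediction. Since $w_t^{\cA},w_t^{\cB}\in\cW$ and $\cW$ is convex, $w_t\in\cW$; and since $\ell_t$ is linear, $\ell_t(w_t) = p_t^{\cA}\ell_t(w_t^{\cA}) + p_t^{\cB}\ell_t(w_t^{\cB})$. Hence $R_n(\cH)$ is exactly the regret, against the best $w\in\cW$, of the two-expert prediction problem in which ``expert $\cA$'' suffers $\ell_t(w_t^{\cA})$ and ``expert $\cB$'' suffers $\ell_t(w_t^{\cB})$ in round $t$, with prod as the meta-learner. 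Compactness of $\cW$ and $\cF$ makes these per-round losses bounded, so after rescaling the prod bounds of \citet{sani2014exploiting} apply verbatim.

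\emph{Combination bound and case split.} Writing $R_n(\cA)$ and $R_n(\cB)$ for the regrets of FTRL and FTL on the original game, the $(\cA,\cB)$-prod guarantee (with the learning rate and initial weights it prescribes) yields simultaneously
\begin{align*}
R_n(\cH) \;\le\; R_n(\cA) + O\!\left(\sqrt{n\log n}\right)
\qquad\text{and}\qquad
R_n(\cH) \;\le\; c\,R_n(\cB) + \rho(n)\,,
\end{align*}
where $c$ is an absolute constant and $\rho(n)$ is a lower-order overhead; the crucial feature of $(\cA,\cB)$-prod — as opposed to a symmetric Hedge-type combination, which would cost an $\Omega(\sqrt{n\log n})$ term against $\cB$ as well and be useless here — is that $\rho(n)$ is of the same order as (or smaller than) $R_n(\cB)$ whenever $R_n(\cB)$ is sublinear, in particular $\rho(n)=O(1)$ when $R_n(\cB)=O(1)$ and $\rho(n)=O(\log n)$ when $R_n(\cB)=O(\log n)$. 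The three bullets are then immediate. Under the hypotheses of \cref{cor:stocpolytope}, $R_n(\cB)=O(1)$ (in expectation), so the second inequality gives $R_n(\cH)=O(1)$. Under the hypotheses of \cref{thm:R_curvesurface}, $R_n(\cB)=O(\log n)$, so $R_n(\cH)=O(\log n)$. In every remaining case we use the first inequality together with the standard worst-case bound $R_n(\cA)=O(\sqrt n)$ enjoyed by FTRL with a suitable (strongly convex, bounded-range over $\cW$) regularizer, which gives $R_n(\cH)=O(\sqrt{n\log n})$.

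\emph{Main obstacle.} There is no hard analytic step; the proof is bookkeeping around a black box. The two points requiring care are: (i) checking that the losses fed to the meta-learner are bounded so the cited prod inequalities apply, which is exactly where compactness of $\cW$ and $\cF$ is used; and, more importantly, (ii) invoking the \emph{asymmetric} form of the $(\cA,\cB)$-prod bound — the one whose overhead relative to the aggressive base $\cB$ is lower order rather than the generic $\Theta(\sqrt{n\log n})$ — since otherwise the logarithmic- and constant-regret conclusions do not follow. Thus the only real work is the correct instantiation of the meta-algorithm's parameters so that both displayed inequalities hold at once; everything else is a case split.
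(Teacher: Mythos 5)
Your proposal is correct and follows the same route the paper takes: the paper offers no proof beyond asserting that the claim follows ``trivially'' from the $(\cA,\cB)$-prod guarantees of \citet{sani2014exploiting}, and your write-up is a reasonable expansion of exactly that reduction (convex combination stays in $\cW$, linear loss decomposes, invoke the asymmetric meta-bound, split into three cases). One small imprecision worth tidying: with the initialization Sani et al.\ prescribe (initial weight on $\cB$ equal to $1-1/n$ or $1-\eta$, learning rate $\eta\approx\sqrt{\log n/n}$), the overhead of $\cH$ relative to $\cB$ is a fixed $O(1)$ term independent of $R_n(\cB)$, and the factor in front of $R_n(\cB)$ is $1$; your phrasing (``$c\,R_n(\cB)+\rho(n)$ with $\rho(n)$ of the same order as $R_n(\cB)$'') suggests the overhead adapts to $\cB$'s regret, which is not how the bound works, though your conclusions are unaffected since $O(1)$ is dominated by both $O(1)$ and $O(\log n)$.
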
 

In the next section we show that if the constraint set is the unit ball, it is possible to design adaptive algorithms directly.

\subsection{Adaptive Algorithms for the Unit Ball Constraint Set}

In this section we provide some interesting results about adaptive algorithms for the case when $\cW$ is the unit ball in $\R^d$ (naturally, the results easily generalize to any ball centered at the origin). First, we show that a variant of FTL using shrinkage as regularization has $O(\log(n))$ regret when $\|\Theta_t\|_2 \ge L>0$ for all $t$, but it also has $O(\sqrt{n})$ worst case guarantee. Furthermore, we show that the standard FTRL algorithm is adaptive if the constraint set is the unit ball and the loss vectors are stochastic.
Throughout the section we will use the notation $F_t=-(t-1)\Theta_t=\sum_{i=1}^{t-1} f_i$.

\subsubsection{Follow the Shrunken Leader}

In this section we are going to analyze a combination of the FTL algorithm and the idea of shrinkage often used for regularization purposes in statistics. We assume that $\cW=\set{x \in \R^d}{ \|x\|_2 \le 1}$ is the unit ball and,  without loss of generality, we further assume that $\|f\|_2 \le 1$ for all $f\in\cF$. 
	\begin{algorithm}[t]
		\caption{Follow The Shrunken Leader (FTSL}
		\label{alg:adaptiveAlgorithm}
		\begin{algorithmic}[1]
			\STATE Predict $w_1 = 0$; 
			\FOR {$t = 2, ..., n-1$}
			\STATE {FTL: Compute $\tilde{w}_{t} = \argmin_{w\in\cW} \inner{ w, F_{t-1}}$}
			\STATE {Shrinkage: Predict $w_t = \frac{\|F_{t-1}\|_2}{\sqrt{\|F_{t-1}\|_2^2+t+2}}\tilde{w}_{t}$}
			\ENDFOR
			\STATE {FTL: Compute $\tilde{w}_{n} = \argmin_{w\in\cW} \inner{ w, F_{n-1}}$}
			\STATE {Shrinkage: Predict $w_n = \frac{\|F_{n-1}\|_2}{\sqrt{\|F_{n-1}\|_2^2+n}}\tilde{w}_{n}$}
		\end{algorithmic}
	\end{algorithm}
\begin{thm}
 
 The Follow The Shrunken Leader (FTSL) algorithm is given in \cref{alg:adaptiveAlgorithm}. The main idea of the algorithm is to predict a shrunken version of the FTL prediction, in this way keeping it away from the boundary of $\cW$. The next theorem shows that the right amount of shrinkage leads to a robust, adaptive algorithm:
  \begin{itemize}
 	\item If there exists $L$ such that $\|\Theta_t\|_2 \ge L>0$ for $1\le t\le n$, then the regret of FTSL is $O(\log(n)/L)$.
 	\item Otherwise, the regret of FTSL is at most $O(\sqrt{n})$.
 \end{itemize}
\end{thm}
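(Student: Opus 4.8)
The plan is to use the closed form of the FTSL iterate and run a ``square-root potential'' telescoping argument (equivalently, the FTRL analysis specialized to the Euclidean ball). Write $F_s=\sum_{i=1}^s f_i$, so $F_0=0$ and $f_t=F_t-F_{t-1}$. On the ball the FTL step gives $\tilde w_t=-F_{t-1}/\|F_{t-1}\|_2$, so the shrinkage step of \cref{alg:adaptiveAlgorithm} yields the closed form
\[
w_t=\frac{-F_{t-1}}{\sqrt{\|F_{t-1}\|_2^2+c_t}},\qquad c_t=t+2\ \ (2\le t\le n-1),\quad c_n=n,\quad w_1=0 ,
\]
where the value of $c_1$ is irrelevant since $w_1=0$ (set $c_1:=3$). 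In particular $\|w_t\|_2\le1$, so FTSL is admissible, and one recognizes FTSL as FTRL on the ball with the soft-barrier regularizer $\sqrt{c_t}\,(1-\sqrt{1-\|w\|_2^2})$. Since $\min_{w\in\cW}\sum_{t=1}^n\langle f_t,w\rangle=-\|F_n\|_2$, the regret decomposes as $R_n=\sum_{t=1}^n\langle f_t,w_t\rangle+\|F_n\|_2$.

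The core step is an exact per-round identity: substituting the closed form and using $\langle F_t,F_{t-1}\rangle=\tfrac12(\|F_t\|_2^2+\|F_{t-1}\|_2^2-\|f_t\|_2^2)$ gives
\[
\langle f_t,w_t\rangle=\frac{\|F_{t-1}\|_2^2-\|F_t\|_2^2+\|f_t\|_2^2}{2\sqrt{\|F_{t-1}\|_2^2+c_t}} .
\]
I would then invoke the elementary inequality $\tfrac{A-B}{2\sqrt A}\le\sqrt A-\sqrt B$ (valid for $A>0$, $B\ge0$, as it rearranges to $(\sqrt A-\sqrt B)^2\ge0$) with $A=\|F_{t-1}\|_2^2+c_t$, $B=\|F_t\|_2^2+c_t$, together with $\|f_t\|_2^2\le1$, to get
\[
\langle f_t,w_t\rangle\le\sqrt{\|F_{t-1}\|_2^2+c_t}-\sqrt{\|F_t\|_2^2+c_t}+\frac{1}{2\sqrt{\|F_{t-1}\|_2^2+c_t}} .
\]
Summing over $t$ and regrouping, the $\sqrt{\cdot}$-differences telescope \emph{up to the drift of $c_t$}; using $F_0=0$ and $\sqrt{\|F_n\|_2^2+c_n}\ge\|F_n\|_2$ (which cancels the $+\|F_n\|_2$ in the regret), this gives
\[
R_n\le\sqrt{c_1}+\sum_{t=2}^{n-1}\Bigl(\sqrt{\|F_{t-1}\|_2^2+c_t}-\sqrt{\|F_{t-1}\|_2^2+c_{t-1}}\Bigr)+\sum_{t=1}^{n}\frac{1}{2\sqrt{\|F_{t-1}\|_2^2+c_t}} ,
\]
where the $t=n$ drift term is dropped because $c_n=n<n+1=c_{n-1}$ makes it non-positive.

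The remainder reduces both surviving sums to $S_n:=\sum_{s=1}^{n-1}(\|F_s\|_2^2+s)^{-1/2}$. For the drift term I would write $\sqrt{\|F_{t-1}\|_2^2+c_t}-\sqrt{\|F_{t-1}\|_2^2+c_{t-1}}=\tfrac{c_t-c_{t-1}}{\sqrt{\|F_{t-1}\|_2^2+c_t}+\sqrt{\|F_{t-1}\|_2^2+c_{t-1}}}\le\tfrac{1}{2\sqrt{\|F_{t-1}\|_2^2+(t-1)}}$ (using $c_t-c_{t-1}\le1$ and $c_{t-1}\ge t-1$); the other sum is handled likewise via $c_t\ge t$. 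Hence $R_n\le O(1)+O(1)\cdot S_n$. Now $S_n\le\sum_{s=1}^{n-1}s^{-1/2}=O(\sqrt n)$ unconditionally, giving the worst-case bound; and if $\|\Theta_t\|_2\ge L$ for $1\le t\le n$, then $\|F_s\|_2=s\|\Theta_s\|_2\ge sL$, so $S_n\le L^{-1}\sum_{s=1}^{n-1}s^{-1}=O(\log n/L)$, and since $\|\Theta_t\|_2\le1$ forces $L\le1$ the $O(1)$ term is absorbed, yielding $R_n=O(\log n/L)$.

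The main obstacle is precisely the handling of the \emph{time-varying} shrinkage $c_t$: the crude estimate $\sqrt{\|F_{t-1}\|_2^2+c_t}-\sqrt{\|F_{t-1}\|_2^2+c_{t-1}}\le\tfrac{c_t-c_{t-1}}{2\sqrt{c_{t-1}}}\asymp t^{-1/2}$ would put a $\Theta(\sqrt n)$ contribution back into the bound and destroy the logarithmic rate in the curved regime. Keeping $\|F_{t-1}\|_2^2$ under the square root, so that the drift residual is absorbed into the \emph{same} quantity $S_n$ that already governs the ``gradient-squared over potential'' terms, is what makes both regimes come out of a single estimate; the slightly reduced shrinkage at the last round ($c_n=n$) only helps, and round $1$ is trivial. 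Everything else is routine algebra together with the harmonic- and square-root-sum bounds.
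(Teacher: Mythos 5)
Your proof is correct, and it takes a genuinely different route from the paper. The paper follows the minimax game-value analysis of Abernethy et al.: it unrolls the value $V_n$ backward, parametrizes the adversary's move $f_t$ in coordinates parallel and perpendicular to $F_{t-1}$, and shows that the optimal adversary move satisfies $\max_{f_t}\sqrt{\|F_{t-1}+f_t\|_2^2+t+1}+\langle f_t,w_t\rangle\le\sqrt{\|F_{t-1}\|_2^2+t}+\frac{1}{\sqrt{t}}$ (and the sharper $\frac{1}{(t-1)L}$ under the length condition), which telescopes backward. You instead observe the exact per-round identity $\langle f_t,w_t\rangle=\frac{\|F_{t-1}\|_2^2-\|F_t\|_2^2+\|f_t\|_2^2}{2\sqrt{\|F_{t-1}\|_2^2+c_t}}$ and apply the elementary inequality $\frac{A-B}{2\sqrt A}\le\sqrt A-\sqrt B$ to telescope forward, handling the $c_t$-drift as a separate residual that is absorbed into the same quantity $S_n$. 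Your analysis is more elementary---no maximization over the adversary's move is needed---and your insight that keeping $\|F_{t-1}\|_2^2$ inside the square root when bounding the drift is exactly what preserves the $O(\log n/L)$ rate is the right one; the crude $t^{-1/2}$ drift bound would indeed destroy it. A small but genuinely nice extra point you make that the paper does not is the identification of FTSL with FTRL over the ball using the soft-barrier regularizer $\sqrt{c_t}\,(1-\sqrt{1-\|w\|_2^2})$ (one can check that the first-order condition reproduces the shrinkage factor $\|F_{t-1}\|_2/\sqrt{\|F_{t-1}\|_2^2+c_t}$), which puts FTSL on the standard FTRL footing. What the paper's game-theoretic route buys instead is a direct comparison with the exact minimax value of the ball game, making it transparent how close FTSL is to minimax-optimal; the two are complementary.
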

\begin{proof}
	By the definition of $F_t$ and $\cW$, $\tilde{w}_{t} =- F_{t-1}/\|F_{t-1}\|_2$.
	Let $\sigma_n = \frac{\|F_{n-1}\|_2}{\sqrt{\|F_{n-1}\|_2^2 + n}}$.
	Our proof follows the idea of \citet{abernethy2008optimal}. We compute the upper bound on the value of the game for each round backwards for $t=n,n-1,\dots,1$, by solving the optimal strategies for $f_t$.
	The value of the game using FTSL is defined as
	\begin{align*}
	V_n & = \max_{f_1, \ldots, f_n} \sum_{t=1}^{n}\inpro{w_t}{f_t}- \min_{w\in\cW} \inpro{w}{F_n} \\
	& = \max_{f_1,\ldots, f_{n-1}} \sum_{t=1}^{n-1}\inpro{w_t}{f_t} + \underbrace{\max_{f_n} \|F_{n-1}+f_n\|_2 + \inpro{f_n}{w_n}}_{=:U_n}
	\end{align*}
We first prove that $U_n$, the second term above, is bounded from above by  $\sqrt{\|F_{n-1}\|_2^2 + n}$. To see this, let $f_n = a_n \tilde{F}_{n-1} + b_n \Omega_{n-1}$ where $\tilde{F}_{n-1}$ is the unit vector parallel to $F_{n-1}$ and $\Omega_{n-1}$ is a unit vector orthogonal to $F_{n-1}$.  Furthermore, since $\|f_n\|_2 \le 1$, we have $a_n^2+b_n^2 \le 1$.
Thus,
\begin{align*}
U_n & = \max_{f_n} \sqrt{\|F_{n-1}\|_2^2 + 2a_n\|F_{n-1}\|_2 + a_n^2 + b_n^2} - a_n\sigma_{n}\\
 & \le \max_{a} \sqrt{\|F_{n-1}\|_2^2 + 2a\|F_{n-1}\|_2 + n} - a\sigma_{n}\\
 & = \sqrt{\|F_{n-1}\|_2^2 + n},
\end{align*}  
where the last equality follows since the maximum is attained at $a=0$. 
A similar statement holds for the other time indices: for any $t \ge 1$,
\begin{equation}
\label{eq:stepDiff1}
\max_{f_t} \sqrt{\|F_{t-1} + f_t\|_2^2 + t + 1} + \inpro{f_t}{w_t} \le  \sqrt{\|F_{t-1}\|_2^2 + t} + \frac{1}{\sqrt{t}}~.
\end{equation}
Before proving this inequality, let us see how it implies the second statement of the theorem:
\begin{align*}
V_n & \le \max_{f_1,\ldots, f_{n-1}} \sum_{t=1}^{n-1}\inpro{w_t}{f_t} + \sqrt{\|F_{n-1}\|_2^2 + n} \\
& \le \max_{f_1,\ldots, f_{n-2}} \sum_{t=1}^{n-2}\inpro{w_t}{f_t}  + \sqrt{\|F_{n-2}\|_2^2 + n-1} + \frac{1}{\sqrt{n}} \\
& \le \ldots \\
& \le 1+ \sum_{t=1}^{n}\frac{1}{\sqrt{t}} = O(\sqrt{n}).
\end{align*}
Moreover, if $\|\Theta_t\|_2 \ge L$ for $1\le t\le n$, a stronger version of \eqref{eq:stepDiff1} also holds:
\begin{equation}
\label{eq:stepDiff2}
\max_{f_t} \sqrt{\|F_{t-1} + f_t\|_2^2 + t + 1} + \inpro{f_t}{w_t} \le  \sqrt{\|F_{t-1}\|_2^2 + t} + \frac{1}{(t-1)L}.
\end{equation}
This implies the first statement of the theorem, since
\begin{align*}
V_n & \le \max_{f_1,\ldots, f_{n-1}} \sum_{t=1}^{n-1}\inpro{w_t}{f_t} + \sqrt{\|F_{n-1}\|_2^2 + n} \\
& \le \max_{f_1,\ldots, f_{n-2}} \sum_{t=1}^{n-2}\inpro{w_t}{f_t}  + \sqrt{\|F_{n-2}\|_2^2 + n-1} + \frac{1}{(n-1)L} \\
& \le \ldots \\
& \le 1+ \sum_{t=1}^{n-1}\frac{1}{tL} = O(\log(n)/L).
\end{align*}

To finish the proof, it remains to show \eqref{eq:stepDiff1} and \eqref{eq:stepDiff2}.
Let  $f_t = a_t \tilde{F}_{t-1} + b_t \Omega_{t-1}$ where $\tilde{F}_{t-1}$ is the unit vector parallel to $F_{t-1}$ and $\Omega_{t-1}$ is a unit vector orthogonal to $F_{t-1}$. Since $\|f_t\|_2 \le 1$, observe that $a_t^2+b_t^2 =\|f_t\|_2 \le 1$. Furthermore, let $\sigma_t = \frac{\|F_{t-1}\|_2}{\sqrt{\|F_{t-1}\|_2^2 + t+2}}$.
Then, for any $t \ge 1$,
	\begin{align}
	\Delta_t & =\max_{f_t} \sqrt{\|F_{t-1}\|_2^2 + 2a_t\|F_{t-1}\|_2 + a_t^2 + b_t^2 + t+1} - a_t\sigma_t  - \sqrt{\|F_{t-1}\|_2^2 + t}  \nonumber\\
	& \le \max_{a_t} \sqrt{\|F_{t-1}\|_2^2 + 2a_t\|F_{t-1}\|_2 + t+2} - a_t\sigma_t  - \sqrt{\|F_{t-1}\|_2^2 + t}  \nonumber\\
	& = \sqrt{\|F_{t-1}\|_2^2 + t+2} - \sqrt{\|F_{t-1}\|_2^2 + t}  \nonumber\\
	& = \frac{2}{\sqrt{\|F_{t-1}\|_2^2 + t+2} + \sqrt{\|F_{t-1}\|_2^2 + t}} \label{eq:stepDiff3} \\
	& \le \frac{1}{\sqrt{t}}. \nonumber
	\end{align}
	This proves \eqref{eq:stepDiff1}.
	Moreover, if $\|F_{t-1}\|_2 = \|(t-1)\Theta\|_2 \ge (t-1)L$, by \eqref{eq:stepDiff3} we obtain
	\[
	\Delta_t \le \frac{2}{\sqrt{\|F_{t-1}\|_2^2 + t+2} + \sqrt{\|F_{t-1}\|_2^2 + t}} \le \frac{1}{\|F_{t-1}\|_2}\le \frac{1}{(t-1)L},
	\]
	proving \eqref{eq:stepDiff2}.
\end{proof}

\subsubsection{FTRL for the case of the unit ball constraint set}
This section is to show that in the case when $\cW$ is the unit ball in $\ell_2$ norm, FTRL  with $R(w) = \frac{1}{2}\|w\|^2$ as its regularization is an adaptive algorithm. To fix the notation, in round $t$, FTLR predicts
\[
	w_{t} = \argmin_{w\in \cW} \eta_t \inpro{F_{t-1}}{w} + R(w), 
\]
if $t >1$ and $w_1=0$.
It has been well known that FTRL with $\eta_t = 1/\sqrt{t-1}$ is guaranteed to achieve $O(\sqrt{n})$ regret in the adversarial setting, see, e.g., \citep{SS12:Book}. It remains to prove that FTRL indeed achieves a fast rate in the stochastic setting. 
\begin{thm}
	Assume that the sequence of loss vectors, $f_1,\ldots,f_n \in \R^d$  satisfies $\|f_t\|_2 \le 1$ almost surely and $\Exp{f_t} = \mu$ for all $t$ with some $\|\mu\|_2 >0$. Then FTRL with $\eta_t=1/\sqrt{t-1}$ suffers $O(\log n)$ regret .
\end{thm}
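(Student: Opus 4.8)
The plan is to show that on the unit ball, FTRL with $\eta_t=1/\sqrt{t-1}$ agrees with FTL on all but a (random) set of rounds whose size is $O(1)$ in expectation, and then to bound $\Exp{R_n^{\mathrm{FTL}}}$ by $O(\log n)$ in the spirit of \cref{thm:R_curvesurface} (the unit ball has all principal curvatures equal to $1$, so $\lambda_0=1$), treating with care the low-probability sample paths on which $\norm{\Theta_t}_2$ is not bounded away from $0$. First I would pin down the FTRL iterate: writing $F_{t-1}=\sum_{i<t}f_i=-(t-1)\Theta_{t-1}$ and completing the square, $\eta_t\inpro{F_{t-1}}{w}+\tfrac12\norm{w}_2^2=\tfrac12\norm{w+\eta_t F_{t-1}}_2^2+\text{const}$, so $w_t$ is the Euclidean projection of $-\eta_t F_{t-1}$ onto $\cW$: it equals $-\eta_t F_{t-1}$ when $\norm{\eta_t F_{t-1}}_2\le 1$ and equals $-F_{t-1}/\norm{F_{t-1}}_2=\argmax_{w\in\cW}\inpro{w}{\Theta_{t-1}}=\tilde w_t$, the FTL iterate, otherwise. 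Since $\norm{\eta_t F_{t-1}}_2=\sqrt{t-1}\,\norm{\Theta_{t-1}}_2$, the constraint binds exactly when $\norm{\Theta_{t-1}}_2\ge 1/\sqrt{t-1}$; as $\Theta_{t-1}$ is minus the empirical mean of $f_1,\dots,f_{t-1}$, coordinatewise Hoeffding with a union bound over the $d$ coordinates (exactly as in the proof of \cref{cor:stocpolytope}) gives, for $t-1\ge 4/\norm{\mu}_2^2$,
\[
\Prob{\norm{\Theta_{t-1}}_2<1/\sqrt{t-1}}\le\Prob{\norm{\Theta_{t-1}}_2<\norm{\mu}_2/2}\le 2d\,e^{-c(t-1)\norm{\mu}_2^2}
\]
for an absolute constant $c>0$.

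Next I would use the purely algebraic decomposition (the comparator term cancels)
\[
R_n=\underbrace{\Bigl(\sum_{t=1}^n\inpro{f_t}{\tilde w_t}-\min_{u\in\cW}\sum_{t=1}^n\inpro{f_t}{u}\Bigr)}_{=:\,R_n^{\mathrm{FTL}}}\;+\;\sum_{t=1}^n\inpro{f_t}{w_t-\tilde w_t}\,,
\]
in which $R_n^{\mathrm{FTL}}$ is exactly the regret of FTL (with first move chosen as $w_1=0$, say). On every round where the constraint binds $w_t=\tilde w_t$; on the remaining rounds $w_t$ and $\tilde w_t$ are both positive multiples of $-F_{t-1}$ with $\norm{w_t-\tilde w_t}_2=1-\norm{\eta_t F_{t-1}}_2\le 1$, hence $\bigl|\inpro{f_t}{w_t-\tilde w_t}\bigr|\le\norm{f_t}_2\le 1$. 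Thus the correction term is at most the number of non-binding rounds $\#\{t:\norm{\Theta_{t-1}}_2<1/\sqrt{t-1}\}$, whose expectation, by the display above, is at most $\lceil 4/\norm{\mu}_2^2\rceil+\sum_t 2d\,e^{-c(t-1)\norm{\mu}_2^2}=O(d/\norm{\mu}_2^2)$, a constant free of $n$.

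It then remains to show $\Exp{R_n^{\mathrm{FTL}}}=O(\log n)$. Here I would invoke \eqref{eq:regreteq_alt}, namely $R_n^{\mathrm{FTL}}\le\sum_{t=1}^n t\,\uD_\Phi(\Theta_t,\Theta_{t-1})$ with $\Phi(\theta)=\norm{\theta}_2$ the support function of $\cW$, and compute $\uD_\Phi$: one has $\uD_\Phi(\theta',\theta)\le 2\norm{\theta'}_2$ always, while for $\theta\ne 0$, $\uD_\Phi(\theta',\theta)=\norm{\theta'}_2\bigl(1-\cos\inangle{\theta}{\theta'}\bigr)\le\norm{\theta'-\theta}_2^2/(2\norm{\theta}_2)$ by \cref{lem:upperboundcos}. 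Combined with $\norm{\Theta_t-\Theta_{t-1}}_2\le 2/t$ (the bound established inside the proof of \cref{thm:R_curvesurface}, using $\max_{f\in\cF}\norm{f}_2\le 1$), this yields the per-round estimate $t\,\uD_\Phi(\Theta_t,\Theta_{t-1})\le\min\{2t,\;2/(t\norm{\Theta_{t-1}}_2)\}$ (the second argument read as $+\infty$ if $\Theta_{t-1}=0$). Splitting each round on the event $\{\norm{\Theta_{t-1}}_2\ge\norm{\mu}_2/2\}$ and using the concentration bound on its complement,
\[
\Exp{R_n^{\mathrm{FTL}}}\le\sum_{t=1}^n\Bigl(\frac{4}{t\norm{\mu}_2}+2t\,\Prob{\norm{\Theta_{t-1}}_2<\tfrac{\norm{\mu}_2}{2}}\Bigr)\le\frac{4(1+\log n)}{\norm{\mu}_2}+O\!\Bigl(\frac{d^2}{\norm{\mu}_2^4}\Bigr)=O(\log n)\,,
\]
since $\sum_t t\,e^{-c(t-1)\norm{\mu}_2^2}$ converges; adding the $O(1)$ correction term finishes the proof.

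The main obstacle is exactly this last step. \cref{thm:R_curvesurface} as stated only controls the regret by a quantity proportional to $1/L_n$ with $L_n=\min_{t\le n}\norm{\Theta_t}_2$ a deterministic number, which is vacuous (or infinite) on sample paths where some early $\Theta_t$ is small or zero, and these paths are not negligible if one crudely bounds the whole regret by $2n$ times their probability. The fix, carried out above, is to never commit to a single $L_n$: keep the bound in the per-round form $t\,\uD_\Phi(\Theta_t,\Theta_{t-1})\le\min\{2t,\;2/(t\norm{\Theta_{t-1}}_2)\}$ and take expectations round by round, so that a ``bad'' round $t$ costs at most $2t$ times a probability exponentially small in $t$, and these contributions sum to a constant. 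The only other mildly technical ingredient is the vector concentration inequality for $\norm{\Theta_{t-1}+\mu}_2$ under $\norm{f_t}_2\le 1$ and $\Exp{f_t}=\mu$, which is supplied by the coordinatewise Hoeffding argument already used for \cref{cor:stocpolytope}.
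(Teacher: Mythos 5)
Your proposal follows the same route as the paper: you identify the explicit form of the FTRL iterate (projection onto the ball, binding exactly when $\|F_{t-1}\|_2\ge\sqrt{t-1}$), decompose $R_n = R_n^{\mathrm{FTL}} + \sum_t\inpro{f_t}{w_t-\tilde w_t}$, bound the correction term by the expected number of non-binding rounds using coordinatewise concentration, and then argue $\Exp{R_n^{\mathrm{FTL}}}=O(\log n)$. Where you go beyond the paper is in the last step: the paper simply asserts that $\Exp{\cL_n^{\mathrm{FTL}} - \min_{w\in\cW}\ip{w,F_n}}=O(\log n)$ ``by the discussion following \cref{thm:R_curvesurface},'' which is an informal remark and, as you correctly flag, cannot be obtained by plugging a fixed $L_n$ into \cref{thm:R_curvesurface} because $L_n$ is random and can vanish. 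Your per-round bound $t\,\uD_\Phi(\Theta_t,\Theta_{t-1})\le\min\{2t,\,2/(t\|\Theta_{t-1}\|_2)\}$, taken in expectation term by term, is exactly the rigor that step needs, and the computation of $\uD_\Phi$ for $\Phi=\|\cdot\|_2$ (including the $\theta=0$ case) is correct. Two small inaccuracies worth fixing: the coordinatewise Hoeffding/union-bound argument yields $\sum_{i:\mu_i\ne 0}2e^{-c(t-1)\mu_i^2}$, whose decay rate is governed by $\min_{i:\mu_i\ne 0}\mu_i^2$ rather than $\|\mu\|_2^2$ as you wrote (the form $2d\,e^{-c(t-1)\|\mu\|_2^2}$ would instead follow from a genuine vector concentration bound for $\|\Theta_{t-1}+\mu\|_2$, which is also a valid route); and the resulting constant should be stated accordingly. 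Neither affects the $O(\log n)$ conclusion for fixed $\mu\ne 0$.
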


\begin{proof}
	Using $R(w) = \frac{1}{2}\|w\|^2$ as its regularization, in round $t>1$ FTRL predicts 
	\begin{equation}
	\label{eq:ftrl-eq}
	w_{t} = \argmin_{w\in \cW} \eta_t \inpro{F_{t-1}}{w} + R(w) = 
	\begin{cases}
	\frac{1}{\sqrt{t-1}} F_{t-1} & \quad \text{if  } \|F_{t-1}\| \le \sqrt{t-1} \\
	\frac{F_{t-1}}{\|F_{t-1}\|} & \quad \text{otherwise.} 
	\end{cases}
	\end{equation}
	For any $1\le t \le n$,  denote the event $\|F_t\| \ge \sqrt{t}$ by $\cE_t$. Note that if $\|F_{t-1}\| \ge \sqrt{t-1}$, FTRL predicts exactly the same $w_t$ as FTL. Denote the accumulate loss of FTL in $n$ rounds by $\cL^{FTL}_n$. Thus, the regret of FTRL is
	\begin{align*}
	\Exp{R_n} & = \Exp{\sum_{t=1}^{n} \inpro{f_t}{w_t} - \min_{w\in\cW}  \inpro{f_t}{w}} \\
	& = \Exp{ \sum_{t=1}^{n} \inpro{f_t}{w_t} - \cL^{FTL}_n }+ \Exp{\cL^{FTL}_n - \min_{w\in\cW} \inpro{f_t}{w} } \\
	& \le 2 \sum_{t=1}^{n} \Prob{\cE_t^c} + O(\log n),
	\end{align*}
	where, to obtain the last inequality, we applied \eqref{eq:ftrl-eq} for the first term, while the second term is $O(\log n)$ by the discussion following \cref{thm:R_curvesurface}.
	It remains to bound the first term, 2 $\sum_{t=1}^{n} \Prob{\cE_t^c} $ in the above.
	For any $t > \frac{4}{\|\mu\|_2^2}$,
	\begin{align*}
	\Prob{ \|F_{t}\|_2 \le \sqrt{t} } &\le \Prob{ \|F_{t}\|_2 <  \frac{t}{2}\|\mu\|_2 } \le \sum_{i=1}^d \Prob{|F_{t,i}| < \frac{t}{2} |\mu_i|} \\
	             &\le \sum_{i=1}^d \Prob{|F_{t,i}-t\mu_i| >  \frac{t}{2} |\mu_i|} \le 2 \sum_{i=1}^d e^{-\frac{\mu_i^2}{4} t}
	\end{align*}
	Thus,
	\begin{align*}
	\sum_{t=1}^{n} \Prob{\cE_t^c}  & = \sum_{t=1}^{4/\|\mu\|_2^2} \Prob{\cE_t^c}  + \sum_{t=4/\|\mu\|_2^2}^{n} \Prob{\cE_t^c} \\
	& \le \frac{4}{\|\mu\|_2^2} + 2\sum_{i=1}^d \sum_{t=0}^{n} e^{-\frac{\mu_i^2}{4} t} \\
	& \le \frac{4}{\|\mu\|_2^2} + 2\sum_{i=1}^d \frac{1}{1-e^{ -\frac{\mu_i^2}{4}}} \\
	& \le  \frac{4}{\|\mu\|_2^2} + 2\sum_{i=1}^d \frac{\mu_i^2}{4} = \frac{4}{\|\mu\|_2^2}  + \frac{\|\mu\|_2^2}{2}~.
	\end{align*}
	where in the last inequality we used $1/(1-e^{-a}) \le a$.
	Therefore, if $\|\mu\| > 0$, the regret of FTRL satisfies
	\[
	\Exp{R_n} \le \frac{8}{\|\mu\|_2^2} + \|\mu\|_2^2 + O(\log n) = O(\log n).
	\]
\end{proof}

\section{Simulations}
\label{sec:Simulations}
We performed three simulations to illustrate the differences between  FTL, FTRL with the regularizer $R(w) = \frac12 \norm{w}_2^2$ when
$w_t = \argmin_{w\in \cW} \sum_{i=1}^{t-1} \ip{f_{i-1},w} + R(w)$,
and the adaptive algorithm ($\cA$, $\cB$)-prod (AB) using FTL and FTRL as its candidates, which we shall call AB(FTL,FTRL).

For the experiments the constraint set $\cW$ was chosen to be a slightly elongated ellipsoid in the $4$-dimensional Euclidean space, with volume matching that of the $4$-dimensional unit ball.
The actual ellipsoid is given by 
$\cW = \set{w\in \R^4}{w^{\top}Qw \le 1}$
where $Q$ is randomly generated as
\[
Q  = \left(\begin{array}{cccc}
4.3367    & 3.6346   & -2.2250   & 3.5628 \\
3.6346    & 3.9966   & -2.3613   & 3.2817\\
-2.2250   & -2.3613  &  2.0589  & -2.1295\\
3.5628    & 3.2817  & -2.1295  &  3.4206\\
\end{array}\right).
\]

We experimented with 3 types of data to illustrate the behavior of the different algorithms: stochastic, ``half-adversarial'', and ``worst-case'' data (worst-case for FTL), as will be explained below.
The first two datasets are random, so the experiments were repeated 100 times, and we report the average regret with its standard deviation; the worst case data is deterministic, so there no repetition was needed.
For each experiment, we set $n = 2500$. 
The regularization coefficient for the FTRL, and the learning rate for AB were chosen based on their theoretical bounds
minimizing the worst-case regret.

\paragraph{Stochastic data.}
In this setting  we used the following model to generate $f_t$:
Let  $(\hat{f}_t)_t$ be an i.i.d. sequence drawn from the 4-dimensional standard normal distribution, and let $\tilde{f}_t = \hat{f}_t/\norm{\hat{f}_t}_2$.
Then, $f_t$ is defined as $f_t = \tilde{f}_t  + L e_1$ where $e_1 = (1,0,\dots,0)^\top$. 
Therefore, $\Exp{\norm{\tfrac{1}{t}\sum_{s=1}^t f_s}_2} \to L$ as $t \to \infty$.
In the experiments we picked $L \in \{0, 0.1\}$.

The results are shown in \cref{res:stoch}.
On the left-hand side we plotted the regret against the logarithm of the number of rounds, while on the right-hand side
we plotted the regret against the square root of the number of rounds, together with the standard deviation of the results
over the $100$ independent runs.
As can be seen from the figures,
when $L=0.1$, the growth-rate of the regret of FTL is indeed logarithmic, while when $L=0$, the growth-rate is
$\Theta(\sqrt{n})$. In particular, when $L=0.1$, FTL enjoys a major advantage compared to FTRL,
while for $L=0$, FTL and FTRL perform essentially the same (in this special case, the regret of FTL will indeed 
be $O(\sqrt{n})$ as $w_t$ will stay bounded but $\norm{\Theta_t} = O(1/\sqrt{t})$).
As expected, AB(FTL,FTRL), gets the better of the two regrets with little to no extra penalty.
\todoa{Wouldn't it be enough to provide one picture for both values of $L$ using the relevant scale? I don;t know which one is better.}

\begin{figure}[th]
	\centering
	\includegraphics[width=0.8\textwidth]{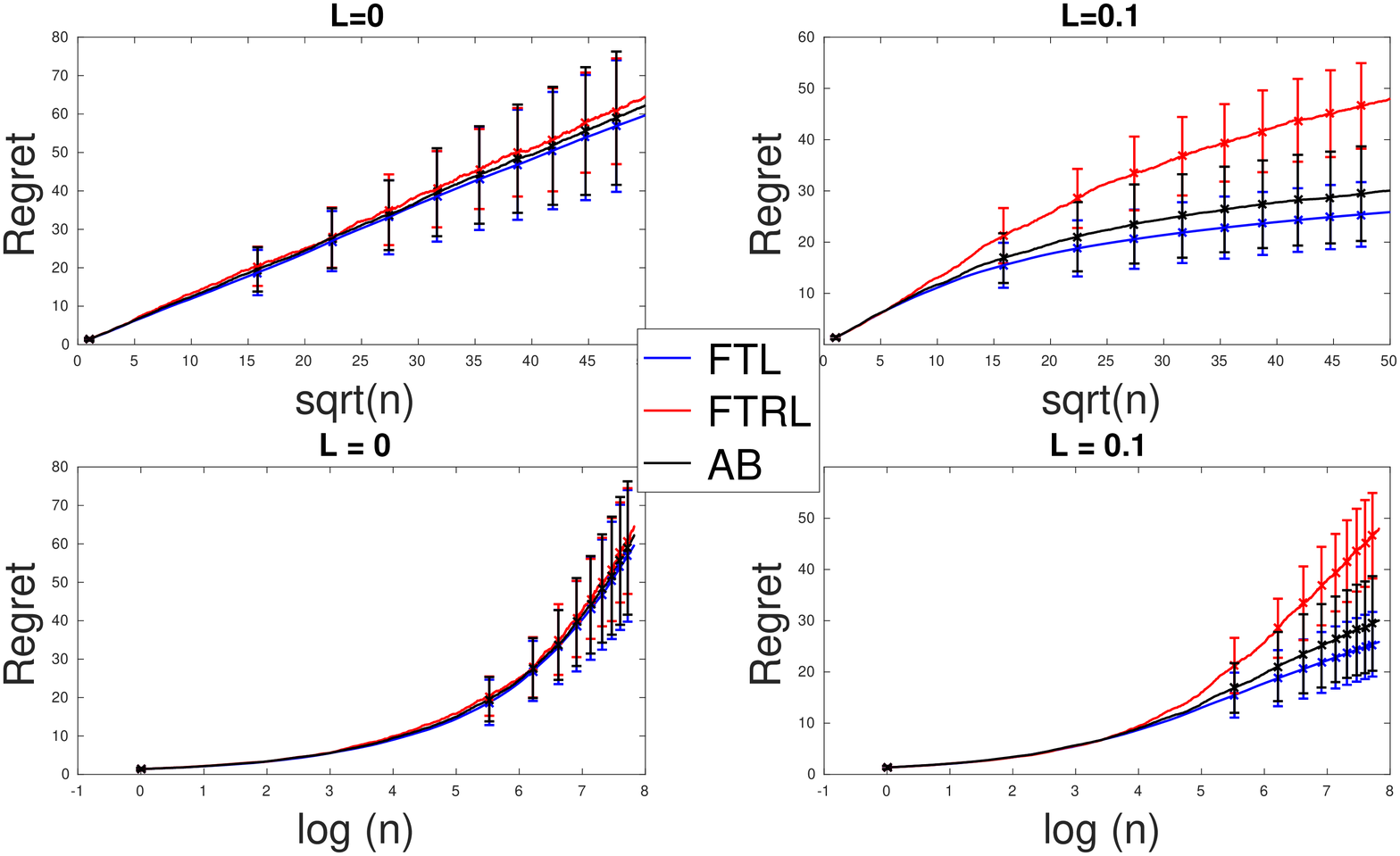}
	\caption{Regret of FTL, FTRL and AB(FTL,FTRL) against time for stochastic data. \label{res:stoch}}
\end{figure}

\paragraph{``Half-adversarial'' data}
The half-adversarial data used in this experiment is the optimal solution for the adversary 
in the \emph{linear game} when $\cW$ is the unit ball \citep{abernethy2008optimal}. 
This data is generated as follows:
The sequence $\hat{f}_t$ for $t = 1, \ldots, n$ is generated randomly
in the $(d-1)$-dimensional subspace $S = \text{span}\{e_2, \ldots, e_d\}$ (here $e_i$ is the $i$th unit vector in $\R^d$) as follows:
$\hat{f}_1$ is drawn from the uniform distribution on the unit sphere of $S$ (actually $\bS_{d-2}$. 
For $t = 2, \ldots, n$, $\hat{f}_t$ is drawn from the uniform distribution on the unit sphere
of the intersection of $S$ and the hyperplane perpendicular to $\sum_{i=1}^{t-1} \hat{f}_i$ and going through the origin.
Then, $f_t = Le_1 + \sqrt{1-L^2} \hat{f}_t$ for some $L \ge 0$.

The results are reported in \cref{res:adver}.
When $L=0$,  the regret of both FTL and FTRL grows as $O(\sqrt{n})$. 
When $L=0.1$, FTL achieves $O(\log n)$ regret,
while the regret of FTRL appears to be $O(\sqrt{n})$. 
AB(FTL,FTRL) closely matches the regret of FTL.

\begin{figure}[th]
	\centering
	\includegraphics[width=0.8\textwidth]{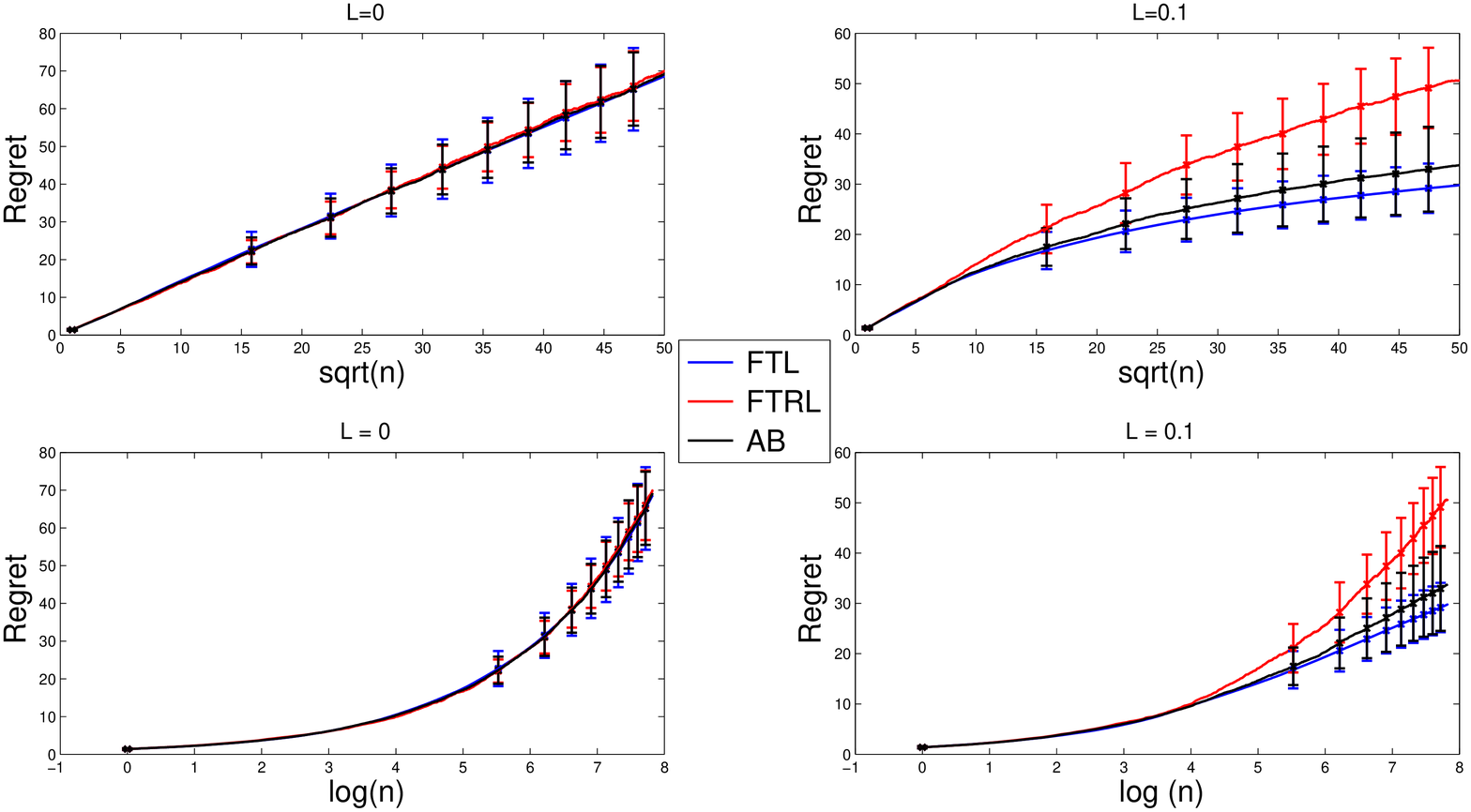}
	\caption{Experimental results for ``half-adversarial'' data. \label{res:adver}}
\end{figure}

\paragraph{Worst-case data}
We also tested the algorithms on data where FTL is known to suffer linear regret, mainly to see how well AB(FTL,FTRL) is able to deal with this setting.
In this case, we set $f_{t,i}=0$ for all $t$ and $i\ge 2$, while 
for the first coordinate, $f_{1,1} = 0.9$, and $f_{t,1} = 2(t \mod 2) - 1$ for $t \ge 2$.

The results are reported in \cref{res:worst_case}. It can be seen that the regret of FTL is linear (as one can easily verify theoretically), and 
AB(FTL,FTRL) succeeds to adapt to FTRL, and they both achieve a much smaller $O(\sqrt{n})$ regret. \todoa{The scale of the axes should be sqrt-sqrt and sqrt-lin to show the required dependencies.}

\begin{figure}[th]
	\centering
	\includegraphics[width=0.8\textwidth]{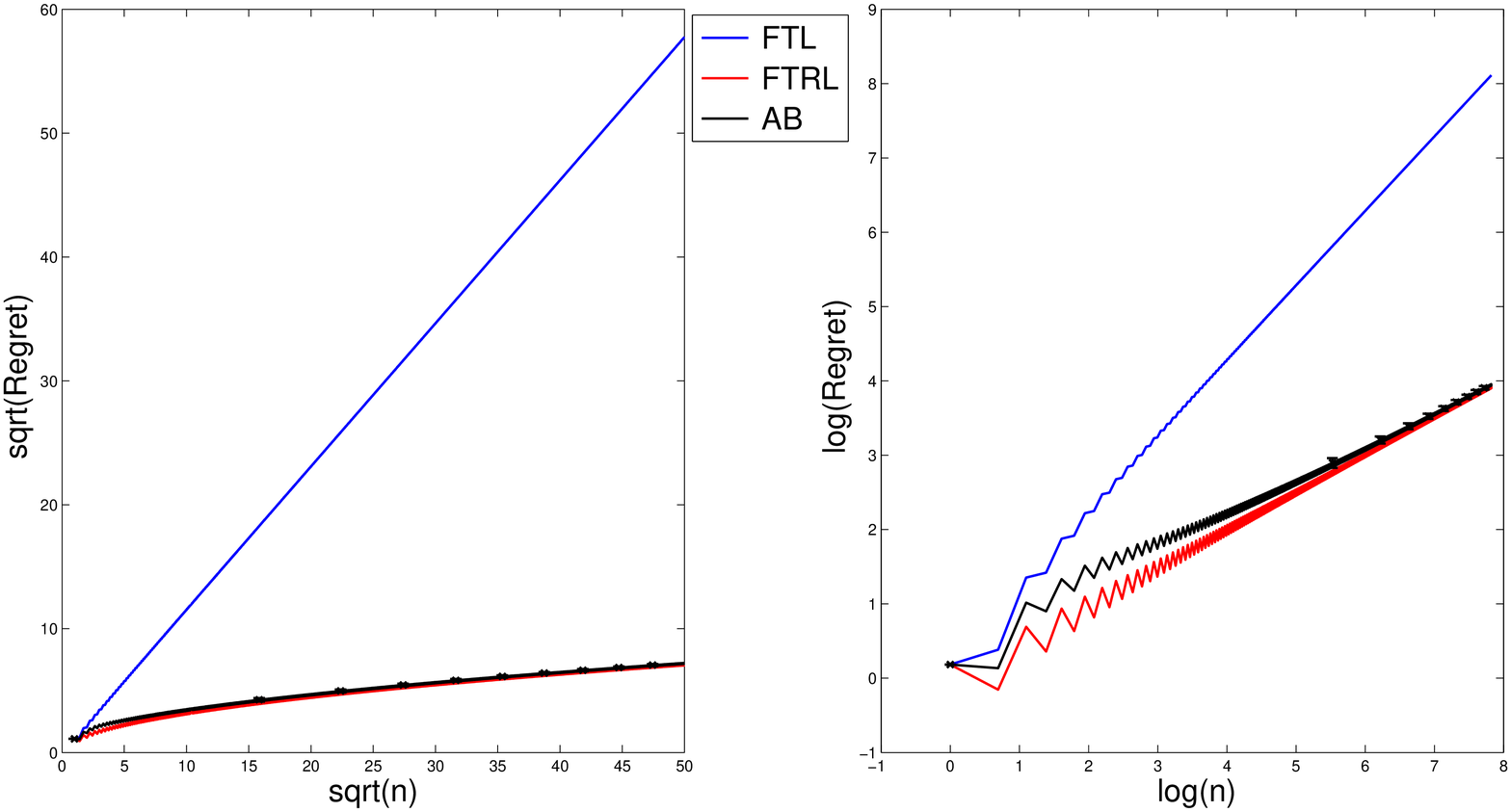}	
	\caption{Experimental results for worst-case data. \label{res:worst_case}}
\end{figure}

\paragraph{The unit ball}
We close this section by comparing the performance of our adaptive algorithms on the unit ball, namely, FTL, FTSL, FTLR, and AB(FTL,FTRL). All these algorithms are parametrized as above. The problem setup is similar to the stochastic data setting and the worst-case data setting. Again, we consider a 4-dimensional setting, that is, $\cW$ is the unit ball in $\R^4$ centered at the origin.
The worst-case data is generated exactly as above, while the generation process of the stochastic data is slightly modified to increase the difference between FTLR and FTL: we sample the i.i.d. vectors $\hat{f}_t$ from a zero-mean normal distribution with independent components whose variance is $1/16$, and let $\tilde{f_t}=\hat{f}_t$ if $\|\hat{f}_t\|_2 \le 1$ and $\tilde{f}_t = \hat{f}_t/\norm{\hat{f}_t}_2$ when $\norm{\hat{f}_t}_2>1$ (i.e., we only normalize if $\hat{f}_t$ falls outside of the unit ball).
The reason of this modification is to encourage the occurrence of the event $\|F_{t-1}\|_2 < \sqrt{t-1}$. Recall that when $\|F_{t-1}\|_2 \ge \sqrt{t-1}$, the prediction of FTRL matches that of FTL, so we are trying to create some data where their behavior is actually different. As a result, we will be able to observe that the predictions of FTL and FTRL are different in the early rounds. Finally, as before, we let $f_t=\tilde{f}_t + L e_1$, and set the time horizon to $n=20,000$.

The results of the simulation of the stochastic data setting are shown in Figure~\ref{res:Stoc_unitBall}. In the case of $L=0.1$, FTRL suffers more regret at the beginning for some rounds, but then succeeds to match the performance of FTL.
The results of the simulation of the worst-case data setting are shown in Figure~\ref{res:WorstCase_unitBall}, where FTSL has similar performance as FTRL.
\begin{figure}[h]
	\centering
	\includegraphics[width=0.8\textwidth]{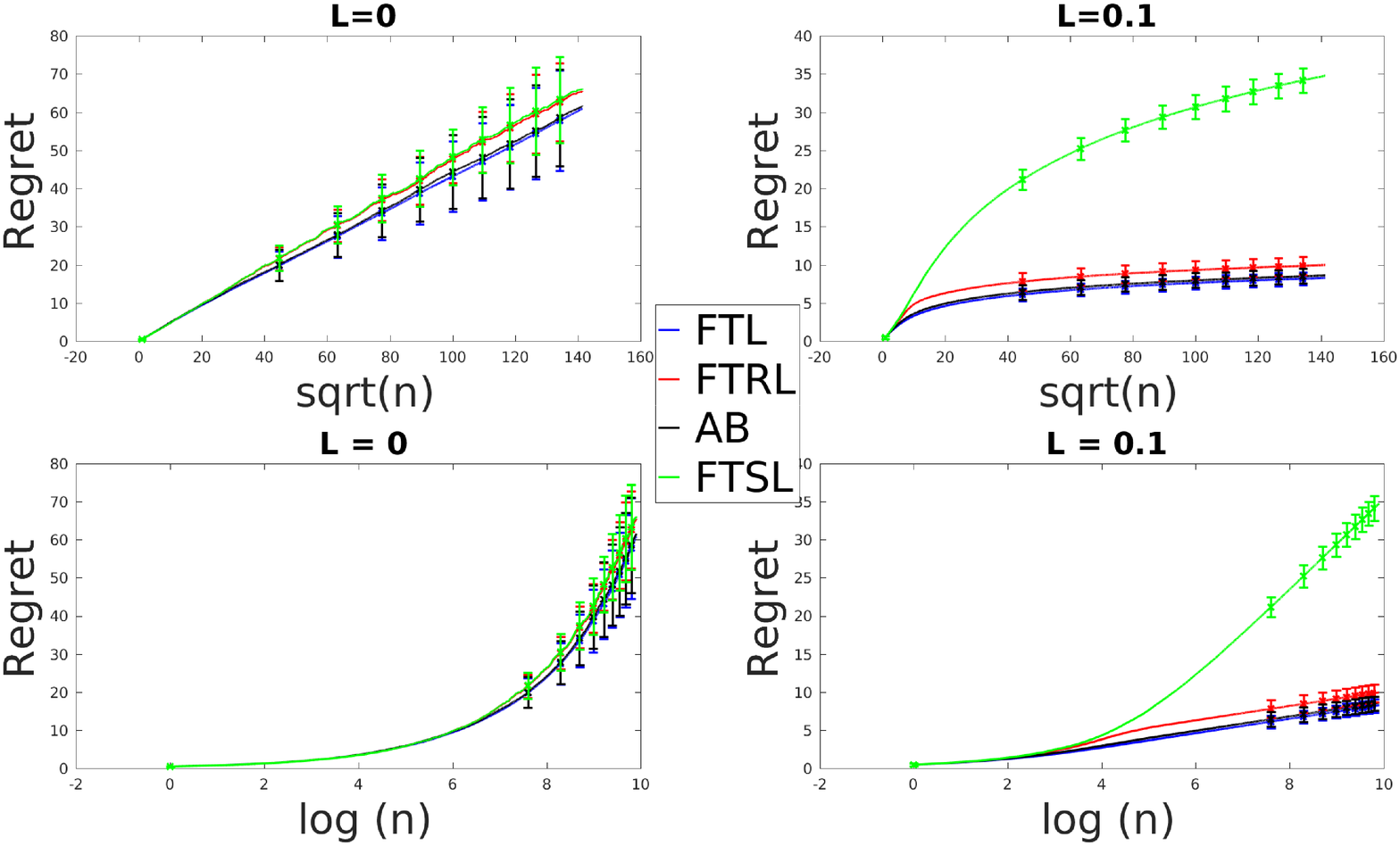}	
	\caption{Experimental results for stochastic data when $\cW$ is the unit ball. \label{res:Stoc_unitBall}}
\end{figure}

\begin{figure}[h]
	\centering
	\includegraphics[width=0.8\textwidth]{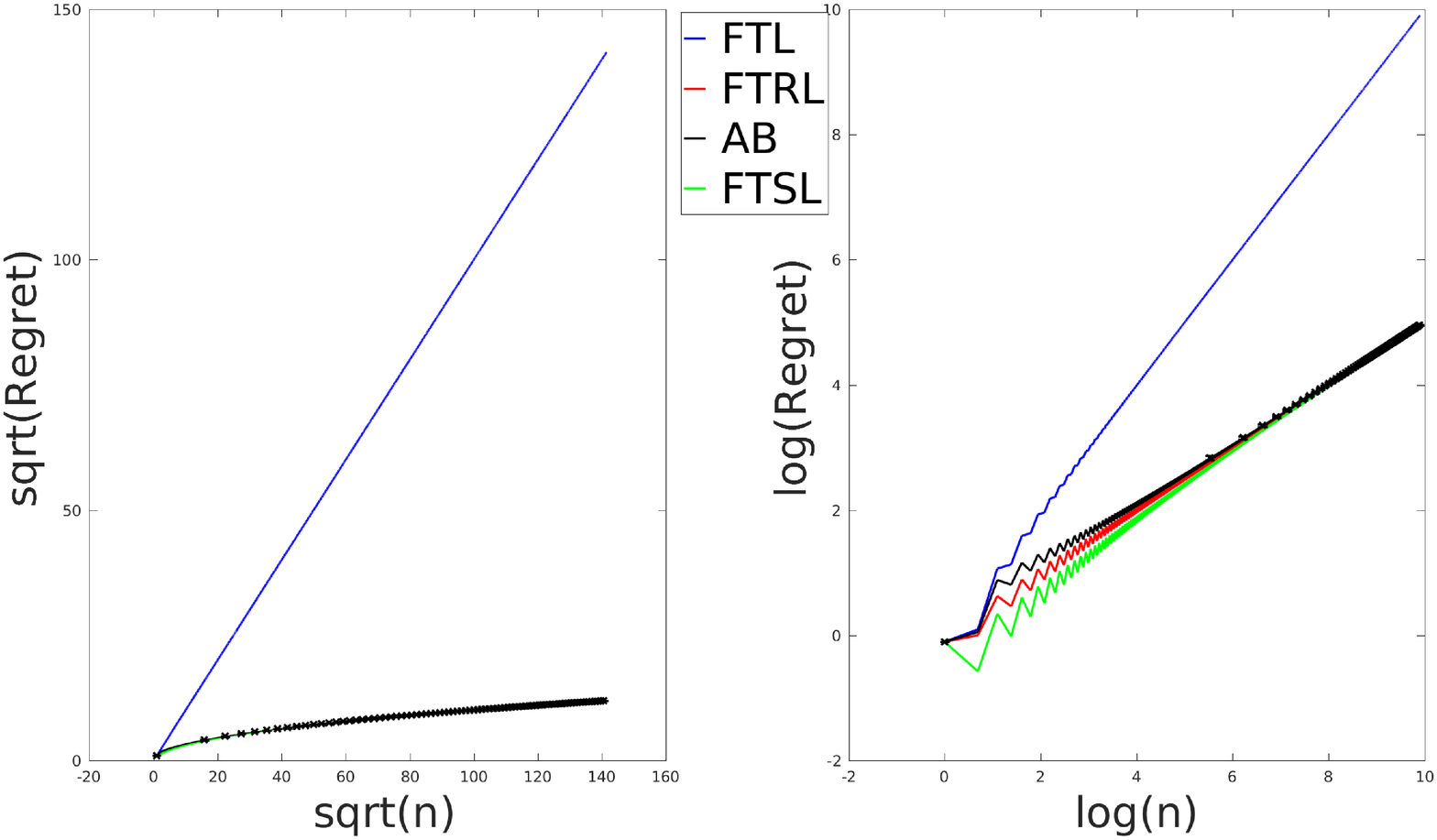}	
	\caption{Experimental results for worst-case data when $\cW$ is the unit ball. \label{res:WorstCase_unitBall}}
\end{figure}

\section{Conclusion}
FTL is a simple method that is known to perform well in many settings, while 
existing worst-case results fail to explain its good performance.
While taking a thorough look at why and when FTL can be expected to achieve small regret, we discovered that the curvature of the boundary of the constraint and having average loss vectors bounded away from zero help keep the regret of FTL small. These conditions are significantly different from previous conditions on the curvature of the loss functions which have been considered extensively in the literature.
It would be interesting to further investigate this phenomenon for other algorithms or in other learning settings.

\appendix
\section{Appendix: Technical results}

\subsection{Strongly convex sets and principal curvatures}
Recall that a convex set $\cW\subset \R^d$ is $\lambda$-strongly convex if for any $x,y\in \cW$, $\gamma\in [0,1]$, $\cW$ contains the ball of center $\gamma x + (1-\gamma) y$ that has a radius of $\gamma(1-\gamma) \frac{\lambda}{2} \norm{x-y}^2$.
That is, for any $z\in \R^d$ with $\norm{z}=1$, $\gamma x + (1-\gamma) y + \gamma(1-\gamma) \frac{\lambda}{2} \norm{x-y}^2 z\in \cW$.
Let $B_r(x)=\set{y \in \R^d}{ \|x-y\|_2 \le r}$ denote the Euclidean ball of radius $r$ centered at $x$.

\begin{prop}
\label{strongconvex}
Let $\cW \subset \R^d$ be a $C^2$ convex body with support function $\varphi$, 
and let $\lambda$ be an arbitrary positive number.
Then the following statements are equivalent:
\begin{enumerate}[(i)]
\item \label{sc:l1} The smallest principal curvature of $\cW$ is at least $\lambda$.
\item \label{sc:l2} $\cW= {\large \cap}_{\theta \in \bS^{d-1} }{B_{1/\lambda} (w_\theta- \theta /\lambda)}$ where $w_\theta \in \partial \varphi(\theta) \subset \bd(\cW)$.
\item \label{sc:l3} $\cW$ is $\lambda$-strongly convex.
\end{enumerate}
\end{prop}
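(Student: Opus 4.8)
I would prove the cycle $(i)\Rightarrow(ii)\Rightarrow(iii)\Rightarrow(i)$; throughout write $R=1/\lambda$ and $c_\theta=w_\theta-R\theta$ for $\theta\in\bS^{d-1}$.

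\emph{Step 1: $(i)\Rightarrow(ii)$ — the crux.} One inclusion is essentially free: the ball $B_R(c_\theta)$ has $\theta$ as its outer normal at its boundary point $w_\theta$ and $\langle w_\theta,\theta\rangle=\varphi(\theta)$, so $B_R(c_\theta)$ is contained in the supporting half-space $\{x:\langle x,\theta\rangle\le\varphi(\theta)\}$; intersecting over $\theta$ and using $\cW=\bigcap_\theta\{x:\langle x,\theta\rangle\le\varphi(\theta)\}$ gives $\bigcap_\theta B_R(c_\theta)\subseteq\cW$. The real content is $\cW\subseteq B_R(c_\theta)$ for each fixed $\theta$, which I would establish on the support-function side: $\cW\subseteq B_R(c_\theta)$ iff $\varphi(\nu)\le\langle c_\theta,\nu\rangle+R$ for all $\nu\in\bS^{d-1}$. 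Since the principal curvatures are at least $\lambda>0$, $\varphi$ is $C^2$ on $\R^d\setminus\{0\}$, and at each $\nu\in\bS^{d-1}$ the Hessian $\nabla^2\varphi(\nu)$ kills $\nu$ and has its other eigenvalues — the principal radii of curvature at $\nabla\varphi(\nu)$ — bounded above by $R$, so $\langle\nabla^2\varphi(\nu)v,v\rangle\le R\norm{v}^2$ for every $v$ (see \citep[Sect.~2.5]{Sch14:ConvexBodies}). Put $\psi(\nu)=\langle c_\theta,\nu\rangle+R-\varphi(\nu)$ (extended to $\R^d\setminus\{0\}$), so $\psi(\theta)=0$. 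Restrict to a great circle $\nu(t)=\cos t\,\theta+\sin t\,\eta$ with $\eta\perp\theta$, $\norm{\eta}=1$, and set $g(t)=\psi(\nu(t))$. Then $g(0)=0$; using $\nabla\varphi(\theta)=w_\theta$ one gets $g'(0)=\langle c_\theta-w_\theta,\eta\rangle=\langle-R\theta,\eta\rangle=0$; and using $\nu''(t)=-\nu(t)$, Euler's relation $\langle\nabla\varphi(\nu),\nu\rangle=\varphi(\nu)$, and $\langle\nabla^2\varphi(\nu(t))\nu'(t),\nu'(t)\rangle\le R$ (valid since $\nu'(t)\perp\nu(t)$), one obtains $g''(t)\ge-g(t)$. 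The elementary comparison — $\phi(t):=g'(t)\sin t-g(t)\cos t$ has $\phi(0)=0$ and $\phi'(t)=(g''(t)+g(t))\sin t\ge0$ on $[0,\pi]$, hence $\tfrac{d}{dt}(g(t)/\sin t)=\phi(t)/\sin^2t\ge0$ and $g(t)/\sin t\to g'(0)=0$ as $t\to0^+$ — forces $g\ge0$ on $[0,\pi]$. Every $\nu\in\bS^{d-1}$ lies on such a great circle within arc-distance $\pi$ of $\theta$, so $\psi\ge0$ on $\bS^{d-1}$, i.e.\ $\cW\subseteq B_R(c_\theta)$.

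\emph{Step 2: $(ii)\Rightarrow(iii)$.} A single closed ball $B_R(c)$ is $(1/R)$-strongly convex: for $x,y\in B_R(c)$ and $\gamma\in[0,1]$, the parallelogram identity gives $\norm{\gamma x+(1-\gamma)y-c}^2=\gamma\norm{x-c}^2+(1-\gamma)\norm{y-c}^2-\gamma(1-\gamma)\norm{x-y}^2\le R^2-\gamma(1-\gamma)\norm{x-y}^2$, and then $\sqrt{1-u}\le1-u/2$ (with $u=\gamma(1-\gamma)\norm{x-y}^2/R^2\le1$) shows the ball of radius $\gamma(1-\gamma)\tfrac{1}{2R}\norm{x-y}^2$ about $\gamma x+(1-\gamma)y$ lies in $B_R(c)$. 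Moreover an arbitrary intersection of $\lambda$-strongly convex sets is $\lambda$-strongly convex, since for fixed $x,y$ the strong-convexity ball depends only on $\norm{x-y}$ and must be contained in each member of the family. By (ii), $\cW$ is an intersection of balls of radius $R=1/\lambda$, hence $\lambda$-strongly convex.

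\emph{Step 3: $(iii)\Rightarrow(i)$.} Fix $w\in\bd(\cW)$ with outer normal $\theta=u_\cW(w)$. For any $x\in\cW$ and $\gamma\in(0,1]$, the ball of radius $\gamma(1-\gamma)\tfrac\lambda2\norm{x-w}^2$ about $\gamma x+(1-\gamma)w$ lies in $\cW$, hence in the supporting half-space at $w$; evaluating at the point of this ball furthest in direction $\theta$ and letting $\gamma\downarrow0$ yields $\tfrac\lambda2\norm{x-w}^2\le\langle w-x,\theta\rangle$ for all $x\in\cW$. Now run $x=x(s)$ along $\bd(\cW)$ in a principal direction $v$ at $w$, using the local parametrization $s\mapsto w+sv-f(s)\theta$ with $f(0)=f'(0)=0$ and $f''(0)=\kappa$ the corresponding principal curvature: then $\norm{x(s)-w}^2=s^2+o(s^2)$ and $\langle w-x(s),\theta\rangle=f(s)=\tfrac\kappa2 s^2+o(s^2)$, so the inequality reads $\tfrac\lambda2 s^2+o(s^2)\le\tfrac\kappa2 s^2+o(s^2)$; dividing by $s^2/2$ and letting $s\to0$ gives $\kappa\ge\lambda$. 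As $w$ and the principal direction were arbitrary, the smallest principal curvature of $\bd(\cW)$ is at least $\lambda$, closing the cycle.

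\emph{Main obstacle.} Steps 2 and 3 are routine; the difficulty is entirely in $(i)\Rightarrow(ii)$. A direct geometric attempt — e.g.\ take the point $p$ of $\cW$ farthest from $c_\theta$ and compare curvatures at $p$ — does not work, because a \emph{lower} bound on curvature is compatible with large-scale containment in the "wrong" direction and yields no contradiction. The resolution is to move to the support function, where the curvature hypothesis becomes the clean bound $\langle\nabla^2\varphi(\nu)v,v\rangle\le R\norm{v}^2$ and the global inclusion collapses to the one-dimensional comparison "$g''+g\ge0$, $g(0)=g'(0)=0\ \Rightarrow\ g\ge0$ on $[0,\pi]$". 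The only imported fact is the standard $C^2$ dictionary between support functions and principal radii of curvature.
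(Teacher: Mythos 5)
Your proof is correct and follows the same cyclic decomposition $(i)\Rightarrow(ii)\Rightarrow(iii)\Rightarrow(i)$ as the paper, but the key implication $(i)\Rightarrow(ii)$ is handled very differently. The paper outsources the hard work to Schneider's Theorem~3.2.9 (the ``rolling'' or summand theorem): a lower bound $\lambda$ on all principal curvatures makes $\cW$ a Minkowski summand of the ball $B_{1/\lambda}(0)$, i.e.\ $\cW+\cM=B$ for some convex body $\cM$, and the ball-intersection characterization then falls out by choosing, for each direction $\theta$, the translate of $B$ by the extreme point of $\cM$ in direction $\theta$. You instead give a self-contained analytic proof: translating $\cW\subseteq B_R(c_\theta)$ into the support-function inequality $\varphi(\nu)\le\langle c_\theta,\nu\rangle+R$, using the standard $C^2$ dictionary that the Hessian of $\varphi$ on the sphere has eigenvalues equal to the principal radii of curvature (hence bounded by $R$), restricting to great circles to reduce to a one-dimensional ODE, and closing with a clean Sturm-type comparison $g''+g\ge0$, $g(0)=g'(0)=0\Rightarrow g\ge0$ on $[0,\pi]$. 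Your argument is more elementary in the sense that it does not rely on the summand theorem (a non-trivial structural result), at the cost of invoking the precise spectral description of $\nabla^2\varphi$; it also gives a sharper mental picture of why the curvature bound controls the radius of a circumscribed ball in every normal direction. Your steps $(ii)\Rightarrow(iii)$ and $(iii)\Rightarrow(i)$ are minor streamlinings of the paper's: for $(ii)\Rightarrow(iii)$ you compute strong convexity of a single ball directly via the parallelogram identity and observe intersections preserve it, avoiding the paper's detour through closest boundary points; for $(iii)\Rightarrow(i)$ you first pass to the pointwise characterization $\frac\lambda2\|x-w\|^2\le\langle w-x,u_\cW(w)\rangle$ by letting $\gamma\downarrow0$ before Taylor-expanding along a principal direction, whereas the paper applies strong convexity symmetrically to $w(s)$ and $w(-s)$ with $\gamma=\tfrac12$; both are equivalent and correct.
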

Condition \eqref{sc:l2}, which is actually the definition of \citet{Pol96} for strongly convex sets, means that $\cW$ can be obtained as the intersection of closed balls of radius $1/\lambda$, such that there is one ball for every boundary point $w$ and  tangent hyperplane $P$ where the ball touches $P$ in $w$. Note that a ball with radius $1/\lambda$ satisfies all conditions: \eqref{sc:l1} and \eqref{sc:l2} by definition, while \eqref{sc:l3} holds, e.g., by Example~13 of \citet{JourneeNRS10}.

\begin{proof}
We show that \eqref{sc:l1} implies \eqref{sc:l2}, \eqref{sc:l2} implies \eqref{sc:l3}, and \eqref{sc:l3} implies \eqref{sc:l1}. 

We start with showing that \eqref{sc:l1} implies \eqref{sc:l2}.
First note that all principal curvatures of the $d$-dimensional ball $B=B_{1/\lambda}(0)$ with radius $1/\lambda$ (centered at the origin) are $\lambda$. Therefore, \eqref{sc:l1} and Theorem~3.2.9 of \citet{Sch14:ConvexBodies} implies that there is a convex body $\cM$ such that $\cW+\cM=B$, where for two sets, $S_1, S_2 \subset \R^d$, $S_1+S_2$ is defined as $\set{s_1+s_2}{s_1 \in S_1, s_2\in S_2}$. For any $\theta \in \bS^{d-1}$, let $m_\theta \in \argmax_{m \in \cM} \ip{m,\theta}$. Then clearly $w_\theta+m_\theta$ maximizes $\ip{b,\theta}$ for $b \in \cW+\cM$. Therefore, $\cW + m_\theta$ is a subset of $B$ and touches it at $w_\theta+m_\theta$, or equivalently $\cW \subset B-m_\theta$ and they touch each other, and a tangent hyperplane with normal vector $\theta$, in $w_\theta$. This proves that \eqref{sc:l1} implies \eqref{sc:l2}. 

Next we prove that  \eqref{sc:l2} implies \eqref{sc:l3}. Assuming \eqref{sc:l2} holds, let $w \in \cW$ be any point in the interior of $\cW$, and let $p \in \bd(\cW)$ be the closest boundary point to $w$, and recall that $T_p\cW$ is the tangent space of $\cW$ at $p$. By construction, $B_{\|w-p\|_2}(w)$ touches the boundary of $\cW$ at $p$ (in the sense that they do not intersect, but they can have multiple common points), and so $w-p$ is orthogonal to $T_p\cW$. Therefore,  $B_{\|w-p\|_2}(w)$ also touches the boundary of the ball $B=B_{1/\lambda}(p+\frac{w-p}{\lambda \|w-p\|_2})$, which contains $\cW$ by assumption \eqref{sc:l2}. Now consider any two points $x,y \in \cW$ and $\gamma \in [0,1]$ such that $w=\gamma x + (1-\gamma) y$. Then the ball with radius $\lambda \gamma (1-\gamma) \|x-y\|_2^2/2$ centered at $w$ is contained in $B$, since $B$ is $\lambda$-strongly convex. But then its radius is at most $\|p-w\|_2$, and so it is also contained in $\cW$. This shows that $\cW$ is $\lambda$-strongly convex, thus  \eqref{sc:l3} holds.

To finish the proof of the proposition, assume \eqref{sc:l3}. To prove that \eqref{sc:l1} holds, we have to show, that for any point $p$ on $\bd(\cW)$ and for any unit vector $v \in T_p\cW$, the curvature of the boundary along $v$ is at least $\lambda$.
Let $P$ be the hyperplane spanned by $v$ and the outer normal vector $u$ of $\cW$ at point $p$, and consider the planar curve $\gamma$ defined by $\bd(\cW) \cap P$.
Using $v$ as the axis of a local coordinate system, a point $w(s)$ on the curve $\gamma$ in the neighborhood of $p$ can be expressed as $w(s) = p + sv - f(s)u$ for an appropriate function $f$, as illustrated in \cref{fig:stronglyconvexset}.
\begin{figure}[t]
\centering
	\includegraphics[width=0.7\textwidth]{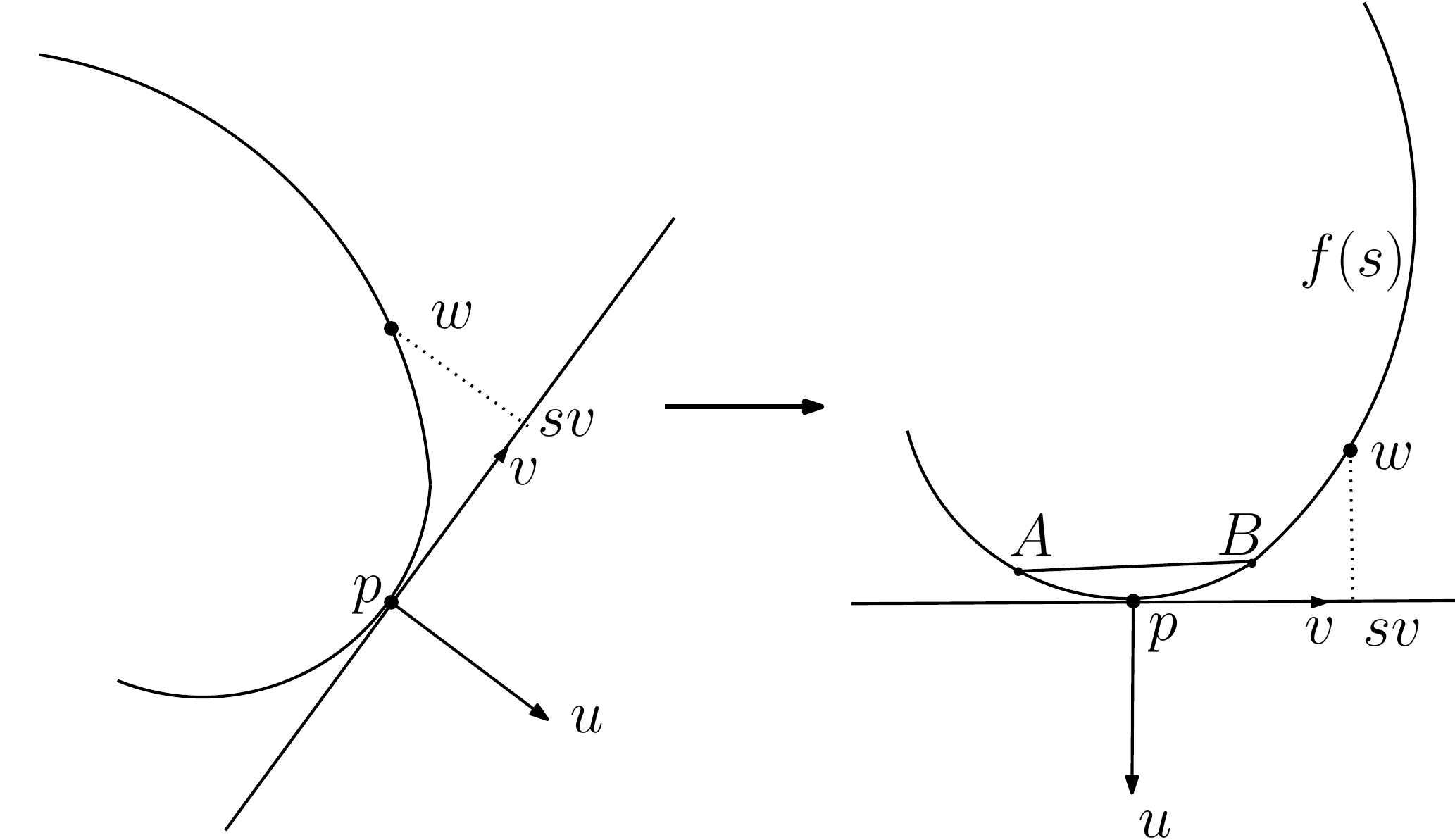}
	\caption{The local coordinate system at $p$. \label{fig:stronglyconvexset}}
\end{figure} 

Note that $f'(0)=0$, and by Proposition 2.1 of \citet{pressley2010elementary}, the curvature of $\gamma$ at $p$ can be obtained as 
\[
\frac{f''(s)}{\sqrt{1+f'(s)^2}^3}\Bigg\vert_{s=0} = f''(0)~.
\]
Now since $w(s),w(-s) \in \cW$ for a sufficiently small $s$, the strong convexity of $\cW$ applied to $w(s)$ and $w(-s)$ with $\gamma=1/2$
implies that $q=\frac{w(s)+w(-s)}{2}+\frac{\lambda}{8} \|w(s)-w(-s)\|_2^2 u \in \cW$. Substituting the definition of $w(s)$ and $w(-s)$, we get
\[
q=p - u \left[\frac{f(s)+f(-s)}{2} - \frac{\lambda}{8} \Bigl(4s^2 + (f(s)-f(-s))^2\Bigr)\right].
\]
Therefore, $q \in \cW$ implies
$f(s)+f(-s) \ge \lambda s^2$, and so
\[
	f''(0) = \lim_{s \ra 0} \frac{\frac{f(s) - f(0)}{s} - \frac{f(0) - f(-s)}{s}}{s} = \frac{f(s)+ f(-s)}{s^2} \ge \lambda.
\]
Thus \eqref{sc:l1} holds, finishing the proof of the proposition.
\end{proof}

\subsection{Proof of \cref{prop:derivativePhi}}
Under the extra condition that $\cW$ is compact
the result follows from Danskin's theorem (e.g., Proposition B.25 of \citealt{bertsekas99nonlinear}).
However, compactness is not required. \todoc{In fact, I don't get why Bertsekas needs it for this part of the statement. He may need it elsewhere.}
For completeness, we provide a short, direct proof. 
\todoc{Probably delegate to the appendix.}
We need to show that 
$\cZ = \partial \varphi(\Theta)$ where recall that
\begin{align*}
\partial \varphi(\Theta)= \set{u\in \R^d}{ \varphi(\Theta) + \inpro{u}{\cdot-\Theta} \le \varphi(\cdot)}
= \set{u\in  \R^d}{ \varphi(\Theta)  \le \inpro{u}{\Theta} + \varphi(\cdot) - \inpro{u}{\cdot} }\,.
\end{align*}
Since $\cZ \subset \cW$, 
if $w\in \cZ$, $\varphi(\Theta') \ge \ip{w,\Theta'}$ for any $\Theta'$ by the definition of $\varphi$.
Hence, $\varphi(\Theta) = \ip{w,\Theta} \le \ip{w,\Theta} + \varphi(\Theta')-\ip{w,\Theta'}$ for any $\Theta'$, implying that $w\in \partial \varphi(\Theta)$.

On the other hand, assume $w \in \partial \varphi(\Theta)$. Then $\varphi(\Theta) \le \ip{w,\Theta}$
since $\varphi(0) = \ip{w,0} = 0$. 
Since $\cW$ is closed, $\cZ$ is also closed. Therefore, if $w \not\in\cZ$,
the strict separation theorem (applied to $\{w\}$, a convex compact set,
and $\cZ$, a convex closed set) implies that
 there exists $\rho\in \R^d$ such that $\ip{z,\rho} < \ip{w,\rho}$ for all $z\in \cZ$.
 Let $\Theta' = \Theta +  \rho$.
 Then, $\varphi(\Theta') = \max_{u\in \cW} \ip{u,\Theta} +  \ip{u,\rho}
 < \varphi(\Theta) +  \ip{w,\Theta'-\Theta} \le \ip{w,\Theta'} \le \varphi(\Theta')$, a contradiction.
Hence, $w\in \cZ$.

\subsection{Technical lemmas for the lower bound \cref{thm:lowerbound}}

\begin{lemma}[Concentration of $\hP_{t}$] For any $u>0$,
	\label{lem:concenPhat}
	\[
	\Probc{|\hat P_{t}-P| > \frac{K}{2K+t}|1-2P| + \frac{t}{2K+t}u}{P} \le 2\exp(-tu^2)~.
	\]
\end{lemma}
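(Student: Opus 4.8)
The plan is to reduce the claimed bound to a standard Hoeffding concentration inequality for the i.i.d.\ Bernoulli variables $X_1,\dots,X_t$ (conditionally on $P$), by separating the deterministic ``prior bias'' term from the random fluctuation term in $\hat P_t-P$.

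First I would recall that $\hat P_{t} = \frac{K+\sum_{i=1}^{t} X_i}{2K+t}$, so that, writing the numerator as $K(1-2P)+\sum_{i=1}^t (X_i-P) + (2K+t)P$ and subtracting $(2K+t)P$, one obtains the exact identity
\[
\hat P_t - P = \frac{K(1-2P) + \sum_{i=1}^t (X_i - P)}{2K+t}\,.
\]
Applying the triangle inequality gives $|\hat P_t - P| \le \frac{K}{2K+t}|1-2P| + \frac{1}{2K+t}\left|\sum_{i=1}^t (X_i-P)\right|$. Consequently, on the event $\{|\hat P_t - P| > \frac{K}{2K+t}|1-2P| + \frac{t}{2K+t}u\}$ we must have $\left|\sum_{i=1}^t (X_i - P)\right| > t u$, so it suffices to bound $\Probc{\left|\sum_{i=1}^t (X_i-P)\right| > tu}{P}$.

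Conditionally on $P$, the variables $X_i-P$ are independent, zero-mean, and take values in an interval of length $1$. Hoeffding's inequality then yields $\Probc{\left|\sum_{i=1}^t (X_i-P)\right| > tu}{P} \le 2\exp(-2t^2u^2/t) = 2\exp(-2tu^2) \le 2\exp(-tu^2)$, which completes the argument. There is essentially no obstacle here; the only mild point to be careful about is that all probabilities are conditional on $P$, so that the $X_i$ are genuinely i.i.d.\ Bernoulli$(P)$ and the bound holds for every fixed value of $P$, and that the crude final bound $2\exp(-tu^2)$ (rather than $2\exp(-2tu^2)$) is all that is needed downstream.
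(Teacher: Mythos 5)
Your proof is correct and essentially identical to the paper's: both decompose $\hat P_t - P = \frac{K(1-2P) + \sum_{i=1}^t (X_i-P)}{2K+t}$, peel off the deterministic bias term $\frac{K}{2K+t}|1-2P|$ (you via the triangle inequality, the paper via $\Prob{|A+b|>c}\le\Prob{|A|>c-|b|}$, which is the same fact), and then apply Hoeffding to $\sum_i (X_i-P)$ conditionally on $P$. You even obtain the slightly sharper constant $2e^{-2tu^2}$, which of course implies the stated $2e^{-tu^2}$.
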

\begin{proof}
	Recall that $\hat P_t = \frac{K+\sum_{i=1}^{t}X_i}{2K+t}$. Thus, 
	\begin{align} 
	\Probc{|\hat P_{t}-P| > u }{P} & = \Probc{\left| \frac{K+\sum_{i=1}^{t}X_i}{2K+t} -P\right| > \frac{K}{2K+t}|1-2P| + \frac{t}{2K+t}u}{P} \nonumber \\ 
	& =  \Probc{\left| \sum_{i=1}^{t}X_i - Pt + K(1-2P) \right| > K|1-2P|+ tu }{P} \nonumber \\ 
	& \le \Probc{\left| \sum_{i=1}^{t}X_i - Pt \right| > tu }{P}, \label{eq:hoeffding}
	\end{align}
	where the last inequality is due to $\Prob{|A+b|>c} \le \Prob{|A| > c-|b|}$. Note that conditioned on $P$, $X_1, \ldots, X_t$ are independent Bernoulli random variables with expectation $P$, thus \eqref{eq:hoeffding} holds by Hoeffding's inequality (see, e.g., \cite[Corollary~A.1]{CBLu06:book}).
\end{proof}


\begin{lemma}
	\label{lem:bayeserror}
	\[
	\Expc{(P-\hat{P}_t)^2}{P} = \frac{K^2(1-2P)^2}{(2K+t)^2} + \frac{tP(1-P)}{(2K+t)^2}.
	\]
\end{lemma}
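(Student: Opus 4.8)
The plan is a direct bias--variance computation, carried out conditionally on $P$. Recall that $\hP_t = \frac{K+\sum_{i=1}^t X_i}{2K+t}$, and set $S_t = \sum_{i=1}^t X_i$; conditionally on $P$, the random variable $S_t$ is a sum of $t$ independent Bernoulli$(P)$ variables, hence $\Expc{S_t}{P} = tP$ and $\mbox{Var}(S_t \mid P) = tP(1-P)$.

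First I would clear denominators to separate the deterministic ``prior pull'' of the Beta$(K,K)$ smoothing from the sampling fluctuation: $P - \hP_t = \frac{P(2K+t) - K - S_t}{2K+t} = \frac{K(2P-1) + (tP - S_t)}{2K+t}$, where $K(2P-1)$ is constant given $P$ while $tP - S_t$ is centered given $P$.

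Then I would square this expression and take $\Expc{\cdot}{P}$. The cross term is $2K(2P-1)\,\Expc{tP - S_t}{P} = 0$; the deterministic term contributes $K^2(2P-1)^2 = K^2(1-2P)^2$; and the stochastic term contributes $\Expc{(S_t - tP)^2}{P} = \mbox{Var}(S_t \mid P) = tP(1-P)$. Dividing through by $(2K+t)^2$ gives precisely $\Expc{(P-\hP_t)^2}{P} = \frac{K^2(1-2P)^2}{(2K+t)^2} + \frac{tP(1-P)}{(2K+t)^2}$. There is no genuine obstacle here; the only things to keep track of are the conditioning on $P$ throughout (so that $S_t$ is Binomial$(t,P)$) and the harmless identity $(2P-1)^2 = (1-2P)^2$.
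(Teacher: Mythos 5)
Your proof is correct and is essentially the same bias--variance decomposition used in the paper: both split $P-\hP_t$ into the deterministic prior-pull term $\frac{K(1-2P)}{2K+t}$ (up to sign) and the centered fluctuation $\frac{tP-\sum_i X_i}{2K+t}$, observe the cross term vanishes, and use that $\sum_i X_i$ is conditionally Binomial$(t,P)$. No meaningful difference.
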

\begin{proof}
	Recall that $\hat P_t = \frac{K+\sum_{i=1}^{t}X_i}{2K+t}$.Thus, 
	\begin{align*}
	\Expc{(P-\hat{P}_t)^2}{P} & = \Expc{\left(\frac{K(1-2P)}{2K+t} + \frac{\sum_{i=1}^{t}X_i- Pt}{2K+t}\right)^2}{P} \\
	& = \frac{K^2(1-2P)^2}{(2K+t)^2} + \frac{1}{(2K+t)^2}\Expc{ \left(\sum_{i=1}^{t}X_i - tP\right)^2}{P} \\
	& = \frac{K^2(1-2P)^2}{(2K+t)^2} + \frac{tP(1-P)}{(2K+t)^2},
	\end{align*}
	where the second equality is due to $\Expc{ \sum_{i=1}^{t}X_i - Pt}{P} =0$, and the last equality is due to that conditioned on $P$, $\sum_{i=1}^{t}X_i$ has a Binomial distribution with parameters $t$ and $P$.
\end{proof}
\begin{lemma} Under the assumptions of \cref{thm:lowerbound}, for any $0<P_1,P_2<1$,
	\label{lem:P2P1loss}
	\[
	\inner{w^{P_2} - w^{P_1}, f^{P_1}} \ge \frac{hL}{2} \frac{\left( \frac{2P_2 - 2P_1}{hL} \right)^2}{\sqrt{1+\left( \frac{1-2P_1}{hL}\right)^2 } \left(1+\left( \frac{1-2P_2}{hL}\right)^2 \right)}.
	\]
\end{lemma}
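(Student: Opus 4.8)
The plan is to compute the two minimizers $w^{P_1}$ and $w^{P_2}$ explicitly, substitute them into the inner product, and then reduce the claimed inequality to a one-line algebraic fact. Write $h=\lambda$ for the (minimal) principal curvature of the ellipsoid, as identified in \cref{ex:curvature}. Since the second coordinate of $f^p=(2p-1,-L)$ equals $-L\neq 0$, the vector $f^p$ is nonzero, and because $\cW$ is a strictly convex body the point $w^p=\argmin_{w\in\cW}\inner{w,f^p}$ is the unique boundary point of $\cW$ whose outward normal is parallel to $-f^p$. A Lagrange-multiplier computation on the constraint $x^2+y^2/\lambda^2=1$ yields
\[
w^p=\frac{1}{D_p}\bigl(1-2p,\;L\lambda^2\bigr),\qquad D_p:=\sqrt{(1-2p)^2+L^2\lambda^2},
\]
and in particular $\inner{w^p,f^p}=-D_p$. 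To match the form of the target bound it is convenient to set $a:=\tfrac{1-2P_1}{hL}$, $b:=\tfrac{1-2P_2}{hL}$, $s:=\sqrt{1+a^2}$ and $r:=\sqrt{1+b^2}$, so that $2P_1-1=-hLa$, $2P_2-1=-hLb$, $D_{P_1}=hLs$ and $D_{P_2}=hLr$.

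With these substitutions I would expand $\inner{w^{P_2}-w^{P_1},f^{P_1}}=\inner{w^{P_2},f^{P_1}}-\inner{w^{P_1},f^{P_1}}$, using $\inner{w^{P_1},f^{P_1}}=-D_{P_1}=-hLs$. A short calculation shows that the cross term is $\inner{w^{P_2},f^{P_1}}=-\tfrac{hL(ab+1)}{r}$, so that
\[
\inner{w^{P_2}-w^{P_1},f^{P_1}}=\frac{hL}{r}\bigl(sr-ab-1\bigr).
\]

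The key algebraic observation is the identity
\[
(sr)^2-(ab+1)^2=(1+a^2)(1+b^2)-(ab+1)^2=a^2+b^2-2ab=(a-b)^2 .
\]
Moreover $sr=\sqrt{(1+a^2)(1+b^2)}\ge 1+|ab|$, since $(1+a^2)(1+b^2)-(1+|ab|)^2=(|a|-|b|)^2\ge 0$; hence $sr-ab-1\ge 0$, the factor $sr+ab+1$ is positive, and $sr-ab-1=\tfrac{(a-b)^2}{sr+ab+1}$. Combining this with the previous display gives the exact value $\inner{w^{P_2}-w^{P_1},f^{P_1}}=\tfrac{hL(a-b)^2}{r\,(sr+ab+1)}$. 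Using $sr\ge 1+|ab|$ once more, now in the form $sr+ab+1\le 2sr$, I would conclude
\[
\inner{w^{P_2}-w^{P_1},f^{P_1}}\ \ge\ \frac{hL\,(a-b)^2}{2\,s\,r^2},
\]
which is precisely the claimed inequality once we unwind $a-b=\tfrac{2(P_2-P_1)}{hL}$, $s=\sqrt{1+(\tfrac{1-2P_1}{hL})^2}$ and $r^2=1+(\tfrac{1-2P_2}{hL})^2$. There is no genuine obstacle here beyond careful bookkeeping with the explicit formula for $w^p$; the only step that needs a moment of insight is spotting the factorization $(sr)^2-(ab+1)^2=(a-b)^2$ together with the positivity of $sr+ab+1$, since that is exactly what turns the exact expression for the inner product into the desired quadratic lower bound.
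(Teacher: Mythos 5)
Your proof is correct, and it takes a genuinely different route from the paper's. The paper parametrizes the boundary of the ellipse in polar form, writing $w^p=(\cos\varphi^p,\;h\sin\varphi^p)$ with $\cot\varphi^p=\tfrac{1-2p}{hL}$, and reduces the claim to trigonometric manipulations, ultimately using $1-\cos(\varphi_1-\varphi_2)\ge\tfrac12\sin^2(\varphi_1-\varphi_2)$ together with the cotangent subtraction identity. You instead work in Cartesian coordinates: you solve the constrained minimization by a Lagrange-multiplier calculation to get the closed form $w^p=\tfrac{1}{D_p}(1-2p,\;L\lambda^2)$, and you convert the lower bound into a one-line algebraic factorization $(sr)^2-(ab+1)^2=(a-b)^2$ combined with the AM--GM-type inequality $sr\ge 1+|ab|$. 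The two derivations are secretly equivalent (your $a$ and $b$ are exactly the paper's $\cot\varphi_1$ and $\cot\varphi_2$), and both in fact yield an \emph{exact} expression for the inner product before the final lower-bounding step — yours is $\tfrac{hL(a-b)^2}{r(sr+ab+1)}$, the paper's is $\tfrac{hL}{\sin\varphi_1}(1-\cos(\varphi_1-\varphi_2))$ — but your version avoids trigonometry entirely, and the factorization step makes the nonnegativity and the passage to the quadratic bound visibly elementary. The bookkeeping (sign of the Lagrange multiplier, the substitutions $D_{P_i}=hL\sqrt{1+(\cdot)^2}$, the final unwinding of $a,b,s,r$) all checks out.
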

\begin{proof}
	It is easy to see that for any $p$, $w^p$ is on the boundary of $\cW$, that is, $w^p = \argmin_{w\in\cW} \inner{ w, f^p } = (\cos (\varphi^p), h\sin (\varphi^p))$ for some $\varphi^p$. 
	Then $\inner{w^p,f^p}= (2p-1) \cos (\varphi^p) - Lh \sin (\varphi^p)$, and so taking the derivative it is easy to verify that $\tan(\varphi^p) = \frac{Lh}{1-2p}$ and $\sin(\varphi^p) = \frac{Lh}{\sqrt{(Lh)^2+(1-2p)^2}} >0$. 
	Thus, $1-2P_1 = \frac{Lh\cos (\varphi^{P_1})}{\sin (\varphi^{P_1})}$. To simplify notation, let $\varphi_1=\varphi^{P_1}$ and $\varphi_2=\varphi^{P_2}$. Then,
	\begin{align}
	\langle w^{P_2} - w^{P_1}, f^{P_1} \rangle & = \left\langle \left(
	\begin{array}{c}
	\cos \varphi_{2} - \cos \varphi_{1} \\
	h\left(\sin \varphi_{2} - \sin \varphi_{1} \right) 
	\end{array}
	\right),  \left( 
	\begin{array}{c}
	\frac{-hL\cos \varphi_{1}}{\sin \varphi_{1}} \\
	-L
	\end{array}
	\right) \right\rangle \nonumber \\ 
	& = -hL\left( \left( \cos (\varphi_{2}) - \cos (\varphi_{1} ) \right) \frac{\cos (\varphi_{1})}{\sin( \varphi_{1})} 
		 + \left(\sin (\varphi_{2}) - \sin (\varphi_{1} )\right)  \right)  \nonumber \\
	& = \frac{-hL}{\sin( \varphi_{1})} \left( \cos (\varphi_{2} )\cos (\varphi_{1} )- \cos^2(\varphi_{1} )+ \sin (\varphi_{1}) \sin (\varphi_{2}) - \sin^2(\varphi_{1}) \right)  \nonumber \\
	& = \frac{hL}{\sin (\varphi_{1})} \left( 1- \cos (\varphi_{2}) \cos (\varphi_{1})  - \sin (\varphi_{1}) \sin (\varphi_{2}) \right) \nonumber \\
	& = \frac{hL}{\sin (\varphi_{1})} \left( 1- \cos(\varphi_{1}-\varphi_{2}) \right) \nonumber \\
	& = \frac{hL}{\sin (\varphi_{1})} \left(\frac{1}{2}\left(\cos(\varphi_{1}-\varphi_{2})-1\right)^2 + \frac{1}{2} \sin^2 (\varphi_{1}-\varphi_{2})\right) \\
	& \ge  \frac{hL}{2 \sin (\varphi_{1})}  \sin^2 (\varphi_{1}-\varphi_{2}) \nonumber \\
	& = \frac{hL}{2} \sin (\varphi_{1}) \sin^2 \varphi_{2} \left(\cot (\varphi_{1} )- \cot (\varphi_{2})\right)^2~. 
	\end{align}
	The proof is finished by substituting $\cot (\varphi_i) = \frac{1-2P_i}{hL}$, $\sin(\varphi_1) = \frac{1}{\sqrt{1+\left(\frac{1-2P_1}{Lh}\right)^2}}$ and $\sin^2 (\varphi_2) =   \frac{1}{1+\left(\frac{1-2P_2}{Lh}\right)^2}$.
\end{proof}

\section*{Acknowledgements}
This work was supported in part by the Alberta Innovates Technology Futures through the Alberta Ingenuity Centre for Machine Learning and by NSERC.
During part of this work, T. Lattimore was with the Department of Computing Science, University of Alberta.

\setlength{\bibsep}{0.4\bibsep}
\bibliography{reference}
\bibliographystyle{plainnat}
\end{document}